\newcommand{\method}{\textsc{NeuKron}\xspace}
\newcommand{\kronfit}{{KronFit}\xspace}
\newcommand{\change}[1]{\textcolor{black}{#1}}
\newcommand{\red}[1]{\textcolor{black}{#1}}
\newcommand{\blue}[1]{\textcolor{black}{#1}}
\newcommand{\kijung}[1]{\textcolor{black}{#1}}
\newcommand{\fnorm}[1]{\lVert #1 \rVert_{F}}
\newcommand{\floor}[1]{\left\lfloor #1 \right\rfloor}
\newcommand{\ceil}[1]{\left\lceil #1 \right\rceil}
\newcommand{\kpower}[2]{#1^{\otimes#2}}
\newcommand{\model}{\mathbf{\Theta}}
\newcommand{\supplelink}{\cite{appendix}}
\newcommand{\atij}{\tilde{a}_{ij}}
\newcommand{\aij}{a_{ij}}
\newcommand{\bA}{\mathbf{A}}
\newcommand{\cX}{\mathcal{X}}
\newcommand{\bK}{\mathbf{K}}
\newcommand{\reA}{\mathbf{\tilde{A}}_\model}
\newcommand{\reX}{\mathbf{\tilde{\mathcal{X}}}_\model}
\newcommand{\rome}[1]{\uppercase\expandafter{\romannumeral #1\relax}}
\newcommand{\smallsection}[1]{\noindent\underline{\smash{\textbf{#1:}}}}
\newcommand{\mat}[1]{\mathbf{#1}}
\newcommand{\textt}[1]{\scalebox{1.0}[1.0]{\texttt{#1}}}
\newtheorem{theorem}{Theorem}
\newtheorem{lemma}{Lemma}
\newtheorem{problem}{Problem}
\newtheorem{example}{Example}
\begin{document}

\title{NeuKron: Constant-Size Lossy Compression of Sparse Reorderable Matrices and Tensors}

\settopmatter{authorsperrow=4}
\author{Taehyung Kwon}
\authornote{Both authors contributed equally to this research.}
\affiliation{%
  \institution{Kim Jaechul Graduate School of AI, KAIST}
  \city{Seoul}
  \country{South Korea}
}
\email{taehyung.kwon@kaist.ac.kr}

\author{Jihoon Ko}
\authornotemark[1]
\affiliation{%
  \institution{Kim Jaechul Graduate School of AI, KAIST}
  \city{Seoul}
  \country{South Korea}
}
\email{jihoonko@kaist.ac.kr}

\author{Jinhong Jung}
\affiliation{
  \institution{Dept. of CSE, Jeonbuk National University}
  \city{Jeonju}
  \country{South Korea}
}
\email{jinhongjung@jbnu.ac.kr}

\author{Kijung Shin}
\affiliation{%
  \institution{Kim Jaechul Graduate School of AI, KAIST}
  \city{Seoul}
  \country{South Korea}
}
\email{kijungs@kaist.ac.kr}


\begin{abstract}
Many real-world data are naturally represented as a sparse reorderable matrix, whose rows and columns can be arbitrarily ordered \blue{(e.g., the adjacency matrix of a bipartite graph)}.
Storing a sparse matrix in conventional ways requires an amount of space linear in the number of non-zeros, and lossy compression of sparse matrices (e.g., Truncated SVD) typically requires an amount of space linear in the number of rows and columns.
In this work, we propose \method for compressing a sparse reorderable matrix into a constant-size space.
\method generalizes Kronecker products using \red{a recurrent neural network} with a constant number of parameters.
\method updates the parameters so that a given matrix is approximated by the product and reorders the rows and columns of the matrix to facilitate the approximation.
The updates take time linear in the number of non-zeros in the input matrix, and the approximation of each entry can be retrieved in logarithmic time.
We also extend \method to compress sparse reorderable tensors \blue{(e.g. multi-layer graphs)}, which generalize matrices.
Through experiments on ten real-world datasets, we show that \method is
  \textbf{(a) Compact:} requiring up to five orders of magnitude less space than its best competitor with similar approximation errors,
  \textbf{(b) Accurate:} giving up to $10\times$ smaller approximation error than its best competitors with similar size outputs, and
  \textbf{(c) Scalable:} successfully compressing a matrix with over 230 million non-zero entries.
\end{abstract}

\begin{CCSXML}
<ccs2012>
<concept>
<concept_id>10002951.10002952.10002971.10003451.10002975</concept_id>
<concept_desc>Information systems~Data compression</concept_desc>
<concept_significance>500</concept_significance>
</concept>
<concept>
<concept_id>10002951.10003227.10003351</concept_id>
<concept_desc>Information systems~Data mining</concept_desc>
<concept_significance>500</concept_significance>
</concept>
/ccs2012>
\end{CCSXML}

\ccsdesc[500]{Information systems~Data mining}
\ccsdesc[500]{Information systems~Data compression}

\keywords{Data Compression, Sparse Matrix, Sparse Tensor}

\maketitle
\section{Introduction}
\label{sec:intro}
\kijung{We consider a matrix to be \textit{sparse} if the number of non-zero entries is much smaller than that of all entries.}
Sparse matrices naturally represent many types of data from various domains, as follows:
\vspace{-2mm}
\begin{itemize}[leftmargin=*]
\item \textbf{E-commerce}: User-item matrices represent how many times each user purchased each item  \cite{he2016ups, mcauley2012learning}.
\item \textbf{Search Engines}: Document-keyword matrices represent how many times each document contains each keyword \cite{liu2009bbm}. User-ad matrices indicate how many times each user clicked each ad \red{given} by search engines \cite{sidana2017kasandr}.
\item \textbf{Social Media}: The adjacency matrices of social networks indicate \red{friendship} between users~\cite{dhulipala2016compressing,shin2019sweg}. User-group matrices indicate which user belongs to each group~\cite{yang2012defining}.
\item \textbf{Bibliography}: Author-paper matrices represent who authored each paper~\cite{Sinha-2015-MAG}. 
The adjacency matrices of collaboration networks represent co-authorships between authors~\cite{yang2012defining}.
\end{itemize}
\vspace{-2mm}

\blue{Despite their sparsity, many real-world matrices  require considerable space.
Examples include user-ad matrices~\cite{ting2018count} and the adjacency matrices of web graphs~\cite{boldi2004webgraph} with billions of rows or columns; and keyword-document matrices \cite{liu2009bbm} and the adjacency matrices of online social networks \cite{dhulipala2016compressing,shin2019sweg} with tens of billions of non-zeros.} 

Compression of such large sparse matrices becomes important as smartphones and IoT devices become popular. 
Such memory-limited mobile devices are often required to process a large amount of data without sending them to clouds or servers, due to potential privacy risks \cite{konevcny2016federated}.
Moreover, as the size of large-scale matrices grows rapidly, storing them is challenging also in desktops and servers \cite{shin2019sweg,dhulipala2016compressing,beutel2015accams}, and for federate learning, compressing matrices is required to reduce communication costs \cite{nam2022fedpara}.
As a result, a large number of lossy matrix-compression techniques \cite{sun2007less,beutel2015accams,drineas2006fast}  have been developed over the last few decades.

To the best of our knowledge, existing lossy-compression methods for sparse matrices create outputs whose sizes are at least linear in the numbers of rows and columns of the input matrix.
For example, given an $N$-by-$M$ matrix $\mathbf{A}$ and a positive integer $K$, truncated singular value decomposition (T-SVD) \cite{eckart1936approximation,stewart1993early} outputs two matrices of which the numbers of entries are $O(KN)$ and $O(KM)$.
Recent methods \cite{drineas2006fast,sun2007less,beutel2015accams} have the same limitations, while they provide a better trade-off between space and information loss than T-SVD. 

Can we compress a matrix into a constant-size space, which can even be smaller than the number of rows and columns? 
In this paper, we exploit the fact that \textbf{many real-world sparse matrices are \textit{reorderable}}, i.e., the rows and columns of the matrices can be arbitrarily ordered.\footnote{\label{footnote:ex} A matrix is \textit{non-reorderable} if the orders of rows and columns in it convey information. For example, images and multivariate time series are non-reorderable matrices since the orders of rows and columns in them indicate spatial and temporal adjacency.}
All of the matrices discussed in the first paragraph, \blue{which are essentially bipartite graphs (nodes of one type correspond to rows, and  nodes of the other type correspond to columns)}, are reorderable.
For example, in the case of a user-item matrix built based on e-commerce data, which user (item) comes next to which user (item) does not matter.
\kijung{Our key idea is to \textbf{order 
rows and columns} to facilitate our model to learn and exploit meaningful patterns in the input matrix for compression.} 

Specifically, we present \method, a constant-size lossy compression method for sparse reorderable matrices.
\red{It} consists of a machine-learning model and novel training schemes.
The model generalizes the Kronecker power and enhances its expressive power using a \red{recurrent neural network} with a constant number of parameters. \kijung{The training scheme, which is crucial for performance, is to reorder rows and columns in the input matrix to create patterns that the machine-learning model can exploit for better compression.
}
Consider \red{an} $N$-by-$M$ matrix with $L$ non-zeros, where $N \!\leq \!M$ without loss of generality. 
The model and the training schemes are designed carefully so that each training epoch takes $O(M+L\log M)$ time, and after training, the approximation of each entry can be retrieved in $O(\log M)$ time. 
Note that the time complexity of training depends only on the number of non-zeros instead of all entries.

In addition, we extend \method for lossy-compression of sparse reorderable tensors while maintaining its strengths. Tensors (i.e., multi-dimensional arrays) generalize matrices to  higher dimensions, and in other words, matrices are 2-order tensors. 
Sparse tensors have been used widely for various purposes, including context-aware recommender systems \cite{karatzoglou2010multiverse} and knowledge base completion \cite{lacroix2018canonical}), and for lossy compression of them, tensor decomposition methods (e.g., CP \cite{carroll1970analysis,bader2008efficient} and Tucker \cite{tucker1966some,bader2008efficient}) have been developed.

For evaluation, we perform extensive experiments using ten real-world matrices \blue{(spec., bipartite graphs)} and tensors. 
The results reveal the following advantages of \method:
\vspace{-1mm}
\begin{itemize}[leftmargin=*]
    \item \textbf{Compact:} Its output is up to \textbf{5 orders of magnitude smaller} than \red{competitors' with similar approximation error.}
    \item \textbf{Accurate:} It achieves up to \textbf{10.1$\times$ smaller approximation error} than its best competitors that give similar-size outputs.
    \item \textbf{Scalable:} Its running time is \textbf{linear} in the number of non-zero entries, and it successfully compresses matrices with \textbf{over 230 millions of non-zero entries} on commodity GPUs. 
\end{itemize}

\smallsection{Reproducibility}
The code and datasets are available at \supplelink.

\smallsection{Remarks on non-reorderable matrices}
While we focus on reorderable matrices in this paper, \method can also be applied to non-reorderable matrices if the mapping between the original and new orders of rows and columns are stored additionally. \blue{We present a related experiment in
Appendix~\ref{app:non_reorder}.}


\vspace{-1mm}

\vspace{-1mm}
\section{Related Works}
\label{sec:related}
In this section, we review lossy-compression methods for matrices and tensors. \kijung{Those for lossy compression of sparse matrices or tensors of any size are compared in Table~\ref{tab:proscons}
and also in Section~\ref{sec:experiments}.} 

\smallsection{Factorization-based matrix compression}
Given a matrix $\mathbf{A} \in \mathbb{R}^{N \times M}$, singular value decomposition (SVD) \cite{golub1971singular} decomposes $\mathbf{A}$ into $\mathbf{U}\Sigma \mathbf{V}^T$ where $\mathbf{U} \in \mathbb{R}^{N \times R}$, $\mathbf{V} \in \mathbb{R}^{M \times R}$, $\mathbf{\Sigma}$ is a diagonal matrix with its singular values, and $R$ is its rank.
Truncated SVD (T-SVD) \cite{eckart1936approximation,stewart1993early} outputs the $K$ $(\leq R)$ largest singular values and the corresponding vectors of $\mathbf{U}$ and $\mathbf{V}$ from which the rank-$K$ approximation of $\mathbf{A}$ best in terms of the Frobenius norm  can be obtained \cite{stewart1993early}.
Its outputs have $O(K(M+N))$ real values, \kijung{and typically most of them are non-zero.}
\kijung{
For further compression, CUR decomposition~\cite{drineas2006fast} aims to yield sparse outputs.
Specifically, a sparse matrix $\mathbf{A}$ is decomposed into $\mathbf{CUR}$ (i.e., $\mathbf{A}\approx\mathbf{CUR}$),
where $\mathbf{C} \in \mathbb{R}^{N \times K}$ and $\mathbf{R} \in \mathbb{R}^{K \times M}$ are constructed by sampling $K$ columns and rows from $\mathbf{A}$, respectively.
The matrix $\mathbf{U} \in \mathbb{R}^{K \times K}$ is dense but small, and it is determined by $\mathbf{C}$ and $\mathbf{R}$ so that the approximation error is minimized.}
%
%
Compact matrix decomposition (CMD) \cite{sun2007less} keeps only unique columns and rows in $\mathbf{C}$ and $\mathbf{R}$ for further efficiency. 

\smallsection{Co-clustering-based matrix compression} ACCAMS and bACCAMS~\cite{beutel2015accams} use an additive combination of small co-clusters to approximate a given matrix. While the numbers of parameters of them are linear in the numbers of rows and columns,
they produce intermediate results whose size is linear in the number of (potentially zero) known entries.
Thus, they are \kijung{computationally and memory inefficient} when most entries are known but zero. 

\begin{table}
    \centering
    \caption{\label{tab:proscons} \kijung{Comparison of lossy-compression methods for sparse matrices and tensors.
    For simplicity,
    we treat the tensor order and all hyperparameters as constants.
    Comparisons are relative, and we provide details in \cite{appendix}.
    }}
        \setlength{\tabcolsep}{2pt}
        \scalebox{0.75}{
        \hspace{-2mm}
        \begin{tabular}{ccccccc}
        \toprule
        \multirow{3}{*}{Methods} & Space \& & Training & Inference & Number  & Training \\
         & Accuracy & Complexity & Complexity & of Hyper- & Time \\
         & Trade-off & (per iteration) & (per entry) & parameters & (total) \\
        \midrule
        \method & Strong & $\propto$ \#non-zeros & $\propto$ $\log(N_{\max})$*  & 4** & Long \\
        \midrule
        T-SVD \cite{eckart1936approximation,vannieuwenhoven2012new} & Weak & $\propto$ \#non-zeros & constant & 1 & Short \\
        CMD \cite{sun2007less}, CUR \cite{drineas2006fast} & Moderate & $\propto$ \#non-zeros & constant & 2 & Moderate \\ 
        ACCAMS \cite{beutel2015accams} & Moderate & $\propto$ \#all-entries & constant & 2 & Moderate \\
        bACCAMS \cite{beutel2015accams} & Moderate & $\propto$ \#all-entries & constant & 4 & Long \\
        \kronfit \cite{leskovec2007scalable,leskovec2010kronecker} & Weak & $\propto$ \#non-zeros & $\propto$ $\log(N_{\max})$* & 4 & Long \\
        \midrule
        CP \cite{carroll1970analysis}, Tucker \cite{tucker1966some} & Weak & $\propto$ \#non-zeros & constant & 1 & Moderate \\
        \bottomrule
        \multicolumn{6}{l}{\kijung{* Here $N_{\max}=\max(N_1, \cdots, N_{D})$ is the maximum dimensionality (i.e., mode length).}} \\
        \multicolumn{6}{l}{\kijung{** The learning rate, the optimizer, the weight parameter for the criterion of switching,}} \\
        \multicolumn{6}{l}{\kijung{and the size of hidden dimensions in LSTM.}}
    \end{tabular}}
    \vspace{-1mm}
\end{table}

\smallsection{Kronecker product-based matrix compression}
The adjacency matrix of a Kronecker graph \cite{leskovec2005realistic} is a Kronecker power of a fixed seed matrix (e.g., $2$-by-$2$ matrix).
\kronfit~\cite{leskovec2007scalable, leskovec2010kronecker} searches for a seed matrix whose Kronecker power approximates the adjacency matrix of a given graph.
While \kronfit is designed for adjacency matrices, it can be easily extended to matrices of any size, and the output seed matrix can be considered as a constant-size lossy compression of a given matrix.
However, the approximation error is considerable, even when the seed matrix is large, due to the inflexibility of the Kronecker product, as shown in Section~\ref{sec:exp:model}.

\smallsection{Tensor compression} 
CP decomposition (CP)~\cite{carroll1970analysis} and Tucker decomposition (Tucker)~\cite{tucker1966some} generalize
the aforementioned T-SVD to higher-order tensors.
They approximate a given tensor using the sums and products (e.g., outer product and $n$-mode product) of much smaller low-rank tensors and matrices, which can be considered as a lossy compression of the given tensor.
Efficient CP and Tucker methods for sparse tensors have been developed \cite{bader2008efficient}.
For lossless compression of sparse tensors, compressed sparse fiber~(CSF)~\cite{smith2015splatt, smith2015tensor} is available.

\smallsection{Other related works}
\blue{Unipartite-graph summarization algorithms \cite{lee2020ssumm,lefevre2010grass,riondato2017graph} can be used for compressing adjacency matrices of unipartite graphs, while they cannot be directly applied to weighted and/or non-symmetric matrices, which we aim to compress.}
Matrix sketching methods replace a given large matrix with a more compact matrix that follows the properties of the input matrix, for example, by leaving only important columns (rows) of the input matrix \cite{drineas2006fast,drineas2008relative}.
These methods, however, cannot be applied to our problem 
because the entries of the input matrix cannot be estimated directly from their outputs. 
\vspace{-2mm}
\section{Notations and problem definition}
\label{sec:preliminaries}
\vspace{-0.5mm}

In this section, we introduce basic concepts and give a formal problem definition.
\red{See Table~\ref{tab:notation} for common notations.}

\vspace{-2mm}
\subsection{Notations and Concepts}
\vspace{-0.5mm}

\smallsection{Sparse reorderable matrix and tensor} 
A \textit{matrix} $\bA \in \mathbb{R}^{N \times M}$ is a 2-dimensional array with $N$ rows and $M$ columns, and real entries. 
A $D$-order \textit{tensor} $\cX \in \mathbb{R}^{N_1 \times \cdots \times N_{D}}$ is a $D$-dimensional array of size $N_1\times\cdots\times N_D$ with real entries. 
We use $a_{ij}$ or $A(i,j)$ to denote the $(i,j)$-th entry of $\bA$, and we use $x_{i_1, \cdots, i_D}$ to denote the ($i_1$, $\cdots$, $i_{D}$)-th entry of $\cX$.
\kijung{We consider a matrix or a tensor to be \textit{sparse} if the number of non-zero entries is much smaller than that of all entries.\footnote{\kijung{The ratio is at most 0.0046 in the datasets considered in the paper.}}}
We call a matrix \textit{reorderable} if its rows and columns can be arbitrarily ordered. We provide some examples of reorderable matrices where the orders of rows and columns do not convey any information and some examples of non-reorderable ones (see Footnote~\ref{footnote:ex}) in Section~\ref{sec:intro}.
Similarly, we call a tensor reorderable if the indices in each mode can be arbitrarily ordered.



\smallsection{Approximation error} The \textit{Frobenius norm} is a function $\lVert \cdot \rVert_{F} : \mathbb{R}^{N \times M} \rightarrow \mathbb{R}$ defined as the square root of the square sum of all entries in the given matrix.
Similarly, the Frobenius norm of a tensor is defined as the square root of the square sum of all entries in the given tensor.
The \textit{approximation error} of a matrix $\reA$ that approximates $\bA$ is defined as $\lVert \bA - \reA \rVert_F^2$.
Similarly, the approximation error of $\reX$ that approximates $\cX$ is defined as $\lVert \cX - \reX \rVert_F^2$. 

\smallsection{Kronecker product and power} 
Given two matrices $\bA \in \mathbb{R}^{N \times M}$ and $\mat{B} \in \mathbb{R}^{P \times Q}$, the \textit{Kronecker product} $\bA \otimes \mat{B} \in \mathbb{R}^{NP \times MQ}$ is a large matrix formed by multiplying $\mat{B}$ by each element of $\bA$, i.e.,
\vspace{-2mm}
\begin{align*}
\bA \otimes \mathbf{B} := 
\begin{bmatrix}
a_{11} \mathbf{B} & \cdots & a_{1M}\mathbf{B} \\
\vdots & \ddots & \vdots \\
a_{N1} \mathbf{B} & \cdots & a_{NM}\mathbf{B}
\end{bmatrix}.
\end{align*}
We denote the $l$-th \textit{Kronecker power} of $\bA$ as $\kpower{\bA}{l}$, where $\kpower{\bA}{l}=\kpower{\bA}{(l-1)} \otimes \bA$ and $\kpower{\bA}{1}=\bA$.
\vspace{-2mm}

\subsection{Problem Definition}
\vspace{-0.5mm}
\label{sec:preliminaries:problem}
The constant-size lossy matrix compression problem that we address in this paper is defined in Problem~\ref{prob}.
\noindent It should be noted that the given constant $k$ can be even smaller than $N$ and $M$.
The problem of \textit{constant-size lossy compression of a sparse reorderable tensor} can be defined by simply replacing the matrix $\bA$ with a tensor $\cX$ and $\lVert\bA - \reA\rVert_F^2$ with $\lVert \cX - \reX \rVert_F^2$.

\vspace{0.5mm}
\noindent\fbox{%
\parbox{0.975\columnwidth}{%
\vspace{-2mm}
\begin{problem}\label{prob}
\textsc{\normalfont{(Constant-size Lossy Compression of a Sparse}} \textsc{\normalfont{Reorderable Matrix)}} 
\begin{itemize}[leftmargin=*]
    \item \textbf{Given:} (1) a sparse and reorderable matrix $\bA \in \mathbb{R}^{N \times M}$, \\
    \indent \hspace{9mm} (2) a constant $k= O(1)$,
    \item \textbf{Find:} a model $\model$ 
    \item \textbf{to Minimize:} the approximation error 
        $\lVert\bA - \reA\rVert_F^2$, where $\reA$ is the matrix approximated from $\model$.
    \item \textbf{Subject to:}
    the number of parameters in $\model$ is at most $k$.
\end{itemize}
\end{problem}
\vspace{-1mm}
}%
}


\begin{table}[t]
    \centering
    \caption{Frequently-used notations}
    \label{tab:notation}
    \scalebox{0.8}{
    \begin{tabular}{c|l}
        \toprule
        \textbf{Symbol} &  \textbf{Definition} \\
        \midrule
        $\bA\in \mathbb{R}^{N \times M}$ & an $N$-by-$M$ sparse matrix \\
        $a_{ij}$ or $\bA(i, j)$ & ($i$, $j$)-th entry of $\bA$ \\
        $\bA_{i,:}$, $\bA_{:,i}$ & $i$-th row of $\bA$, $i$-th column of $\bA$ \\
        \midrule
        $\cX\in \mathbb{R}^{N_1 \times \cdots \times N_{D}}$ & tensor \\
        $D$ & order of $\cX$ \\
        $x_{i_1, \cdots, i_D}$ & ($i_1$, $\cdots$, $i_D$)-th entry of $\cX$ \\
        \midrule
        $\text{nnz}(\bA)$, $\text{nnz}(\cX)$ & number of non-zero entries in $\bA$ and $\cX$ \\
        $\fnorm{\bA}, \fnorm{\cX}$ & Frobenius norm of $\bA$ and $\cX$\\
        $\otimes$ & Kronecker product \\
        $\kpower{\bA}{l}$, $\kpower{\cX}{l}$ & $l$-th Kronecker power of $\bA$ and $\mathbf{\cX}$ \\
        $\model$ & a \method model which compresses $\bA$ and $\cX$ \\
        $\reA$, $\reX$  & approximated matrix and tensor of $\bA$ and $\cX$ by $\model$ \\
        $q$ & a parameter for the scale of model outputs \\
        $h$ & hidden dimension in LSTM \\
        $[n]$ & a set of integers from $1$ to $n$ (i.e., $\{1, 2, \cdots, n\}$) \\
        \bottomrule
    \end{tabular}
    }
\end{table}
\vspace{-1.5mm}
\section{Proposed Method}
\vspace{-1mm}
\label{sec:method}
\begin{figure*}[t]
    \vspace{-3mm}
     \centering
     \hspace{-1mm}
     \subfigure[Encoding Examples]{
         \centering
         \includegraphics[width=0.3\linewidth]{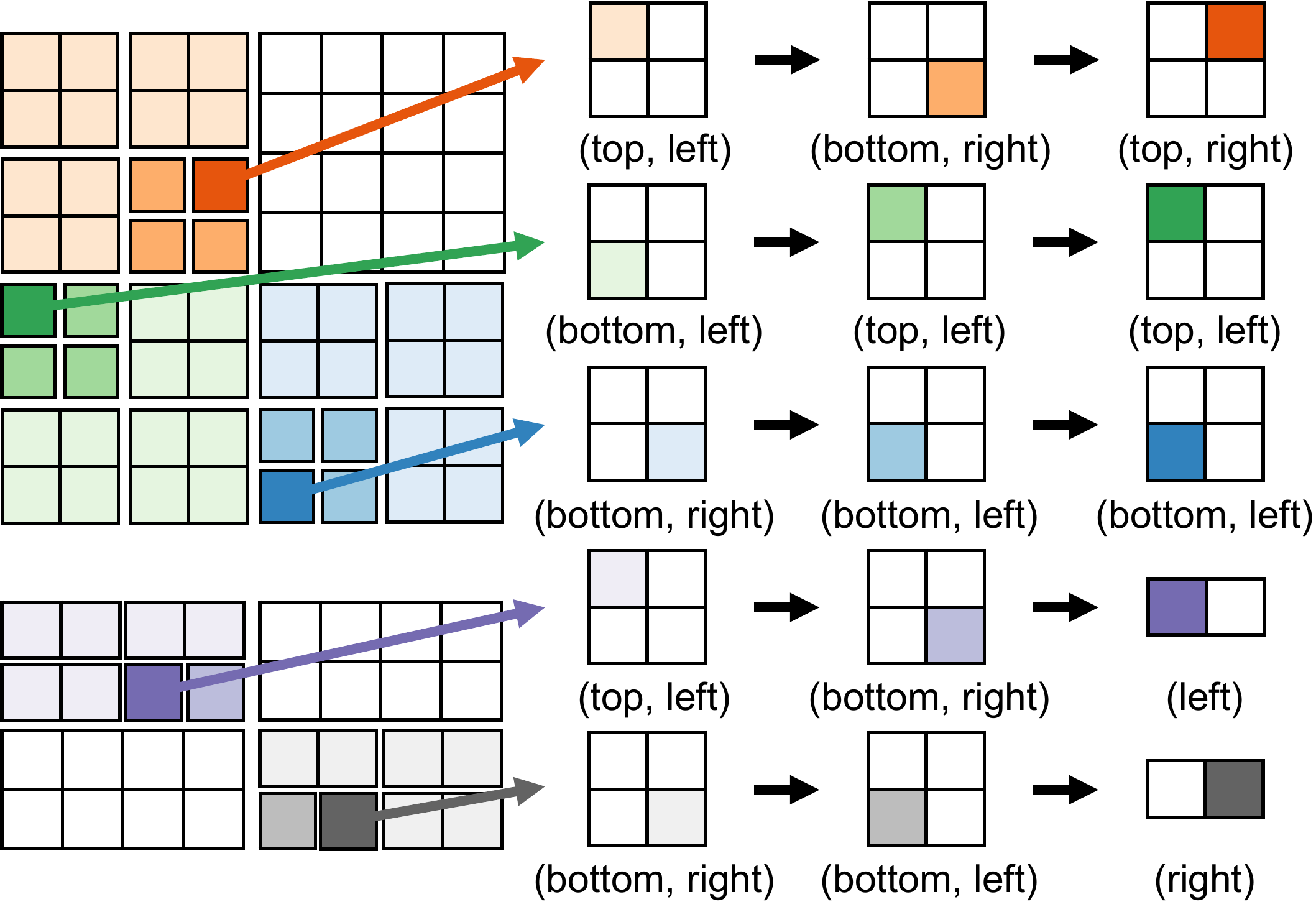}
         \label{fig:encoding}
     }
     \hspace{0.5mm}
     \subfigure[Model for Square Matrices]{
         \centering
         \includegraphics[width=0.3\linewidth]{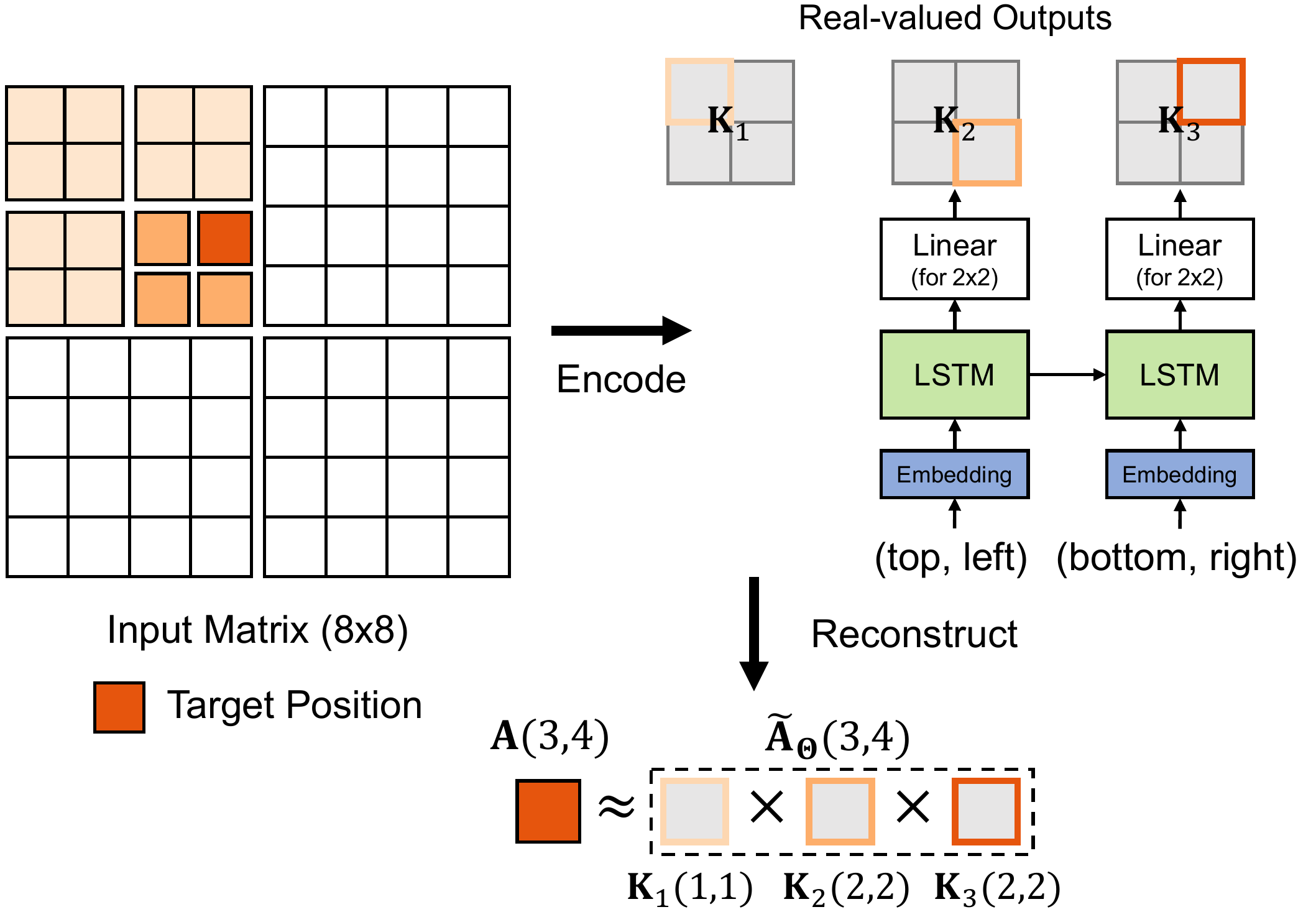}
         \label{fig:sq_model}
     }
     \subfigure[Model for Matrices of Any Size]{
         \centering
         \includegraphics[width=0.3\linewidth]{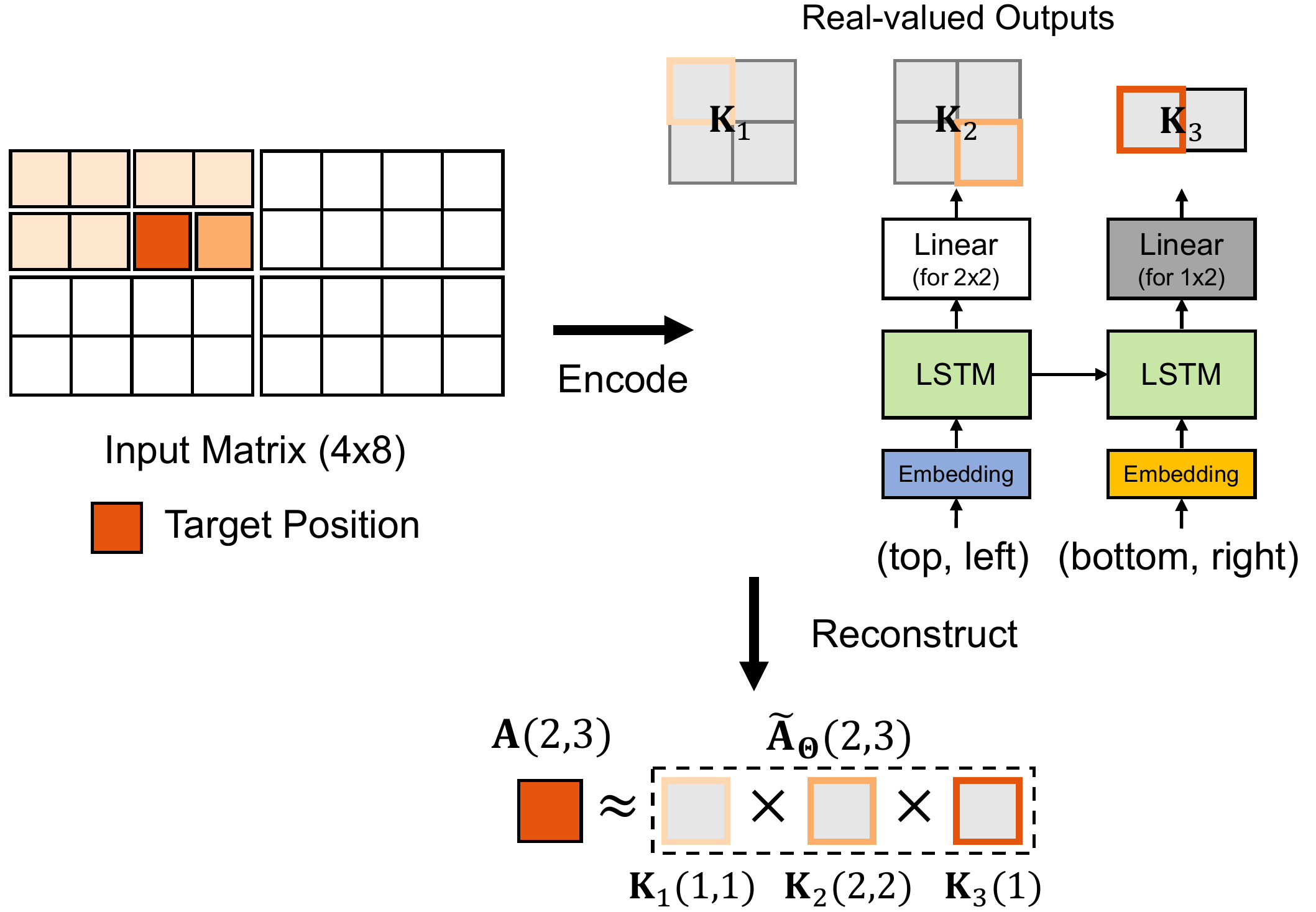}
         \label{fig:rec_model}
     }
     \vspace{-2mm}
    \caption{\underline{\smash{The overall approximation process of \method.}} It encodes the input position into a sequence by recursively dividing the input matrix. The sequence is fed into LSTM, and the outputs of LSTM are aggregated based on the Kronecker product.}
    \label{fig:model}
\end{figure*}
In this section, we present \method, a constant-space lossy compression method for sparse reorderable matrices and tensors. 
We first describe its neural network model and then the training strategies for it. After that, we analyze the computational complexity of \method.
For ease of explanation, we assume that the input is a matrix through the section, and then we describe the extensions for tensors in Section \ref{sec:tensor}.
\vspace{-2.5mm}
\subsection{Model}

\subsubsection{Overview}
\label{sec:model:overview}

When designing a neural network model $\model$ for \method, we aim to achieve the following goals: 
\begin{itemize}[leftmargin=*]
    \item \textbf{G1. Constant Size:} The number of parameters of the model should be constant, regardless of the size of the input matrix.
    \item \textbf{G2. Exploitation of Sparsity:}
    It should be possible to fit the model to the input by accessing only non-zero entries. 
    \item \textbf{G3. Fast Approximation:}
    From the trained model, it should be possible to approximate each entry of the input matrix in sublinear time (preferably, in \kijung{constant} or logarithmic time). 
\end{itemize}

For \textbf{G1}, given a matrix $\bA$ to be compressed, we encode the position $(i,j)$ of each entry $\aij$ as a sequence and use an auto-regressive sequence model, \red{specifically LSTM~\cite{hochreiter1997long}, which has a constant number of parameters,} to process the sequence.
\red{For our purpose, LSTM performs similarly with GRU~\cite{cho2014learning} and outperforms the decoder layer of Transformer~\cite{vaswani2017attention}, as shown empirically in \cite{appendix}.}
For an entry $\aij$, the sequence encoding the position $(i, j)$ is fed into LSTM, and the outputs of LSTM are combined for its approximation $\atij$  in logarithmic time, achieving \textbf{G3} (see Theorem~\ref{theorem:query} in Section~\ref{sec:method:analysis}).
Moreover, regarding \textbf{G2}, the outputs of LSTM are combined so that the sparsity can be exploited for efficient computation of the objective and its gradient (see Section~\ref{sec:method:train:model}). 
The details of encoding inputs and combining outputs are described in the following subsections.
\kijung{Regarding \textbf{G3}, it should be noticed that many factorization-based methods approximate each entry even in constant time (see Table~\ref{tab:proscons}).}

\subsubsection{Encoding inputs (lines~\ref{algo:model:encoding:start}-\ref{algo:model:encoding:end} of Algorithm~\ref{algo:model})}
\label{sec:model:input}

For simplicity, we assume an input matrix $\bA\!\in \mathbb{R}^{N \times M}$ where $N\!= \! M\!=\! 2^l$ (see Section~\ref{sec:model:general} for generalization to matrices of any size).
Algorithm \ref{algo:model} depicts how \method approximates such $\bA$.

For each entry $\aij$ of $\bA$, \method encodes its position $(i, j)$ in a sequence of length $l\!=\!\log_2\! M$ 
by recursively subdividing $\bA$ in a top-down manner. 
\method first chooses the partition where $\aij$ lies when $\bA$ is divided into $2 \times 2$ partitions of the same size (i.e., $2^{l-1} \times 2^{l-1}$). Each division gives four partitions at top left (\textt{TL}), top right (\textt{TR}), bottom left (\textt{BL}), and bottom right (\textt{BR}).
Then, \method repeats the process on the chosen partition until only the target entry $\aij$ is left.
The sequence of the positions of the chosen partition is used to encode $\aij$. In our implementation, each entry of the sequence, which is a position, is converted into a tuple in $\{1, 2\} \times \{1, 2\}$. 
Specifically, the $k$-th entry of the sequence that encodes the position $(i,j)$ is $(t(i,k),t(j,k))$ where
\begin{equation}
t(i, k) := \left(\floor{\frac{(i-1)}{2^{l-k}}} \text{ mod } 2\right) + 1. \label{eq:position}
\end{equation}

\definecolor{exstep1}{HTML}{FDD0AE}
\definecolor{exstep2}{HTML}{FDAE6B}
\definecolor{exstep3}{HTML}{E6550D}

\begin{example}[Encoding in Square Matrices]
\label{example:square}
Suppose we encode the position $(3, 4)$ of the square matrix in Figure~\ref{fig:model}(a), where $l=3$.
The position $(3, 4)$ is located in the \textcolor{exstep1}{top-left} partition of the input matrix, and it is located in the \textcolor{exstep2}{bottom-right} part of the chosen partition. 
Lastly, the position $(3, 4)$ is located at the \textcolor{exstep3}{top-right} one of the lastly chosen partition. 
Thus, the position $(3, 4)$ is encoded in the sequence \textcolor{exstep1}{\textt{TL}}\! $\rightarrow$\! \textcolor{exstep2}{\textt{BR}} \!$\rightarrow$\! \textcolor{exstep3}{\textt{TR}}, which becomes
$(1, 1)\!\rightarrow\!(2, 2)\!\rightarrow\!(1, 2)$ based on $t$ (Eq.~\eqref{eq:position}).
\end{example}

Each tuple in the sequence, except for the last one, goes through an embedding layer (line~\ref{algo:model:encoding:embedding}) to be converted into a corresponding embedded vector of size $h$, where $h$ is a hyperparameter. Then, the vector is fed into LSTM (line~\ref{algo:model:encoding:end}).

\subsubsection{Handling outputs (lines~\ref{algo:model:output:start}-\ref{algo:model:output:end} of Algorithm~\ref{algo:model})}
\label{sec:model:output}

Below, we present how \method produces an approximation. See Figure~\ref{fig:model}(b) for a pictorial description.
We again assume an input matrix $\bA\in \mathbb{R}^{N \times N}$ where $N = M = 2^l$ for ease of explanation. 
Given the position $(i,j)$ of a target entry $\aij$, \method creates $\bK_1\in \mathbb{R}^{2 \times 2}$, $\cdots$, $\bK_{l}\in \mathbb{R}^{2 \times 2}$.
Specifically, given the sequence of tuples that encode $(i,j)$  (see Section~\ref{sec:model:input} for encoding),
for each $k\in [l-1]$, the $k$-th LSTM cell receives the embedding of the $k$-th tuple, and then the hidden state of the cell goes through the linear layer and the Softplus activation to produce $\mathbf{K}_{k+1}$ (line~\ref{algo:model:output:matrix}).
The entries of $\mathbf{K}_1$ are separate learnable parameters.
The approximation $\atij$ is computed from the $(i,j)$-th entry of their Kronecker product $\mathbf{K}_1 \otimes \cdots \otimes \mathbf{K}_{l}$ as follows (line \ref{algo:model:output:end}):
\begin{equation}
    \atij := \sqrt{q} \cdot \prod\nolimits_{k=1}^{l} \mathbf{K}_{k} (t(i, k), t(j, k)) / \fnorm{\mathbf{K}_{k}}, \label{eq:decode}
\end{equation}
where $\prod_{k=1}^{l} \mathbf{K}_{k} (t(i, k), t(j, k))$ is the $(i,j)$-th entry of the Kronecker product, and $q$ is a learnable parameter.
It should be noticed that the entire Kronecker product does not have to be computed. By combining the outputs of LSTM using Eq.\eqref{eq:decode}, \textbf{G2} in Section~\ref{sec:model:overview} can be achieved. Specifically, using Eq.\eqref{eq:decode} enables the exploitation of the sparsity of the input matrix $\bA$ for linear-time training, as described in detail in Section~\ref{sec:method:train:model} (see Lemma~\ref{lemma:sq_sum}).

\subsubsection{Handling matrices of any size}
\label{sec:model:general}
Below, we describe how the above processes of \method are generalize to compress a matrix of any size.
For a given matrix $A\in \mathbb{R}^{N \times M}$, we consider integers $l_{\text{row}}$ and $l_{\text{col}}$ such that $2^{l_{\text{row}}}\geq N$ and $2^{l_{\text{col}}} \geq M$. 
Then, $A\in \mathbb{R}^{N \times M}$ is extended to the $2^{l_{\text{row}}}$-by-$2^{l_{\text{col}}}$ matrix with additional rows and columns filled with zeros.
Specifically, \method sets $l_{\text{row}}$ to $\lceil \log_2 N \rceil$ and set $l_{\text{col}}$  to $\lceil \log_2 M \rceil$ so that the number of new entries is minimized.

Without loss of generality, we assume $N \leq M$ and thus $l_{\text{row}}\!\leq \!l_{\text{col}}$. If $l_{\text{row}}\!=\!l_{\text{col}}$, the extended square matrix is considered as the input and processed as described in Sections~\ref{sec:model:input} and \ref{sec:model:output}.
Otherwise (i.e., if $l_{\text{row}}\!<\!l_{\text{col}}$), to encode the position $(i,j)$ of a target entry $\aij$, \method first recursively divides $\bA$ into $2 \times 2$ partitions, $l_{\text{row}}$ times, to obtain a partition has a size of $1 \times 2^{l_{\text{col}} - l_{\text{row}}}$, and then it recursively divides the partition into two partitions of the same size (i.e., $1 \times 2$), $l_{\text{col}} - l_{\text{row}}$ times. Each division gives two partitions at left (\textt{L}) and right (\textt{R}).
Specifically, the $k$-th entry of the sequence that encodes the position $(i,j)$ is $(t_{\text{row}}(i,k),t_{\text{col}}(j,k))$, where $\forall d\in\{\text{row},\text{col}\}$, 
\begin{align}
    t_d(i, k) = \begin{cases}
			\left(\floor{{(i-1)}/{2^{l_{d}-k}}} \text{ mod } 2\right) + 1, & \text{if $k \leq l_d$,}\\
            0, & \text{otherwise}.
		 \end{cases} 
		 \label{eq:position:non-square}
\end{align}

\begin{algorithm}[t]
\caption{Approximation process of \method for 
an $N$-by-$N(=2^l)$ matrix $\bA$ 
}\label{algo:model}
\SetKwInput{KwInput}{Input}
\SetKwInput{KwOutput}{Output}
\KwInput{(a) a position: $(i, j) \in [N] \times [N]$ where 
$N=2^l$ \\\\
(b) parameters of \texttt{Embedding}, \texttt{LSTM}, and the linear layer ($\mathbf{W}$, $\mathbf{b}$) \\
\blue{(c) scale parameter $q$ and the first matrix of Kronecker products $\mathbf{K}_1$}}
\KwOutput{an approximation $\atij$ of $\aij$, which is the $(i,j)$-th entry of the input matrix 
$\mathbf{A}\in\mathbb{R}^{N \times N}$}

 \For{$k$ $\leftarrow$ $1$ \textnormal{to} $l$}{\label{algo:model:encoding:start}
   $\mathbf{x}_k$ $\leftarrow$ $\texttt{Embedding}\big(t(i, k), t(j, k)\big)$ \label{algo:model:encoding:embedding} 
   \Comment*[f]{Sect. \ref{sec:model:input}} \\
 }
 $\mathbf{y}_2, \cdots, \mathbf{y}_l$ $\leftarrow$ $\texttt{LSTM}(\mathbf{x}_1, \mathbf{x}_2, \cdots, \mathbf{x}_{l-1})$ \label{algo:model:encoding:end}\\
 \For{$k$ $\leftarrow$ $2$ \textnormal{to} $l$}{\label{algo:model:output:start}
   $\mathbf{K}_k$ $\leftarrow$ $\texttt{Softplus}( \mathbf{W}\mathbf{y}_k + \mathbf{b})$
   \Comment*[f]{Sect. \ref{sec:model:output}} \\ \label{algo:model:output:matrix}
   }
 \Return $\atij \leftarrow$ $\sqrt{q} \cdot \prod_{k=1}^{l} \mathbf{K}_{k} \big(t(i, k), t(j, k)\big) / \fnorm{\mathbf{K}_{k}}$ \label{algo:model:output:end}
\end{algorithm}

\begin{example}[Encoding in Rectangular Matrices]
\label{example:rectangle}
Suppose we encode the position $(2, 3)$ of the non-square matrix in Figure \ref{fig:model}(a), where $(l_{\text{row}}, l_{\text{col}}) = (2, 3)$.
The position $(2, 3)$ is located in the \textcolor{exstep1}{top-left} partition and in the \textcolor{exstep2}{bottom-right} partition, respectively, in the first two divisions.
In the last division, the position $(2, 3)$ is located in the \textcolor{exstep3}{left} one.
Thus, the position $(2, 3)$ is encoded in the sequence \textcolor{exstep1}{\texttt{TL}} $\rightarrow$ \textcolor{exstep2}{\texttt{BR}} $\rightarrow$ \textcolor{exstep3}{\texttt{L}}, which becomes
$(1, 1)\rightarrow (2, 2)\rightarrow (0, 1)$ based on $t_d$ (Eq.~\eqref{eq:position:non-square}).
\end{example}

As in Section~\ref{sec:model:output}, \method produces an approximation of $\aij$ using the $(i,j)$-th entry of \kijung{the modified Kronecker product in Eq.~\eqref{eq:decode}} $\mathbf{K}_1 \otimes \cdots \otimes \mathbf{K}_{l_{\text{col}}}$.
The only difference is that $\mathbf{K}_{l_{\text{row}}+1},\cdots, \mathbf{K}_{l_{\text{col}}}$ are matrices of size $1 \times 2$, and for them, a separate embedding and linear layers are used, as described in Figure \ref{fig:model}(c).


\subsubsection{Comparison with Kronecker Graphs}
\label{sec:model:kron}
Our model $\model$ generalizes the Kronecker graph model \cite{leskovec2007scalable, leskovec2010kronecker} in two ways:
\begin{itemize}[leftmargin=*]
    \item While the Kronecker graph model uses the power of a single seed matrix, $\model$ uses the Kronecker product of potentially different matrices (i.e., $\mathbf{K}_1, \cdots, \mathbf{K}_l$) for approximation.
    \item  
    \kijung{In} $\model$, the matrices $\mathbf{K}_1, \cdots, \mathbf{K}_l$ may vary depending on the position of the target entry to be approximated. 
    Specifically, $\atij$ 
    is computed using the $(i,j)$-th entry of $\mathbf{K}^{(f_1(i),f_1(j))}_1 \otimes \mathbf{K}^{(f_2(i),f_2(j))}_2 \otimes \cdots \otimes \mathbf{K}^{(f_l(i),f_l(j))}_l,$
    where $f_k(i) = \lfloor (i-1)/2^{l-k}\rfloor$. 
\end{itemize}
This generalization leads to a significantly better trade-off between parameter size and approximation error in practice, as shown in Section \ref{sec:exp:model}.
\kijung{Notably, there are also two differences:}
\begin{itemize}[leftmargin=*]
    \item \kijung{While the Kronecker graph model is trained under a log-likelihood objective, $\Theta$ uses the squared Frobenius norm and normalizes the matrices to apply the tricks in Eq.~\eqref{eqn:new_loss} and Eq.~\eqref{eq:sq_sum}.}
    \item \kijung{As specified in Eq.~\eqref{eq:decode}, each matrix (i.e., $\mathbf{K}_1, \cdots, \mathbf{K}_l$) is normalized and mapped onto the unit hypersphere.}
\end{itemize}

\subsection{Training Strategies}

In this subsection, we propose novel training schemes for \method's model $\model$. 
We first present how to fit $\model$ to a given sparse reorderable matrix while exploiting its sparsity. 
Then, we present how to reorder the rows and columns of the input matrix so that $\model$ can be better fit to it.
These two steps are alternated until convergence, as described in Algorithm \ref{algo:neukron}.
Below, we assume a matrix $\bA\in \mathbb{R}^{N \times M}$ where $(N, M) =(2^{l_{\text{row}}}, 2^{l_{\text{col}}})$. As described in Section~\ref{sec:model:general}, a matrix of any size can be extended by zero-padding to satisfy this condition. We also assume $N\leq M$, without loss of generality.

\vspace{-1mm}
\subsubsection{Update of row/column orders}
\label{sec:method:perm}
It is crucial to properly order the rows and columns of a given reorderable matrix for \method's model $\model$ better fit the matrix.
This is because proper ordering reveals patterns (e.g., self-similarity and co-clusters), which $\model$ can exploit for accurate compression.

\smallsection{Overall process} 
For initialization, any co-clustering algorithms can be used. In our implementation, the matrix reordering scheme in \cite{jung2020fast} is used (see Section~\ref{sec:exp:method} for the effect of initialization).
After initialization, \method repeats (a) sampling two rows (or columns), (b) measuring the change in the approximation error (i.e., $\lVert\bA - \reA\rVert_F^2$), and (c) determining whether to swap the sampled rows (or columns) or not probabilistically using the following criterion: 
\begin{equation}
    u < \exp(-\gamma \cdot \Delta), \label{eq:accept}
\end{equation} where $u\!\sim\!U(0, 1)$, $\Delta$ is the change in the approximation error, and $\gamma>0$ is a hyperparameter that controls the probability of accepting swaps that increase the approximation error.

\smallsection{Similarity-aware sampling} 
Below, we describe how \method samples candidate pairs of rows (or columns) to be potentially swapped. Compared to a naive uniform sampling, the \kijung{proposed sampling method} has two advantages: 
\textbf{(a) effective}: it samples pairs based on the similarity of rows (or columns) so that swapping the pairs is likely to reduce the approximation error, and \textbf{(b) easy-to-parallelize}: it samples disjoint pairs, which can be processed in parallel.
The main idea is to select candidate pairs so that swapping pairs is likely to make similar rows (or columns) close to each other and thus to make them encoded in similar sequences in Section~\ref{sec:model:input}.
Below, we describe the sampling method step by step for sampling row pairs. Column pairs are sampled similarly.
\begin{itemize}[leftmargin=*]
    \item \textbf{Estimating similarity:} 
    In order to quickly estimate the similarity, min-hashing \cite{broder2000min} is used.
    Specifically, for a uniform random bijective function 
    $h_{\text{col}}: [M] \rightarrow [M]$ for the columns, the shingle $\min_{a_{ij} \neq 0}(h_{\text{col}}(j))$ of each $i$-th row is computed. 
    It can be shown that two rows have the same shingle with probability proportional to the Jaccard similarity of the column indices of their non-zeros \cite{broder2000min}. 
    
    \item \textbf{Locating similar rows/cols nearby:}
    We match rows with the same shingle disjointly, and for each matched rows, we sample pairs of rows to be swapped so that they are located in \textit{nearby positions}, which we define as positions whose binary representations differ in only $1$ bit.
    Let $p(i,k)$ be the position whose binary representation differs with that of $i$ only in the $k$-th bit. 
    Specifically, if two rows in the $i_1$-th and $i_2$-th positions are matched, we sample ($i_1$,$p(i_2,k)$) and ($i_2$,$p(i_1,k)$) so that $i_1$ and $i_2$ become nearby after swaps.
    The position $k\in[l_{\text{col}}]$ is sampled probabilistically (see Appendix~\ref{app:code} for details).

    
    \item \textbf{Pairing unmatched rows:} 
    The rows remaining unmatched are randomly matched, and for each matched rows, we sample pairs as described above.
    
\end{itemize}
We describe the entire process of reordering for rows in Algorithm~\ref{algo:update_perm}.

\begin{algorithm}[t]
\caption{Overall training process of \method}\label{algo:neukron}
\SetKwInput{KwInput}{Input}
\SetKwInput{KwOutput}{Output}
\KwInput{(a) a sparse reorderable matrix $\mathbf{A}$ \\
\qquad \quad (b) a number $T_p$ of permutation updates\\
}
\KwOutput{a \method model $\model$}
 Initialize $\model$ \\
 \While{not converged}{
    \For{$k$ $\leftarrow$ $1$ \textnormal{to} $T_p$}{
        $\mathbf{A}$ $\leftarrow$ \textsc{UpdateRowOrder}($\mathbf{A}$) \Comment*[f]{Sect. \ref{sec:method:perm}}\\
        $\mathbf{A}$ $\leftarrow$ \textsc{UpdateColOrder}($\mathbf{A}$) \Comment*[f]{Sect. \ref{sec:method:perm}} \\
    }
    $\model$ $\leftarrow$ \textsc{UpdateModel}($\mathbf{A}$, $\model$)  \Comment*[f]{Sect. \ref{sec:method:train:model}}
 }
 \Return{$\model$}
\end{algorithm}

\subsubsection{Update of model parameters}
\label{sec:method:train:model}

The objective function of optimization is $\lVert \bA - \reA \rVert_F^2$, as in Problem~\ref{prob}. 
Naively computing it takes $\Omega(NM\log{M})$ time since all $NM$ entries are approximated and approximating each entry takes $\Theta(\log{M})$ time (see Theorem~\ref{theorem:query} in Section~\ref{sec:method:analysis}).

For its efficient computation, we reformulate the error as
\vspace{-1mm}
\begin{align} \label{eqn:new_loss}
    &\fnorm{\mathbf{A} - \reA}^2 = \sum_{i=1}^{N} \sum_{j=1}^M (a_{ij} - \atij)^2
     = \sum_{a_{ij} \neq 0}(a_{ij} - \atij)^2  \\
     &+ \sum_{a_{ij}=0} \atij^2 \nonumber = \sum_{a_{ij} \neq 0}((a_{ij} - \atij)^2 - \atij^2) + 
     \sum_{i=1}^N\sum_{j=1}^M \atij^2.
\end{align}

In our model $\model$, the last term, (i.e., the sum of  squares) can be immediately computed from a learnable parameter $q\in \mathbb{R}^+$ (which is used in Eq.~\eqref{eq:decode}), as formalized in Lemma~\ref{lemma:sq_sum}.
\vspace{-1mm}
\begin{lemma} \label{lemma:sq_sum}
    For approximation by  Eq.~\eqref{eq:decode},
    Eq.~\eqref{eq:sq_sum} always holds.
    \begin{equation}
        \sum_{i=1}^{2^{l_{\text{row}}}}\sum_{j=1}^{2^{l_{\text{col}}}} \atij^2=q^{l_{col}} \label{eq:sq_sum}
    \end{equation}
\end{lemma}
\vspace{-2mm}
\begin{proof}
To prove this lemma, we use an induction. For $(l_{\text{row}}, l_{\text{col}}) \!=\! (1, 1)$ and $(l_{\text{row}}, l_{\text{col}}) \!=\! (0, 1)$, the statement holds trivially. 
Suppose the statement holds when $(l_{\text{row}}, l_{\text{col}}) \!=\! (0, l_2)$. For $(l_{\text{row}}, l_{\text{col}}) \!=\! (0, l_2 + 1)$, the statement also holds since
\begin{align*}
    \sum_{i=1}^{2^{l_{\text{row}}}}\sum_{j=1}^{2^{l_2 + 1}} \tilde{a}_{ij}^2 &= \frac{q\bK_{1}(1, 1)^2}{\fnorm{\bK_{1}}^2} \sum_{i=1}^{2^{l_{\text{row}}}}\sum_{j=1}^{2^{l_2}}  \frac{\tilde{a}_{ij}^2}{q\mathbf{K}_{1}(1, 1)^2/\fnorm{\bK_{1}}^2} \\ &\quad+ \frac{q\mathbf{K}_{1}(1, 2)^2}{\fnorm{\bK_{1}}^2} \sum_{i=1}^{2^{l_{\text{row}}}}\sum_{j=2^{l_2} + 1}^{2^{l_2 + 1}} \frac{\tilde{a}_{ij}^2}{q\mathbf{K}_{1}(1, 2)^2/\fnorm{\bK_{1}}^2} \\
    &= q^{l_2} \left(\frac{q\bK_{1}(1, 1)^2}{\fnorm{\bK_{1}}^2} + \frac{q\bK_{1}(1, 2)^2}{\fnorm{\bK_{1}}^2}\right) = q^{l_2} \cdot q = q^{l_2+1}
\end{align*} 

Similarly, if the statement holds for $(l_{\text{row}}, l_{\text{col}}) = (l_1, l_2)$ and $l_1 \leq l_2$, the statement also holds for $(l_{\text{row}}, l_{\text{col}}) = (l_1 + 1, l_2 + 1)$.
By induction, the statement holds for all $0 \leq l_{\text{row}} \leq l_{\text{col}}$.
\end{proof}
This property follows from our careful design of Eq.~\eqref{eq:decode}, which is based on the Kronecker product. While $q$ can be set so that the square sum of entries of $\reA$ is equal to that of $\bA$, making it learnable leads to better compression since this gives more degrees of freedom to the model (see Section~\ref{sec:exp:method}).
As a result, the error becomes $\sum_{a_{ij} \neq 0}((a_{ij} - \atij)^2 - \atij^2) + q^{l_{col}}$, and thus the error and its gradient can be computed in time proportional to the number of non-zeros, without having to approximate zero entries in $\bA$ \blue{explicitly} (see Theorem~\ref{thm:time} in Section~\ref{sec:method:analysis}).
\blue{It should be noticed that we do use the loss function that encourages the model to fit all entries including zeros, and we speed up its computation without changing it.}
Gradient descent is used for updating the model parameters.



\smallsection{Implementation in practice} 
\blue{Since candidate pairs are disjoint, processing them, including computing Eq.~\eqref{eq:accept}, is performed in parallel in our implementation. 
Shingles are also computed in parallel.}

\vspace{-2mm}
\subsection{Theoretical Analysis} \label{sec:method:analysis}
We analyze the time and space complexity of \method. We assume that (a) $N\leq M$ for the input matrix $\bA\in \mathbb{R}^{N \times M}$ and (b) the dimension $h$ of LSTM is a constant (i.e., $O(1)$), which is a user-defined hyperparameter.
\method requires logarithmic time for approximation (Theorem~\ref{theorem:query}), \red{as confirmed empirically in Section~\change{2} of~\cite{appendix}.}
For training, it requires time proportional to the number of non-zero entries of $\bA$, denoted by $\text{nnz}(\mathbf{A})$ (Theorem~\ref{thm:time}).

\begin{theorem}[Approximation Time for Each Entry]
\label{theorem:query}
 The approximation of each entry by \method takes $\Theta(\log M)$ time.    
\end{theorem}
\begin{proof}
First, we need to encode the position of the given entry. Since we need the subdivision $\Theta(\log M)$ times, the time complexity of the encoding step is $\Theta(\log M)$. The computational cost to approximate an entry only depends on the length of the input of the LSTM, so the time complexity for inference is $\Theta(\log M)$. 
\end{proof}

\begin{theorem}[Training Time] \label{thm:time}
Each training epoch in \method takes $O(\text{nnz}(\mathbf{A}) \cdot \log{M})$ time.
\end{theorem}
\begin{proof}
The time complexity for inference is $O(\log M)$ for each input. Thus, computing the approximation error takes $O(\text{nnz}(\mathbf{A}) \cdot \log M)$ with Eq.~\eqref{eqn:new_loss} (see Lemma~\ref{lemma:sq_sum}). The time complexity for computing the gradients is also $O(\text{nnz}(\mathbf{A}) \cdot \log M)$, since the gradient of each component in the model, such as matrix multiplication and taking a non-linearity, does not require a greater time complexity. 
For optimizing the orders of rows and columns, computing the shingle values for rows and columns takes $O(\text{nnz}(\mathbf{A}))$ time since we need to look up all non-zero entries. 
Matching the rows and the columns as pairs requires $O(N + M)$ time.
Only the entries of the output that correspond to non-zero entries
are changed due to swaps and inference of a single element takes $O(\log M)$ time.
Thus, checking the criterion in Eq.~\eqref{eq:accept} takes $O(\text{nnz}(\mathbf{A}) \cdot \log M)$ time.
Therefore, the overall training time per epoch is $O(\text{nnz}(\mathbf{A}) \cdot \log M)$.
\end{proof}

While \method requires space proportional to the number of non-zero entries in the input matrix during training (Theorem~\ref{thm:space}), it gives a constant-size compression. (Theorem~\ref{thm:num_params}).
Refer to Appendix~\ref{app:proof} for the proofs of Theorems~\ref{thm:num_params} and \ref{thm:space}.
\begin{theorem}[Space Complexity of Outputs] \label{thm:num_params}
The number of model parameters of \method is $\Theta(1)$. 
\end{theorem}

\begin{theorem}[Space Complexity during Training] \label{thm:space}
\method requires $O(\text{nnz}(\mathbf{A})+ M)$ space during training.
\end{theorem}


\section{Extension to Tensors}
\label{sec:tensor}
We extend \method to sparse reorderable tensors. 
Theoretical analyses are available at Section~\change{3} of \cite{appendix}.
\vspace{-2mm}

\subsection{Model}
For a given $D$-order tensor $\mathcal{X} \in \mathbb{R}^{N_1 \times \cdots \times N_D}$ (we assume $N_1 \leq \cdots \leq N_D$ without loss of generality), we first compute $l_i = \ceil{\log_2 N_i}$ for each $i \in [D]$ and extend $\mathcal{X}$ to the tensor of size $2^{l_1} \times \cdots \times 2^{l_D}$ with additional entries filled with zeros.
As in Section \ref{sec:model:general}, for encoding, \method first recursively divides the extended tensor into \red{$2^D$} partitions $l_1$ times to obtain a partition has a size of $1 \times 2^{l_2 - l_1} \times \cdots \times 2^{l_D - l_1}$.
Then, it recursively divides the partition as it handles a $(D-1)$-order tensor. As a result, the $k$-th entry of the encoded sequence for the position $(i_1, \cdots i_D)$ is $(t_1(i_1, k), \cdots t_D(i_D, k))$, where $t_d$ is identical to Eq. \eqref{eq:position:non-square}. We provide an example of \method on a $3$-order tensor in Figure \ref{fig:tensor}.
%
After encoding, \method produces an approximation using the Kronecker product $\mathcal{K}_1 \otimes \cdots \otimes \mathcal{K}_{l_D}$ from $D$ linear layers for handling tensors of $D$ different sizes.

\vspace{-2mm}
\subsection{Training Strategies}

The main difference in training strategies lies in computing shingles. 
For a $D$-order tensor, $D$ random bijective functions are used, thus each mode index has $D-1$ shingles from those functions except for the function of the same mode.
In our extension, we match positions $i_1$ and $i_2$ as a pair only if the $D-1$ shingles of $i_1$ and those of index $j$ are all the same, and
\red{the orders of indices are randomly initialized.}
All other procedures are identical to the original \method.
\begin{figure}[t]
    \vspace{-2mm}
    \centering
    \includegraphics[width=0.95\linewidth]{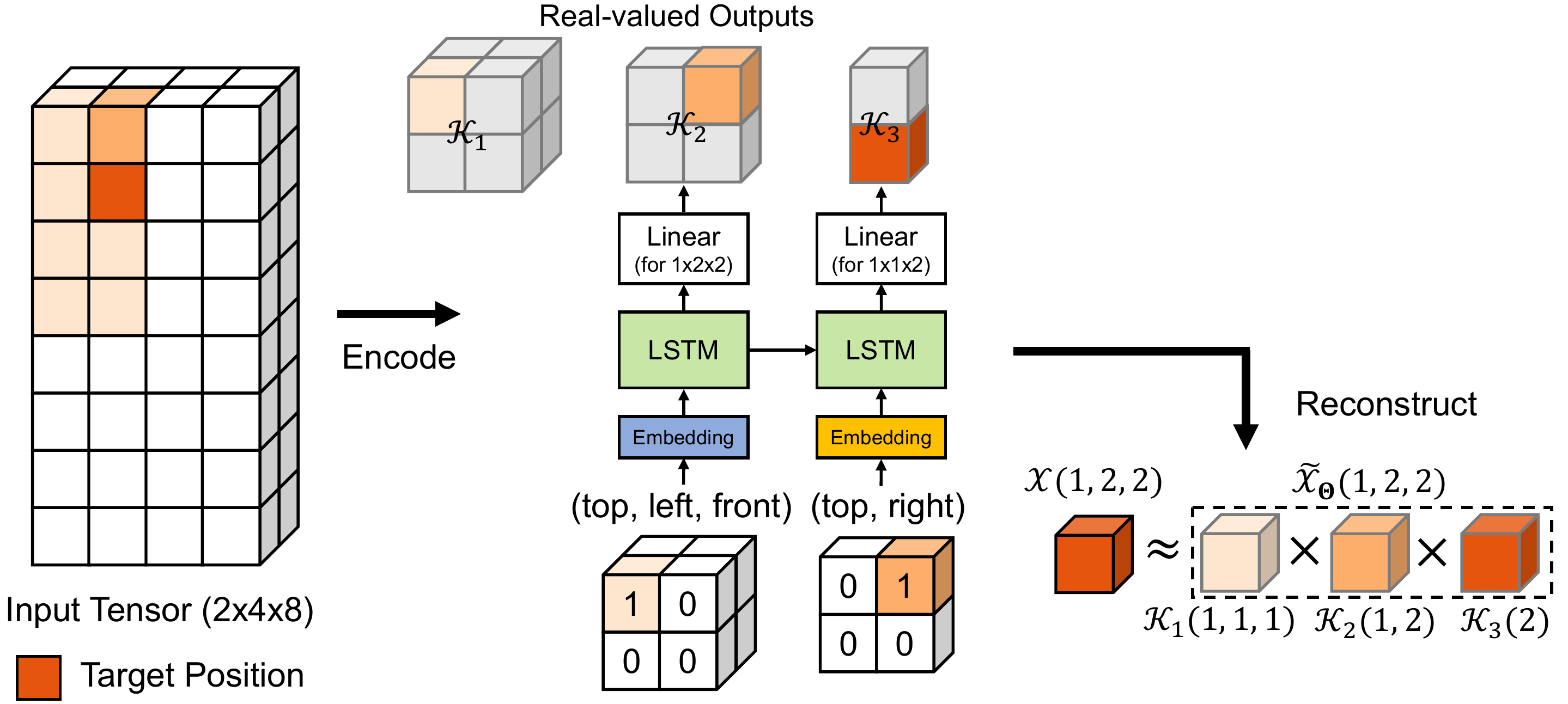}
    \caption{Example of \method on an $3$-order tensor $\mathcal{X}$.}
    \label{fig:tensor}
\end{figure}
\vspace{-1.5mm}
\section{Experiments}
\label{sec:experiments}
\noindent We conducted experiments to answer the following questions:
\vspace{-0.5mm}
\begin{enumerate}
	\item[Q1.] \textbf{Compression Performance:} 
	Does \method perform more compact and accurate compression than its best competitors?
	\item[Q2.] \textbf{Ablation Study:} 
	How effective are \method's training strategies for compression performance?
	\item[Q3.] \textbf{Scalability and Speed:} 
	Does \method scale linearly with the number of non-zero entries of input data?
	\item[Q4.] \textbf{Approximation Analysis:} 
	How does the approximation error of \method vary depending on entry values?
	{\item[Q5.] \textbf{Effects of Data Properties:} 
	How do the skewness, order, and dimension of the input 
	affect \kijung{the approximation error?}
	}
\end{enumerate}
\vspace{-0.5mm}
The answers for Q3, Q4, and Q5 are provided in Appendix~\ref{sec:supple:exp}.

\vspace{-2mm}
\subsection{Experiment Specifications}
\smallsection{Machine} We ran experiments for \method on a machine with $4$ RTX 2080Ti GPUs and 128GB RAM. 
For competitors, which do not require GPUs, we ran experiments on a desktop with a 3.8GHz AMD Ryzen 3900X CPU and 128GB RAM.
\kijung{\textbf{Note that outputs and compression ratios 
do not depend on machine specifications.}}

 \smallsection{Datasets} 
\blue{We used six real-world matrices and four real-world tensors listed in Table \ref{tab:datasets}. 
All the datasets are weighted (i.e., non-binary matrices and tensors) except for the \textt{email} and \textt{threads} datasets.}
Detailed semantics \kijung{and structural properties} of the datasets are provided in Table~3 \kijung{and Table~7} of \cite{appendix}\kijung{, respectively.}

\smallsection{Competitors} 
For matrices, we compared \method with \kronfit~\cite{leskovec2010kronecker}, T-SVD (truncated SVD), CMD~\cite{sun2007less}, ACCAMS~\cite{beutel2015accams}, CUR~\cite{drineas2006fast}, and bCCAMS~\cite{beutel2015accams}. 
In order to compress matrices of any size, 
we extended \kronfit so that it (a) fits a non-square seed matrix, (b) permutes rows and columns separately, and (c) aims to minimize the approximation error in Problem~\ref{prob}.
\blue{We did not consider methods designed for unipartite and/or unweighted graphs (e.g., \cite{lee2020ssumm,lefevre2010grass,riondato2017graph}) as competitors since they are not applicable to most of the datasets.}
For tensors, we compared \method with CP~\cite{bader2008efficient} and Tucker~\cite{kolda2008scalable} decompositions and CSF~\cite{smith2015tensor}, which is lossless. 
\red{The competitors are described in Section~\ref{sec:related}, and see \cite{appendix} for implementation details.}

\smallsection{Experimental Setup}
We trained \method and its competitors 
under the following stopping condition with the patience of $100$ epochs: $\frac{\mathcal{E}_\text{min} - \mathcal{E}_\text{curr}}{\mathcal{E}_\text{min}} < 10^{-5}$,
%
%
where $\mathcal{E}_\text{min}$ is the lowest approximation error so far, 
and $\mathcal{E}_\text{curr}$ is the current approximation error.
For all experiments, we set $T_p$ in Algorithm~\ref{algo:neukron} to $2$, and set $\gamma$ in Eq.~\eqref{eq:accept} to $10$, after a preliminary study (see Section~\change{6} of \cite{appendix}). 
\method was trained by Adam optimizer whose learning rate was set to $10^{-3}$ for the \textt{email} and \textt{threads} datasets, and $10^{-2}$ for the others. 
Unless otherwise stated, we set the hidden dimension $h$ to $30$ in the \textt{email}, \textt{nyc}, and \textt{tky} datasets and to $60$ in the \textt{kasandr}, \textt{nips}, and \textt{threads} datasets. 
For the other datasets, we set $h$ to $90$.
We ran all experiments $5$ times with different random seeds and reported the average error.
\red{The setups for the competitors are depicted in \cite{appendix}.} 
\vspace{-2mm}

\begin{table}[t]
    \vspace{-2mm}
    \centering
    \caption{Real-world datasets used in the paper. All datasets are publicly available, and links to them are available in \cite{appendix}.} 
    \label{tab:datasets}
    \scalebox{0.75}{
    \begin{tabular}{c|r|r|r}
        \toprule
        \textbf{Type} &\textbf{Name} & \textbf{Size} & \textbf{\# of non-zeros} \\
        \midrule
        \multirow{6}{*}{Matrix} & \textt{email} 
        & $1,005 \times 25,919$ & $92,159$ \\
        & \textt{nyc} 
        & $1,083 \times 38,333$ & $91,024$ \\
        & \textt{tky} 
        & $2,293 \times 61,858$ & $211,955$ \\
        & \textt{kasandr} 
        & $414,520 \times 503,702$ & $903,366$ \\
        & \textt{threads} 
        & $176,445 \times 595,778$ & $1,457,727$ \\
        & \textt{twitch} 
        & $790,100 \times 15,524,309$ & $234,422,289$ \\
        \midrule
        \multirow{4}{*}{Tensor} & \textt{nips} 
        & $2,482 \times 2,862 \times 14,036$ & $3,101,609$ \\
        & \textt{4-gram} 
        & $48K \times 54K \times 55K \times 58K$ & $7,495,550$ \\
        & \textt{3-gram} 
        & $88K \times 100K \times 110K$ & $9,778,281$ \\
        & \textt{enron} 
        & $5,699 \times 6,066 \times 244K$ & $31,312,375$ \\
        \bottomrule
    \end{tabular}
    }
\end{table}

\begin{figure*}[ht]
    \centering
    \vspace{-2mm}
    \includegraphics[width=0.73\linewidth]{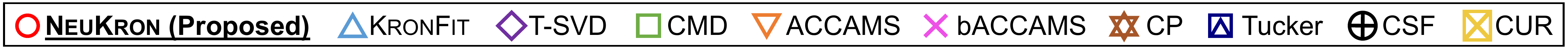} \\
    \vspace{-1.5mm}
    \subfigure[\textt{kasandr}]{
        \includegraphics[width=0.17\linewidth]{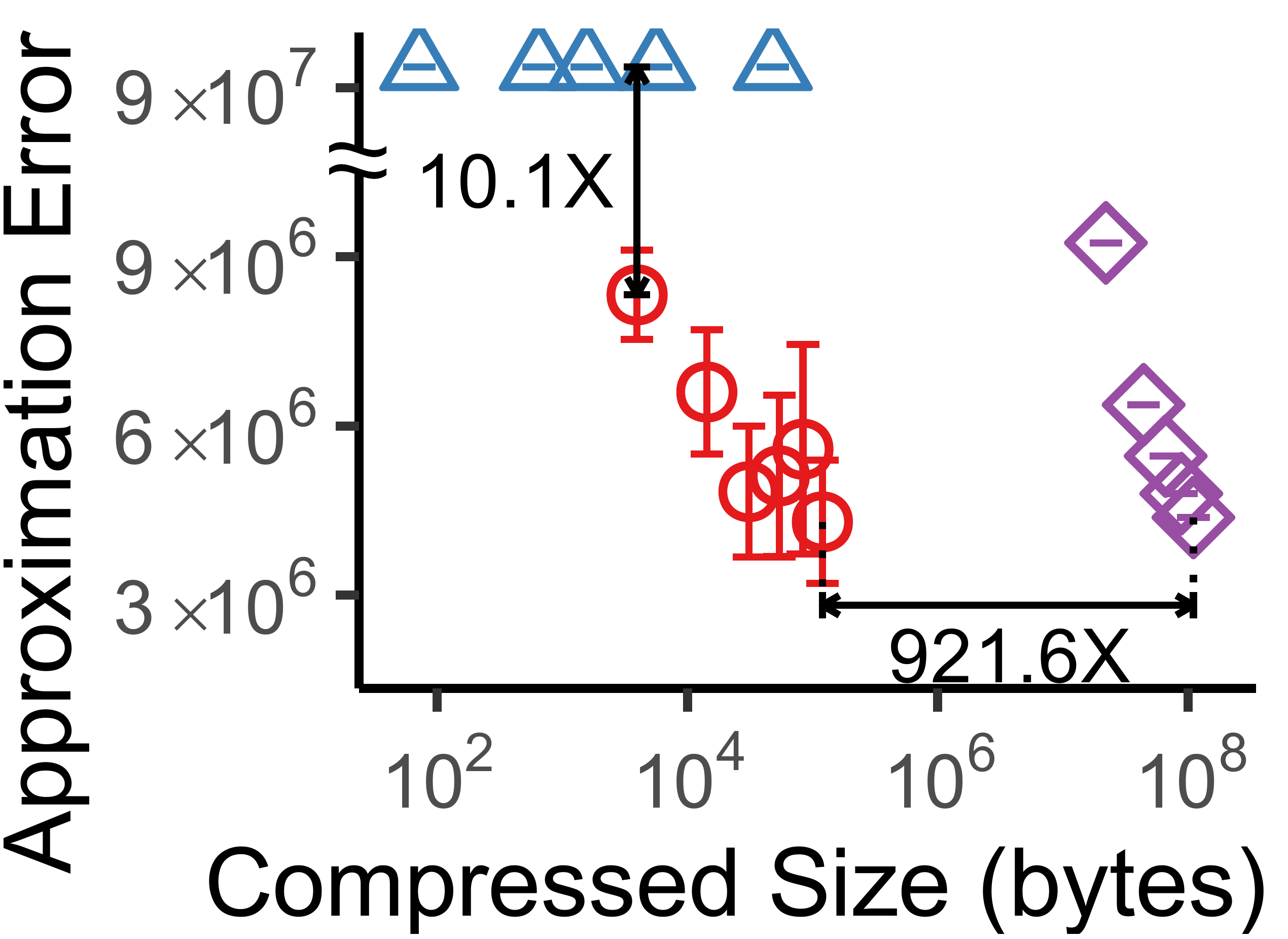}
    }
    \subfigure[\textt{twitch}]{
        \includegraphics[width=0.17\linewidth]{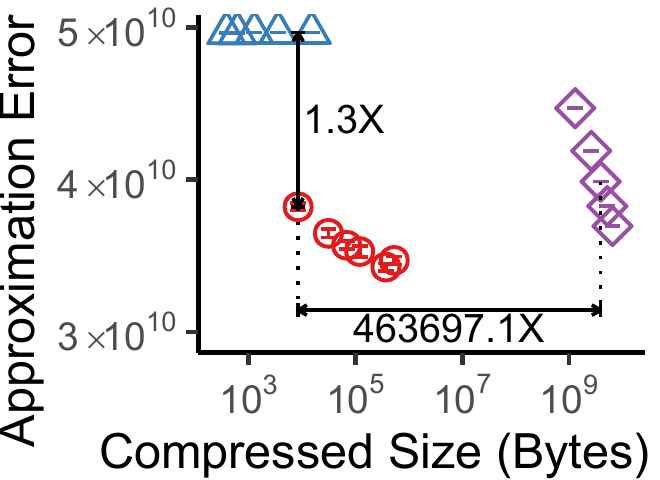}
    }
    \subfigure[\textt{email}]{
        \includegraphics[width=0.17\linewidth]{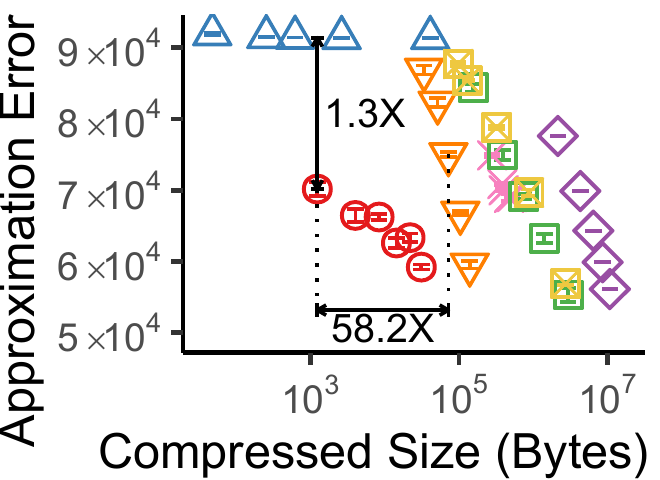}
    }
    \subfigure[\textt{nyc}]{
        \includegraphics[width=0.17\linewidth]{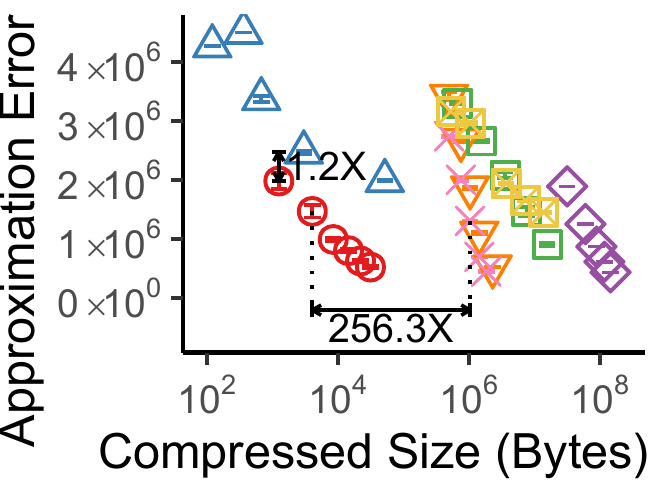}
    }
    \subfigure[\textt{tky}]{
        \includegraphics[width=0.17\linewidth]{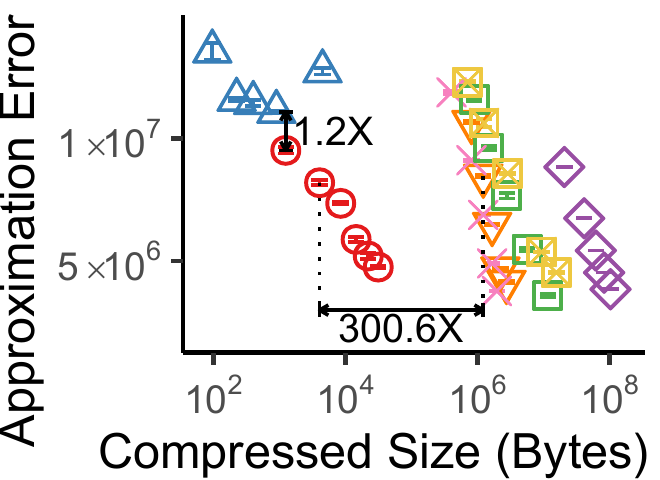}
    } \\
    \vspace{-3mm}
    \subfigure[\textt{threads}]{
        \includegraphics[width=0.17\linewidth]{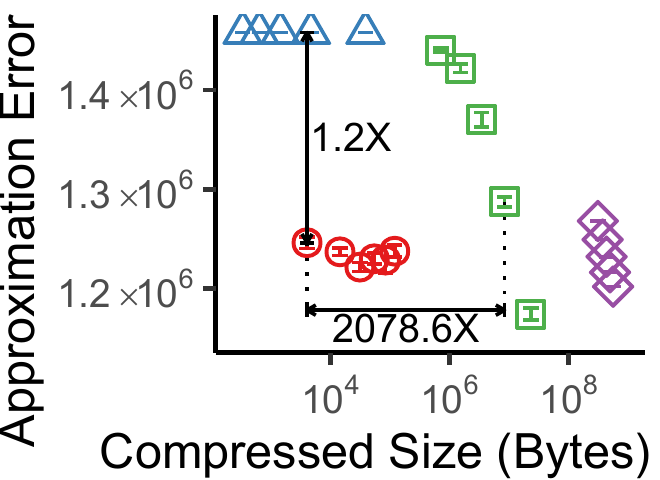}
    }
    \subfigure[\textt{nips}]{
        \includegraphics[width=0.17\linewidth]{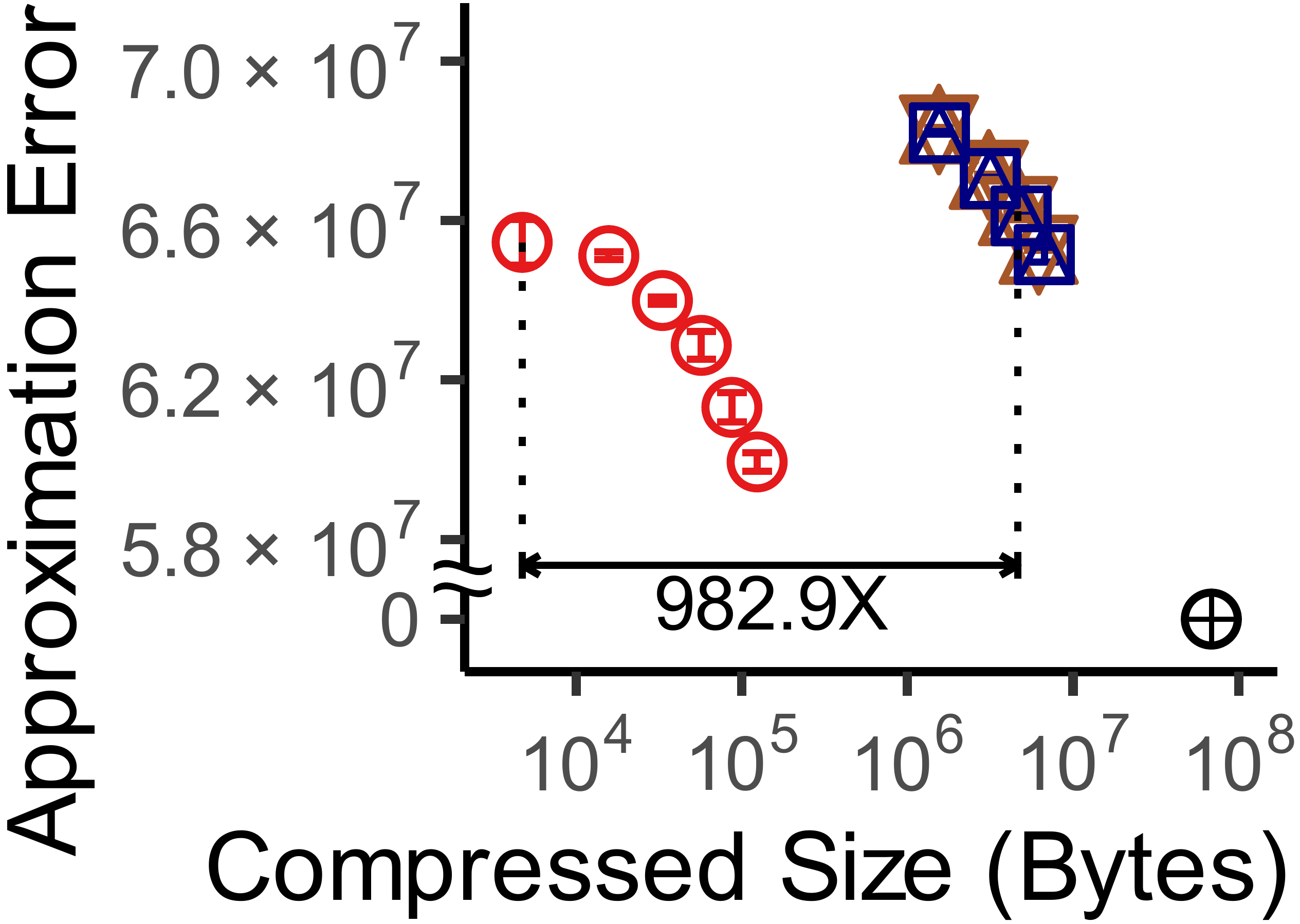}
    }
    \subfigure[\textt{enron}]{
        \includegraphics[width=0.17\linewidth]{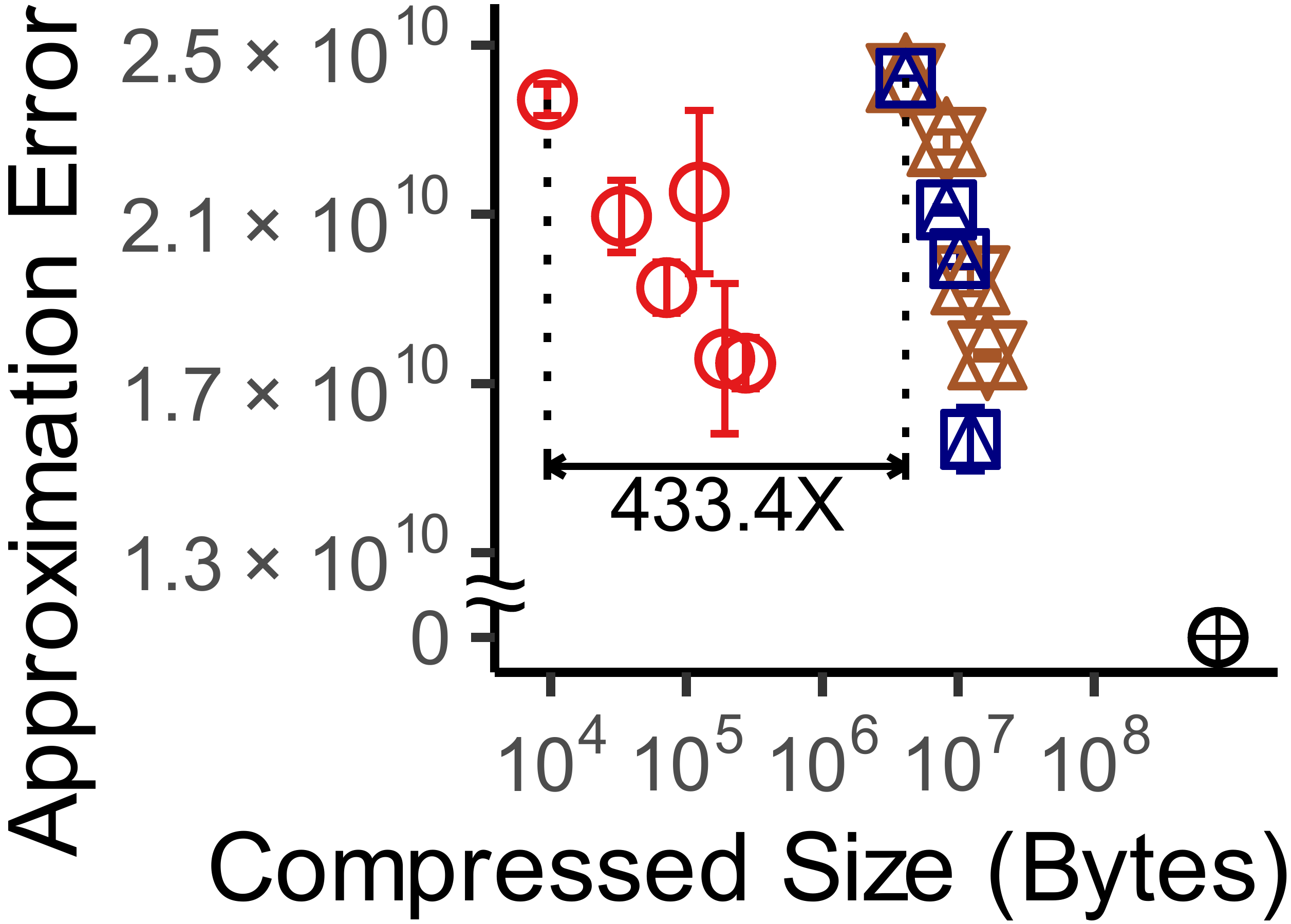}
    }
    \subfigure[\textt{3-gram}]{
        \includegraphics[width=0.17\linewidth]{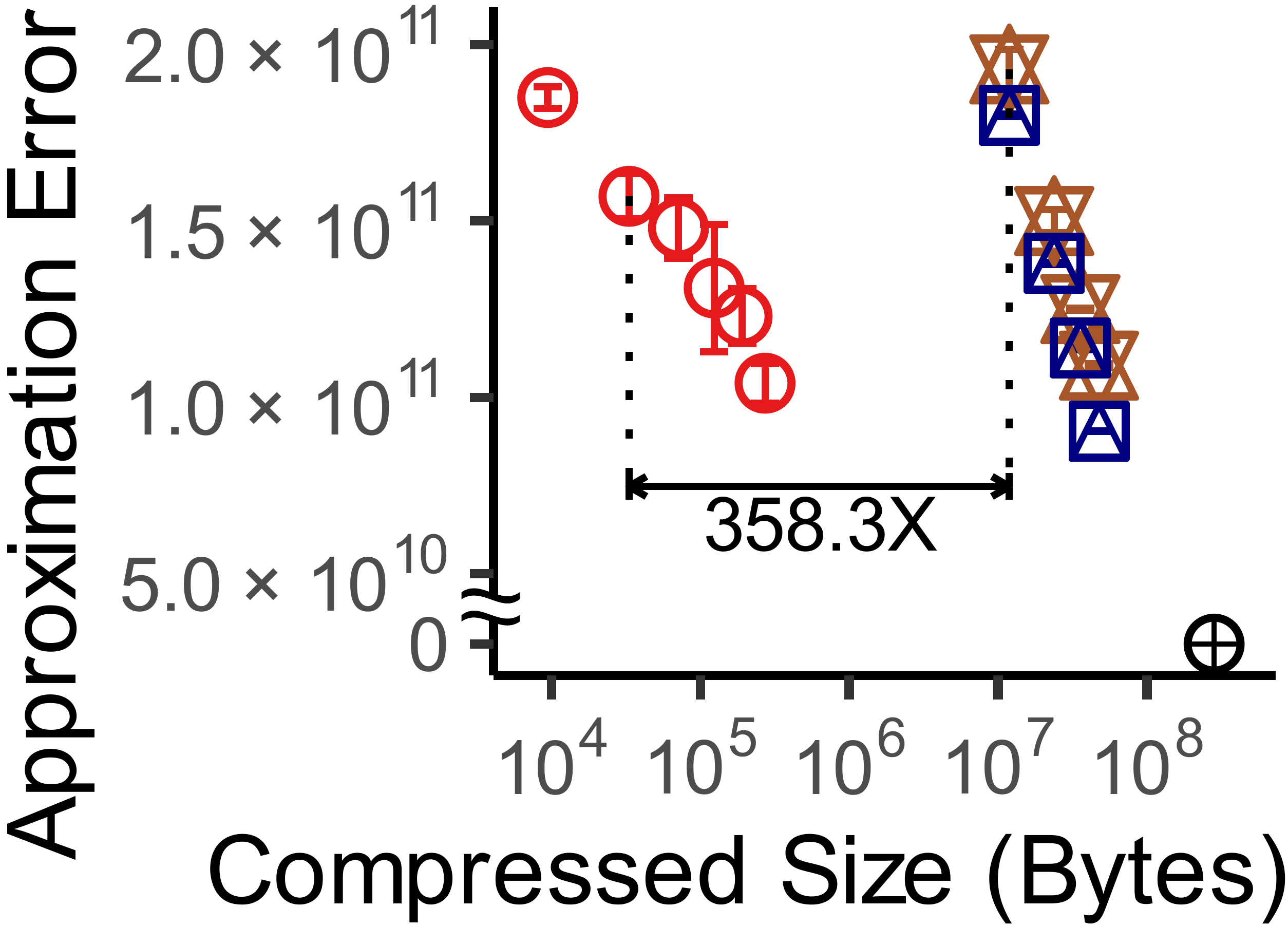}
    }
    \subfigure[\textt{4-gram}]{
        \includegraphics[width=0.17\linewidth]{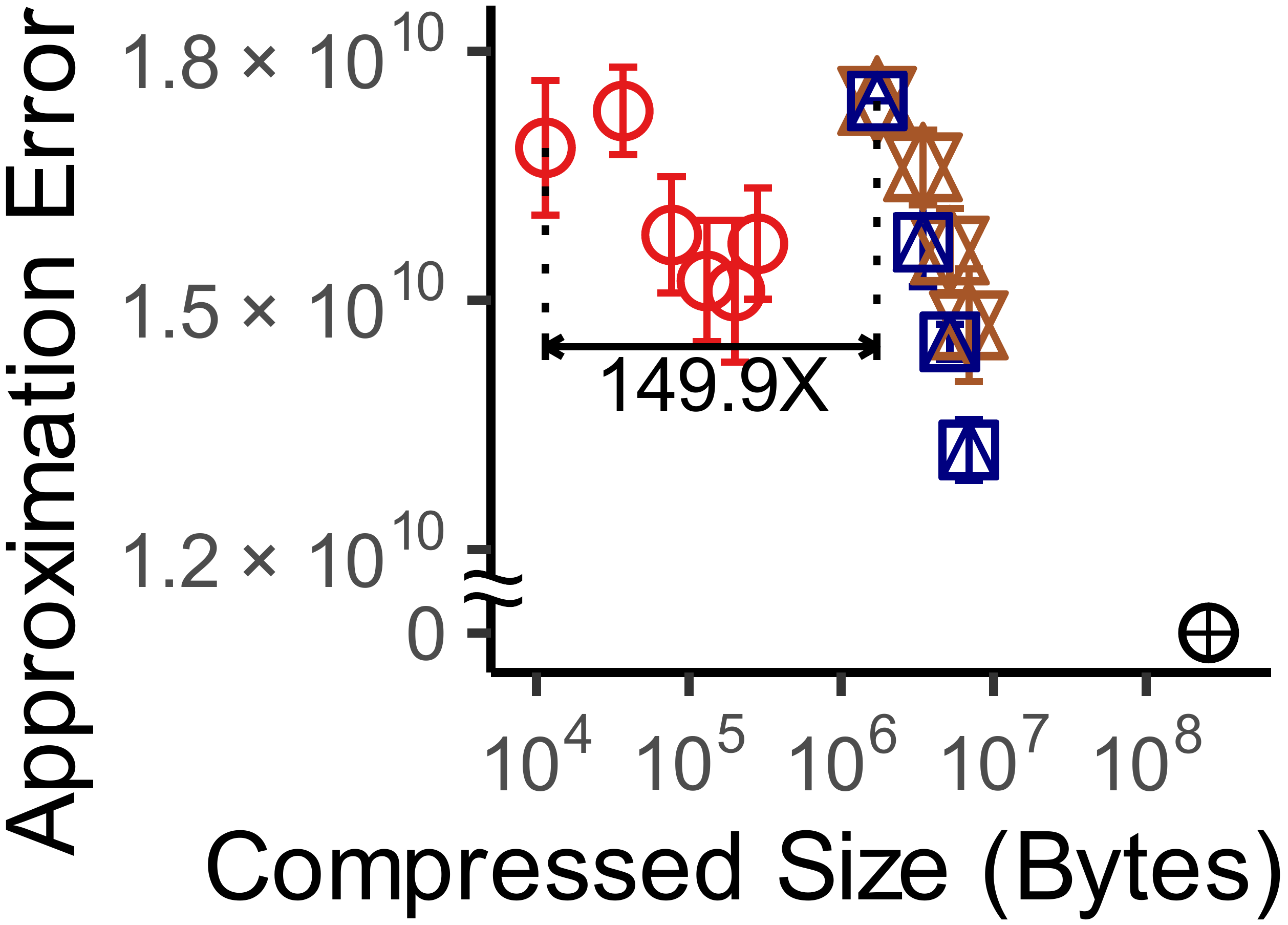}
    } \\
    \vspace{-2mm}
    \caption{\underline{\smash{\method provides concise and accurate compressions.}} The outputs of \method are up to five orders of magnitude smaller than those of the competitors when the approximation errors in them are similar.
    When the sizes of the outputs are similar, the approximation error was up to $10.1\times$ smaller in the outputs of \method than those in the competitors. 
    ACCAMS, bACCAMS, CUR, and CMD ran out of memory in some datasets, and their results do not appear in the corresponding plots. 
    \blue{Note that the errors of \kronfit do not always decrease as the number of parameters increases, as previously reported in \cite{leskovec2010kronecker}.}
    }
    \label{fig:compression}
\end{figure*}

\subsection{Q1. Compression Performance} \label{sec:exp:model}
We compared the (a) size in bytes\footnote{In our implementation, each floating-point number took $4$ bytes.} %
and (b) approximation error of the compressed output obtained by the considered algorithms.
We varied the hidden dimension $h$ of \method from $5$ to $30$ for the \textt{email}, \textt{nyc}, and \textt{tky} datasets and from $10$ to $60$ for the \textt{kasandr}, \textt{nips} and \textt{threads} datasets. For the others, we varied $h$ from $15$ to $90$. 
Similarly, we varied the hyperparameters of each competitor as
to reveal its trade-off between the size and error \blue{(refer to \cite{appendix})}.

%

\textbf{For all datasets, \method achieved the best trade-off between the approximation error and the compressed size.} As seen in Figure \ref{fig:compression}, the size was up to \textbf{five orders of magnitude smaller} in \method than in the competitors when their errors were similar. 
The error was also up to \textbf{10.1$\times$ smaller} in \method than in the competitors when the outputs were of similar size.
Note that the errors of \kronfit do not always decrease as the number of parameters increases, as previously reported in \cite{leskovec2010kronecker}. 

\smallsection{Performance on non-reorderable data}
\method can also be applied to non-reorderable data if the mapping between the original and new orders of rows and columns are stored additionally.
Even when we assume that the datasets are non-reorderable and consider the extra cost, \method gives by far the best trade-off between size and approximation error, as shown in Figure~\ref{fig:appdix:trade-off}.

\begin{figure}[t]
    \centering
    \includegraphics[width=\linewidth]{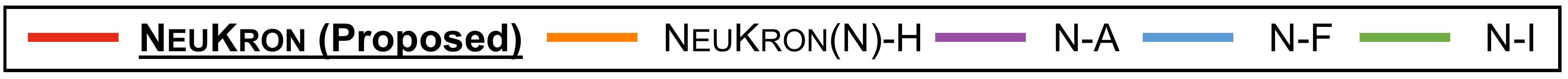} \\
    \vspace{-1mm}
    \subfigure[Effect of min-hashing and the auto-regressive architecture]{
        \includegraphics[trim=0 12.5 0 0,clip, width=0.475\linewidth]{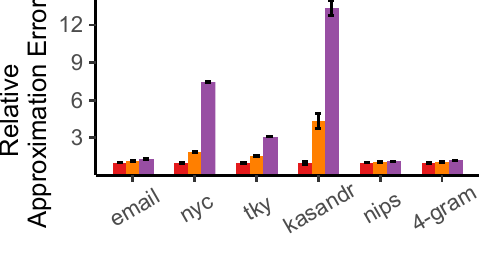}
        \label{fig:effect_init}}
    \subfigure[Effect of $q$ and initialization]{
        \includegraphics[trim=0 12.5 0 0,clip, width=0.475\linewidth]{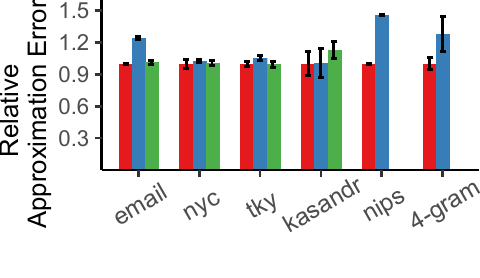}
        \label{fig:effect_T}
    }
    \vspace{-2mm}
    \caption{\label{fig:ablation}\underline{\smash{Effectiveness of the components of \method.}}
    We report the approximation errors of variants relative to that of \method.
    Results of \method-I on tensors are omitted since, for tensors, \method also randomly initializes orders.}
    \vspace{-2mm}
\end{figure}

\begin{figure}[t]
    \vspace{-3mm}
    \centering
    \includegraphics[width=\linewidth]{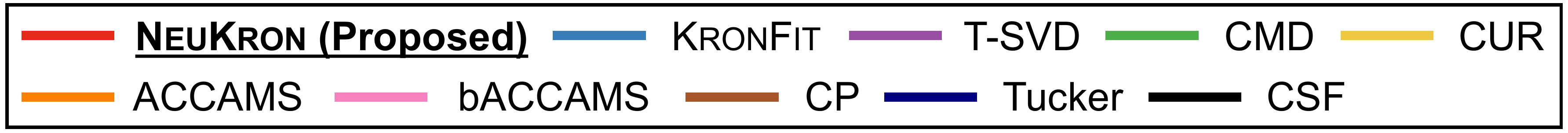} \\
    \includegraphics[width=\linewidth]{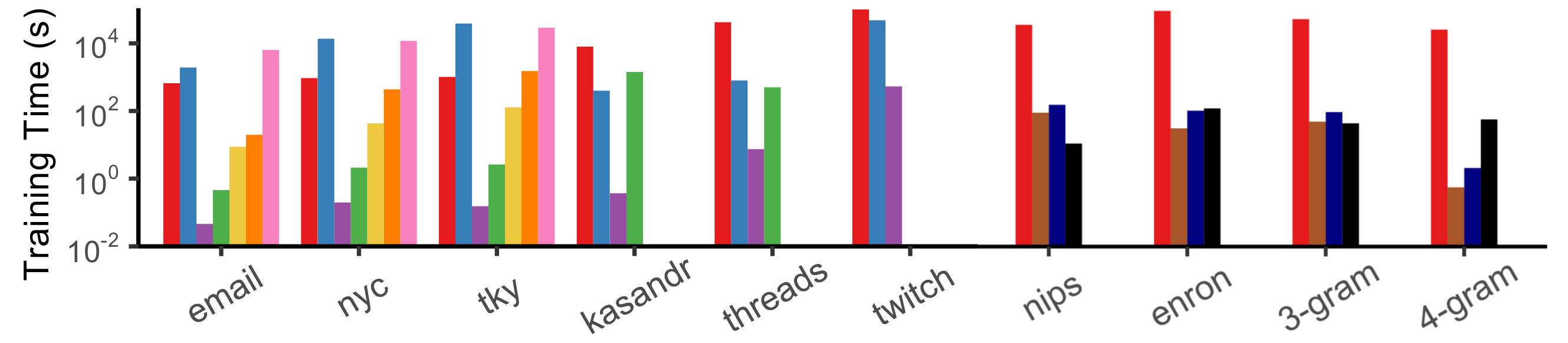}
    \vspace{-5mm}
    \caption{\label{fig:trainingtime}\kijung{\underline{\smash{Training time of \method and the competitors.}} Note that \method requires much longer training time than many competitors, while it provides the best trade-off between space and accuracy (Figure \ref{fig:compression}). See Appendix \ref{sec:exp:scalability} for detailed hyperparameter settings for each method.}}
    \vspace{-1mm}
\end{figure}

\vspace{-2mm}
\subsection{Q2. Ablation Study}
\label{sec:exp:method}
\kijung{On the four smallest matrices and the two smallest tensors,} we demonstrate the effectiveness of the components of \method illustrated in Section~\ref{sec:method} by comparing it with the following variants: 
\vspace{-3mm}
\begin{enumerate}[leftmargin=*,label=(\alph*)]
    \item \method: the proposed method with all components. 
    \item \method-H (N-H): a variant that uniformly samples pairs of rows and columns without using min-\textbf{h}ashing. 
    \item \method-I (N-I): a variant that randomly \textbf{i}nitializes the orders of rows and columns without using the scheme in \cite{jung2020fast}.
    \item \method-F (N-F): a variant that \textbf{f}ixes $q$ to the sum of the squares of all entries in the input.
    \item \method-A (N-A): a variant without any \textbf{a}uto-regressive architecture.
    It only uses two learnable matrices $\mathbf{K}_{\text{square}} \in \mathbb{R}^{2\times2}$ and  $\mathbf{K}_{\text{rect}} \in \mathbb{R}^{1\times2}$, \red{as in \kronfit}, to compute $\kpower{\mathbf{K}_{\text{square}}}{l_{\text{row}}} \otimes \kpower{\mathbf{K}_{\text{rect}}}{(l_{\text{col}}-l_{\text{row}})}$ \red{for approximation}. 
    Similarly, it uses $D$ learnable tensors to approximate  $N$-order tensors.
\end{enumerate}

As seen in Figure \ref{fig:ablation}, \method outperformed \method-A and \method-H, which indicates that the auto-regressive architecture (i.e., LSTM) and the min-hashing technique are crucial to enhance the performance of \method.
Moreover, making $q$ learnable was effective 
especially on the \textt{email}, \textt{nips}, and \textt{4-gram} datasets.
For the order initialization, \method-I showed comparable or slightly poor performance than \method, implying that how the rows and columns are initialized can affect the compression quality. 

\smallsection{Extra Results}
For details results regarding Q3-Q5, refer to Appendix~\ref{sec:supple:exp}.
A training time comparison is available in Figure~\ref{fig:trainingtime}.

\section{Conclusion}
\label{sec:conclusion}
We focus on compressing sparse reorderable matrices and tensors into a constant-size space.
Our contributions are three-fold:
\begin{itemize}[leftmargin=*]
    \item \textbf{Compact yet Accurate Method:} We proposed \method, which lossily compresses matrices and fixed-order tensors of any size with a constant number of parameters.
   \method provided an output that is up to five orders of magnitude smaller than the outputs of the best competitors when the approximation errors in them are similar (Figure~\ref{fig:compression}).  
    \item \textbf{Theoretical Analysis:} We carefully designed \method so that, for sparse reorderable matrices and fixed-order tensors of any size (a) the number of parameters is constant, (b) each entry is approximated in a logarithmic time, and (c) the model is fitted to an input in time proportional to the number of non-zero entries in it. We proved these desirable properties (Theorems~\ref{theorem:query}-\ref{thm:num_params}).
    \item \textbf{Extensive Experiments:}
    Through extensive experiments on $10$ real-world datasets, we demonstrated the effectiveness and scalability of \method (Figures~\ref{fig:compression} and \ref{fig:scalability}).
    Especially, we showed that \method successfully compressed a matrix with up to $230$ millions of non-zero entries.
\end{itemize}
\vspace{-1mm}
\smallsection{Reproducibility}
The code and datasets used are available at~\supplelink.

\subsection*{Acknowledgements}
This work was supported by Samsung Electronics Co., Ltd., National Research Foundation of Korea (NRF) grant funded by the Korea government (MSIT) (No.2021R1C1C1008526), and  Institute of Information \& Communications Technology Planning \& Evaluation (IITP) grant funded by the Korea government (MSIT)  (No. 2022-0-00157,
Robust, Fair, Extensible Data-Centric Continual Learning) (No. 2019-0-00075, Artificial Intelligence Graduate School Program (KAIST)) (No.2021-0-02068, Artificial Intelligence Innovation Hub).

\bibliographystyle{ACM-Reference-Format}
\bibliography{bib}
\pagebreak

\appendix







\section{Additional Experimental Results}
\label{sec:supple:exp}
\begin{figure}[t]
    \vspace{-2mm}
    \centering
    \includegraphics[width=0.7\linewidth]{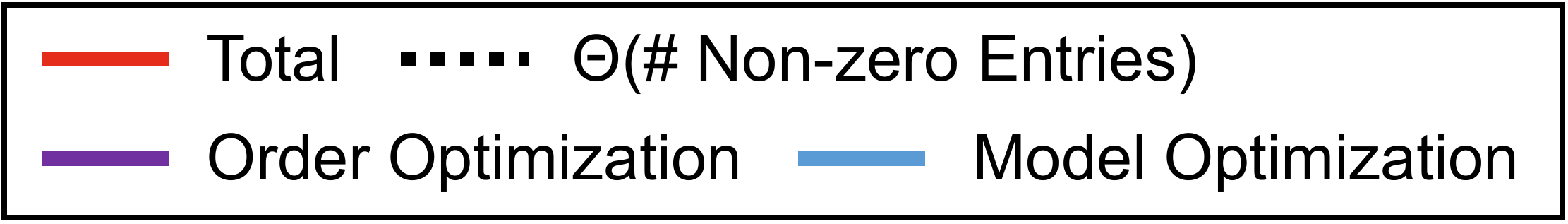}
    \subfigure[\textt{twitch}]{
        \centering
        \includegraphics[width=0.31\linewidth]{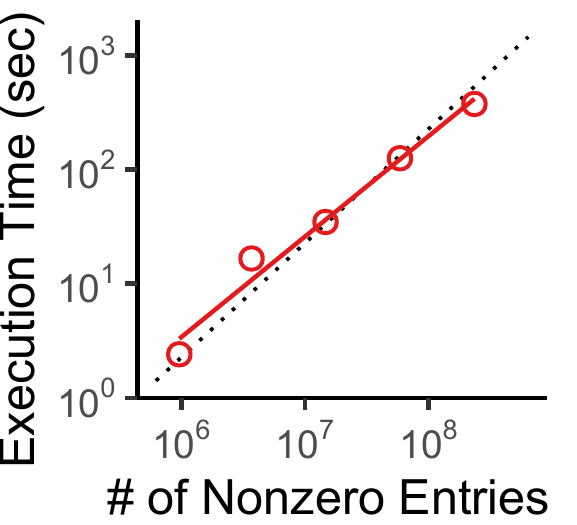}
        \includegraphics[width=0.31\linewidth]{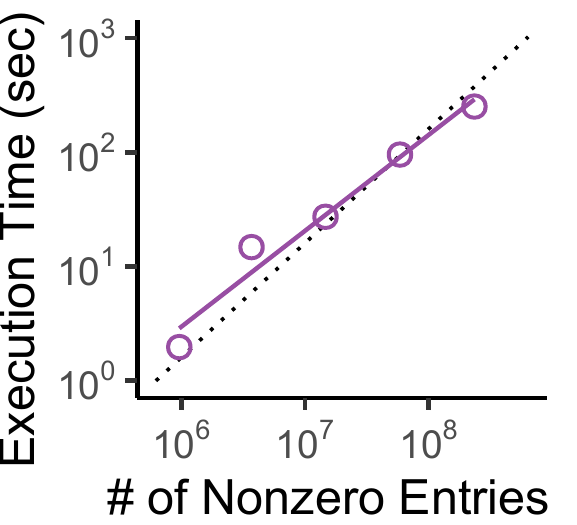}
        \includegraphics[width=0.31\linewidth]{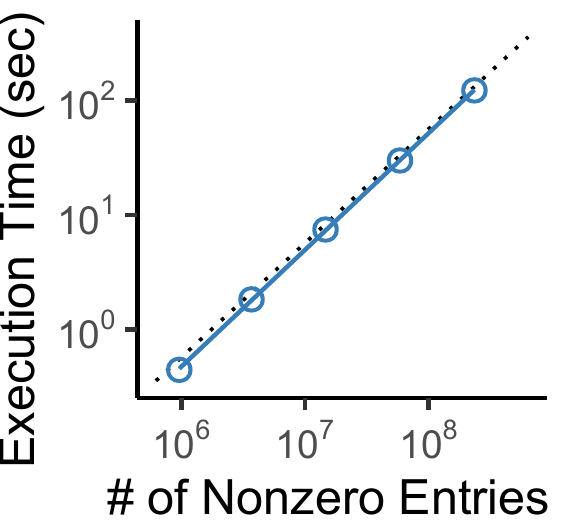}
    } \\
    \vspace{-2mm}
    \subfigure[\textt{threads}]{
        \centering
        \includegraphics[width=0.31\linewidth]{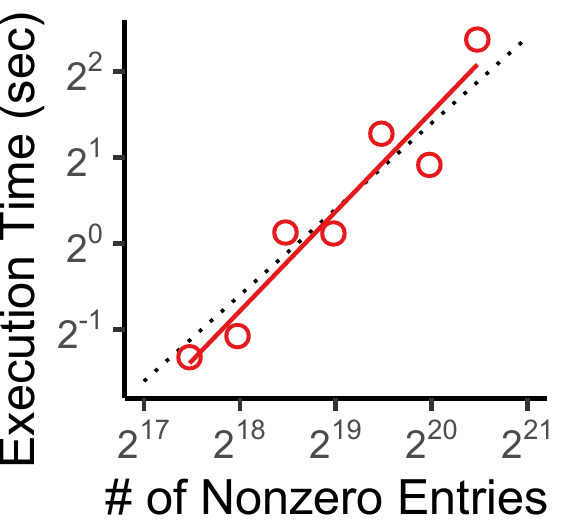}
        \includegraphics[width=0.31\linewidth]{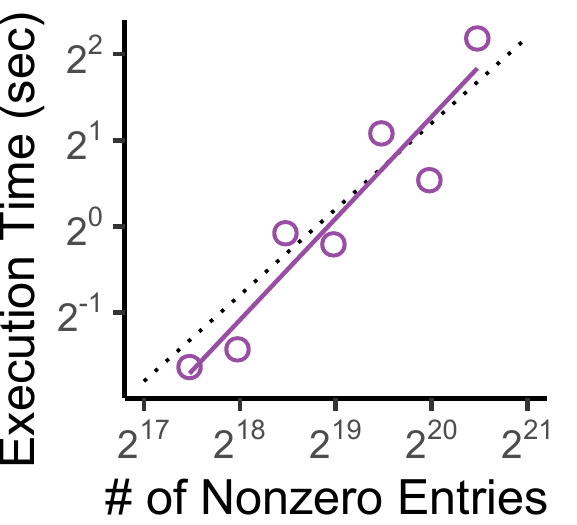}
        \includegraphics[width=0.31\linewidth]{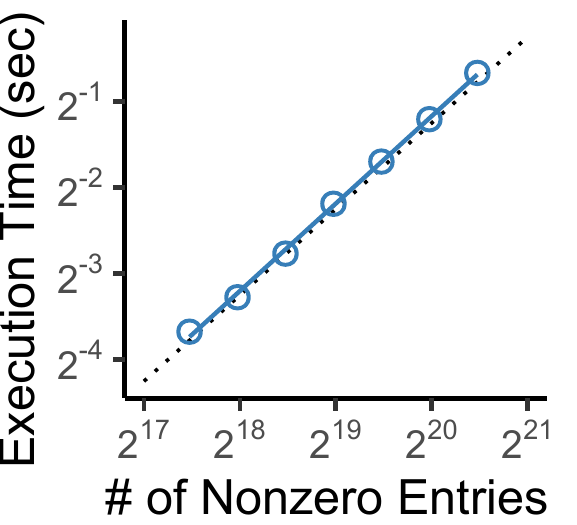}
    } \\
    \vspace{-2mm}
    \caption{\underline{\smash{The training process of \method is scalable.}} 
    Both model and order optimizations scale near-linearly with the number of non-zeros in the input. }
    \label{fig:scalability}
\end{figure}

\subsection{Q3. Scalability and Speed}
\label{sec:exp:scalability}
%

In order to evaluate the scalability of \method, we generated multiple matrices of various sizes from the \textt{threads} and \textt{twitch} datasets by tracking their evolutions over time.
In them, we measured the training time per epoch for model and order optimizations in addition to the total training time per epoch. The hidden dimension $h$ was fixed to $60$. 
%
As shown in Figure \ref{fig:scalability}, the individual and overall training processes of \method scaled \textbf{linearly with the number of non-zeros}, which is consistent with the theoretical results in Section~\ref{sec:method:analysis}. 
We further confirmed the linear scalability of \method on tensor datasets and in hidden dimensions in Section~8 of \cite{appendix}.

\kijung{We compared the training time of \method and the competitors in Figure~\ref{fig:trainingtime}. We followed the hyperparameter settings in Section~\ref{sec:exp:model}.
For \method, we reported the result with the smallest hidden dimensions that we considered. 
For all competitors except for KronFit, we reported their results when their approximation errors are closest to that of \method. For KronFit, we reported its result when its output size is closest to that of \method.}
Since our optimization problem is a mixed discrete-continuous optimization problem, which is notoriously difficult, the convergence of \method takes much longer than that of factorization-based methods.
While the convergence took long, the approximation error dropped rapidly in early iterations in most cases. 
\red{The detailed training curves are given in Figure~\change{2} of \cite{appendix}.}

\subsection{Q4. Approximation Analysis}
\label{sec:exp:approximation}
We analyzed how the approximation error by \method varies depending on the ground-truth value of approximated entries.
In each dataset, we grouped the approximated entries by log-binning of their ground-truth values,  
and for each group, we computed the root mean squared error (RMSE) of the approximation errors.
As seen in Figure \ref{fig:reconst}, RMSE tended to increase with respect to ground-truth entry values.
\blue{We also checked at most $1,000$ largest singular values of matrices obtained by \method and the two strongest competitors.}
For each method, we used the hyperparameter settings that led to the  least approximation error in Figures~\ref{fig:compression} and \ref{fig:appdix:trade-off}.
\blue{As seen in Figure~\ref{fig:singular_value}, the singular values obtained by \method were closest to the singular values of the input matrices.}

\begin{figure}[t]
    \centering
    \vspace{-2mm}
    \includegraphics[width=0.91\linewidth]{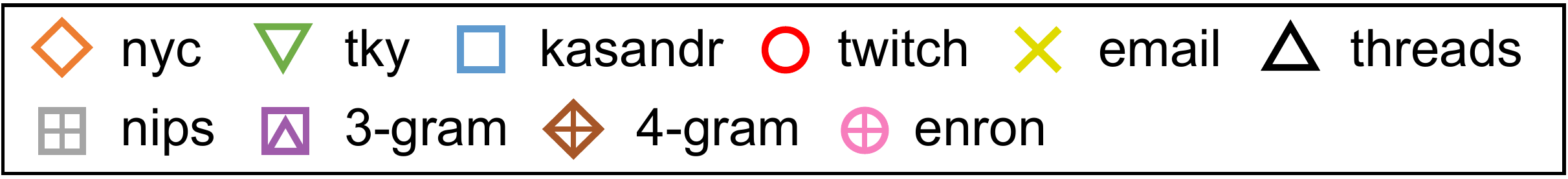}
    \vspace{-1mm}
    \subfigure[RMSE on sparse matrices]{
        \includegraphics[width=0.41\linewidth]{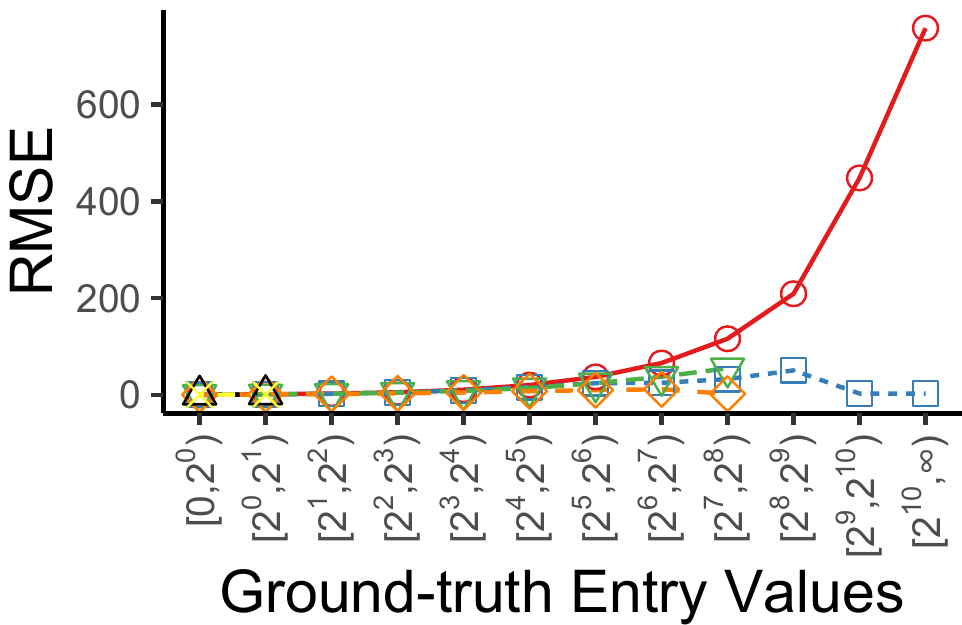} 
    }
    \subfigure[RMSE on sparse tensors]{
        \includegraphics[width=0.53\linewidth]{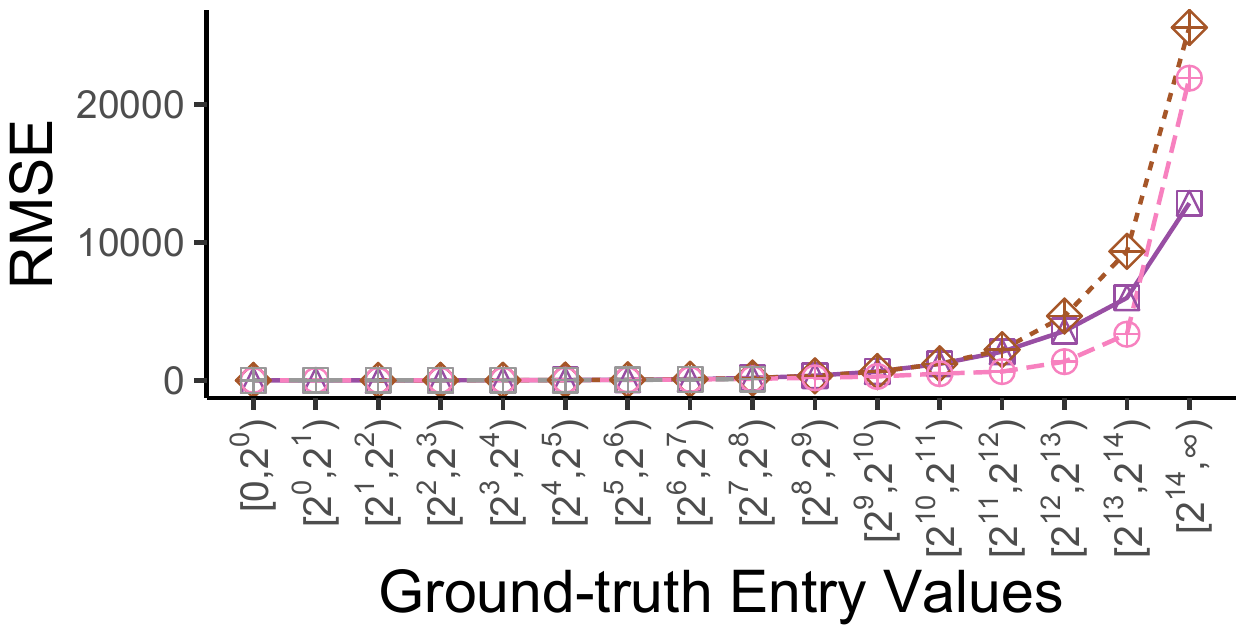} 
    }
    \\
    \vspace{-1mm}    
    \caption{\label{fig:reconst}\underline{\smash{Analysis of approximation errors of \method.}} The errors tend to increase with respect to the ground truth values of approximated entries. The $x$-axis is in the log scale.}
\end{figure}
\vspace{-2mm}
\begin{figure}[t]
    \centering
    \vspace{-2mm}
    \includegraphics[width=1\linewidth]{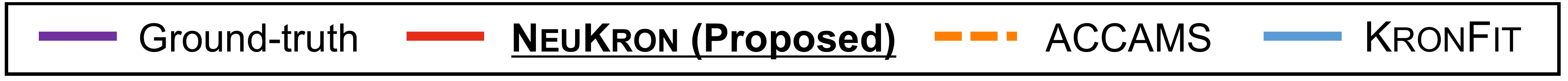} \\
    \vspace{-2mm}
    \subfigure[\textt{email}]{
        \includegraphics[width=0.33\linewidth]{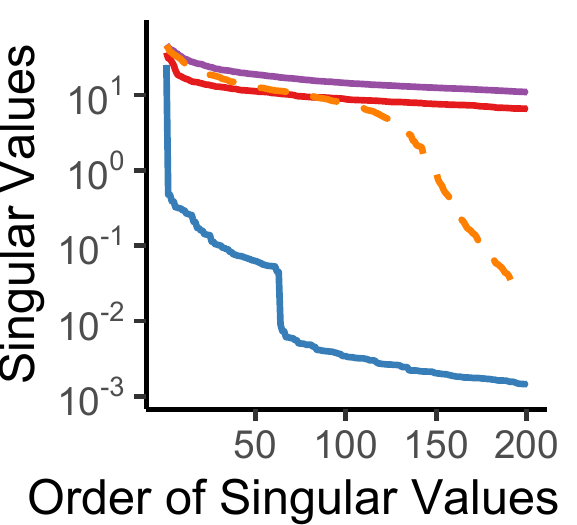}
    }
    \hspace{-3.7mm}
    \subfigure[\textt{nyc}]{
        \includegraphics[width=0.33\linewidth]{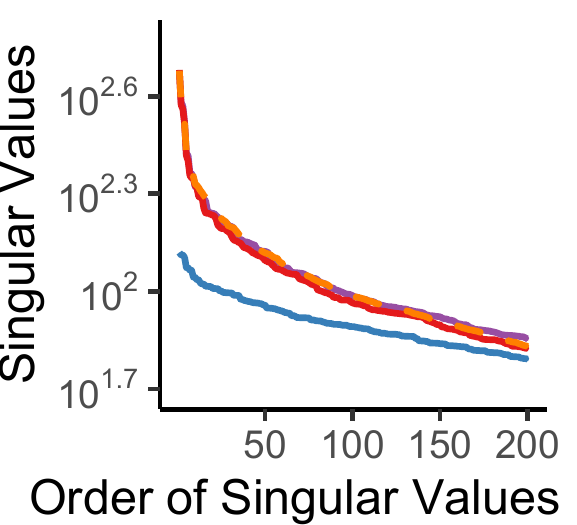}
    }
    \hspace{-3.8mm}
    \subfigure[\textt{tky}]{
        \includegraphics[width=0.33\linewidth]{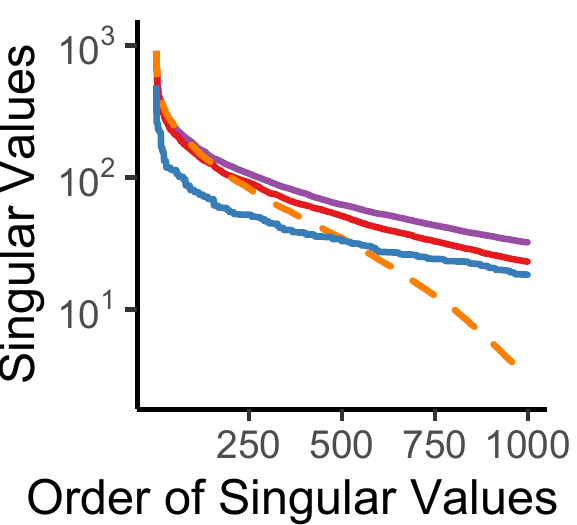}
    } \\
    \vspace{-2mm}
    \caption{\blue{\underline{\smash{\method preserves singular values well.}} The singular values of the matrix obtained by \method are closest to the ground-truth ones.
    We used the smallest datasets for this experiment since
    computing singular values requires approximating all entries, including zeros.
    }}
    \label{fig:singular_value}
\end{figure}

\begin{figure*}[t]
\vspace{-2mm}
\centering
\includegraphics[width=0.7\linewidth]{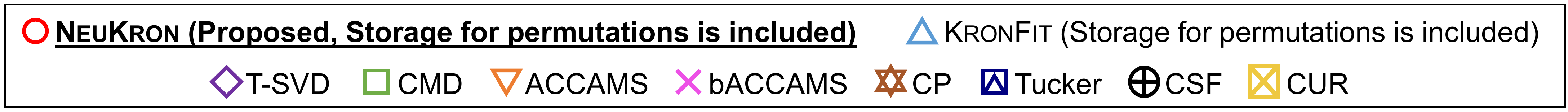} \\
\vspace{-2mm}
\subfigure[\textt{kasandr}]{
    \includegraphics[width=0.17\linewidth]{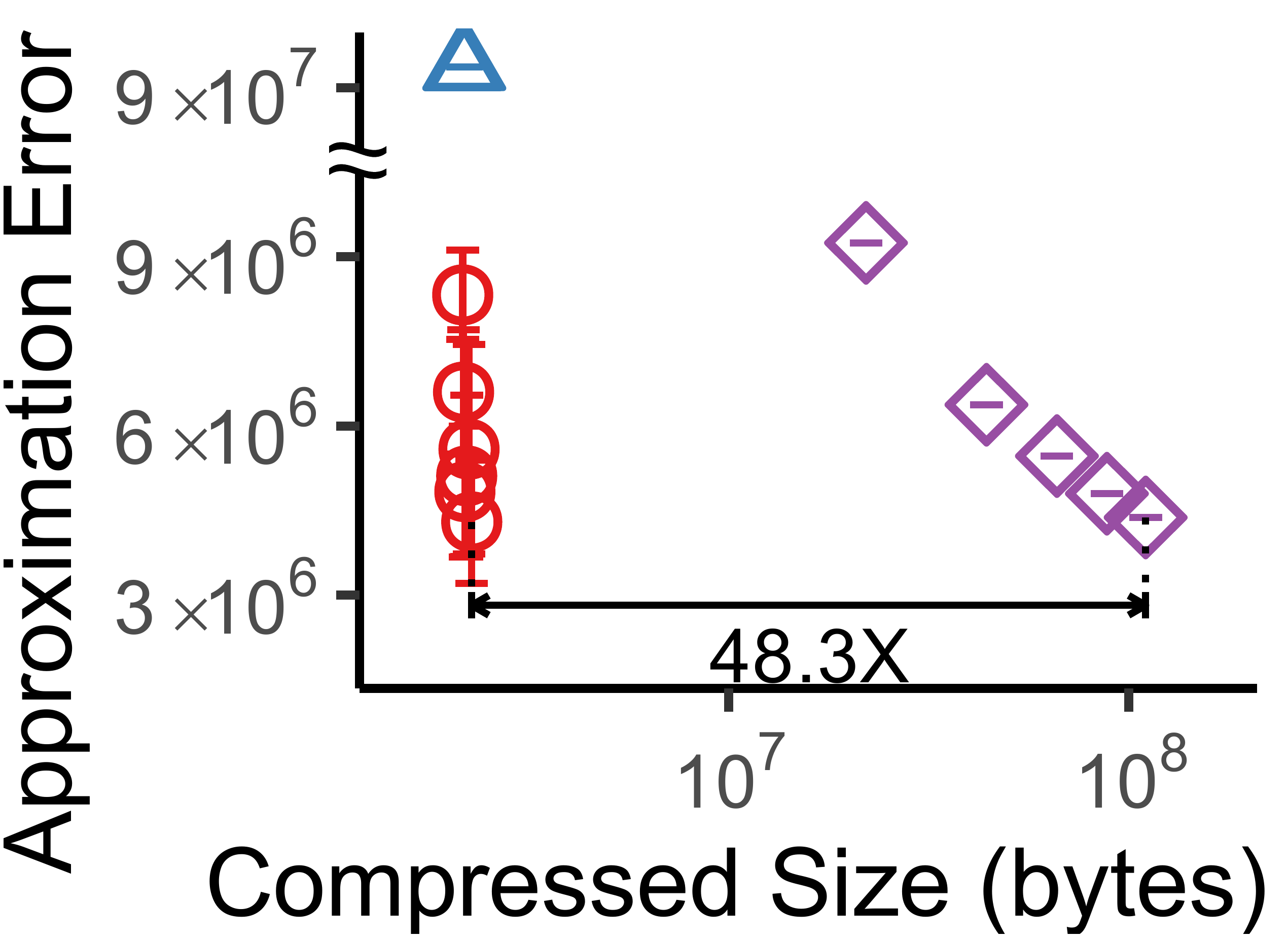}}
\subfigure[\textt{twitch}]{
    \includegraphics[width=0.17\linewidth]{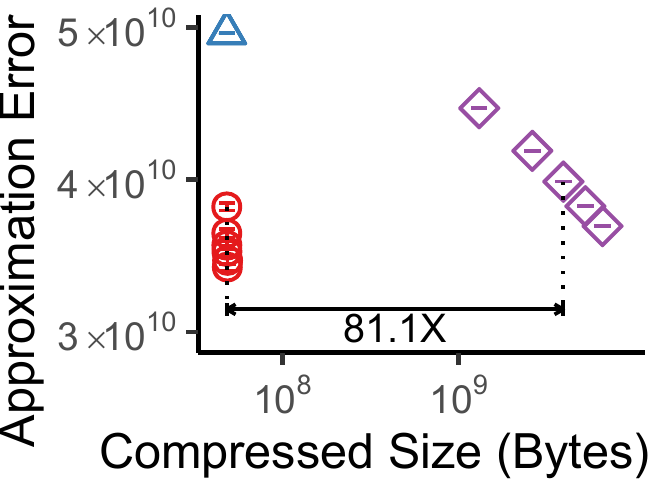}}
\subfigure[\textt{email}]{
    \includegraphics[width=0.17\linewidth]{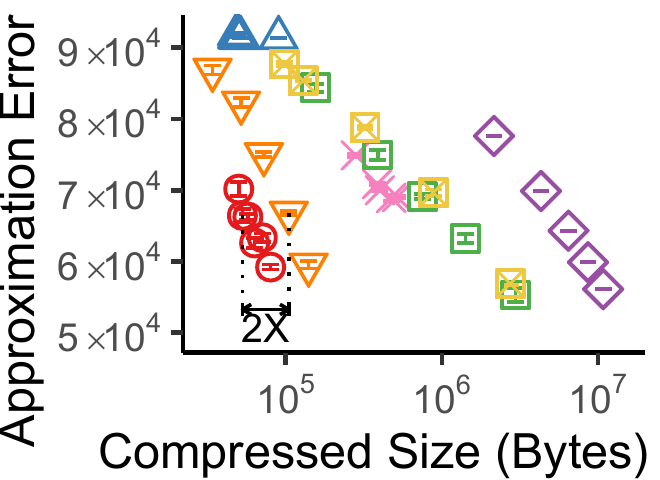}}
\subfigure[\textt{nyc}]{
    \includegraphics[width=0.17\linewidth]{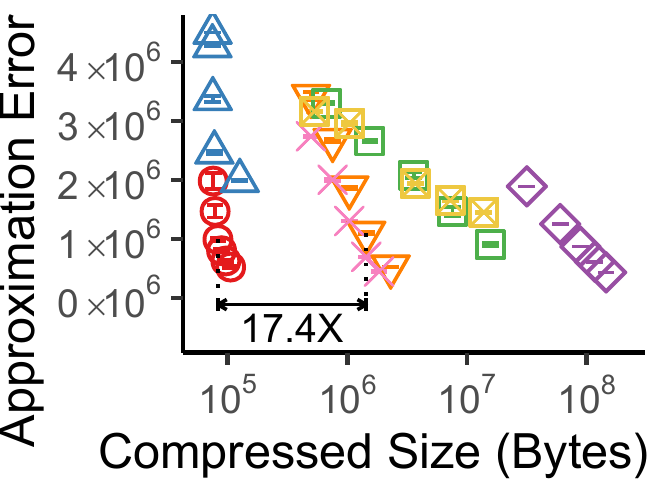}}
\subfigure[\textt{tky}]{
    \includegraphics[width=0.17\linewidth]{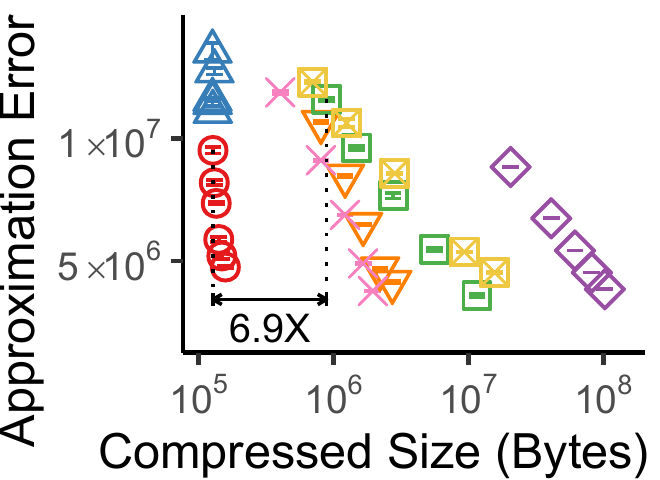}} \\
\vspace{-3mm}
\subfigure[\textt{threads}]{
    \includegraphics[width=0.17\linewidth]{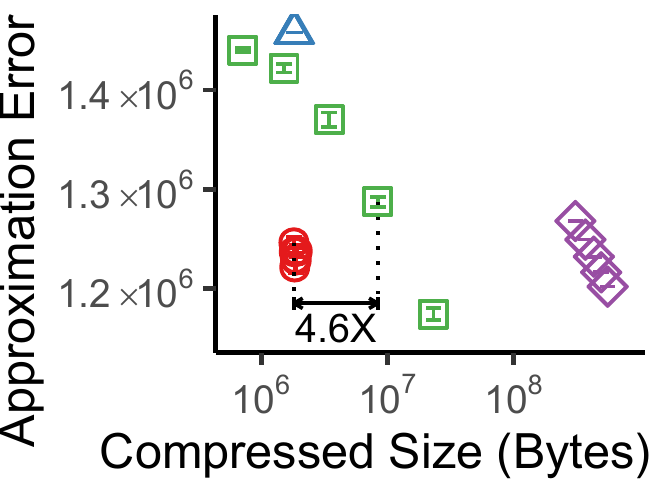}} 
\subfigure[\textt{nips}]{
    \includegraphics[width=0.17\linewidth]{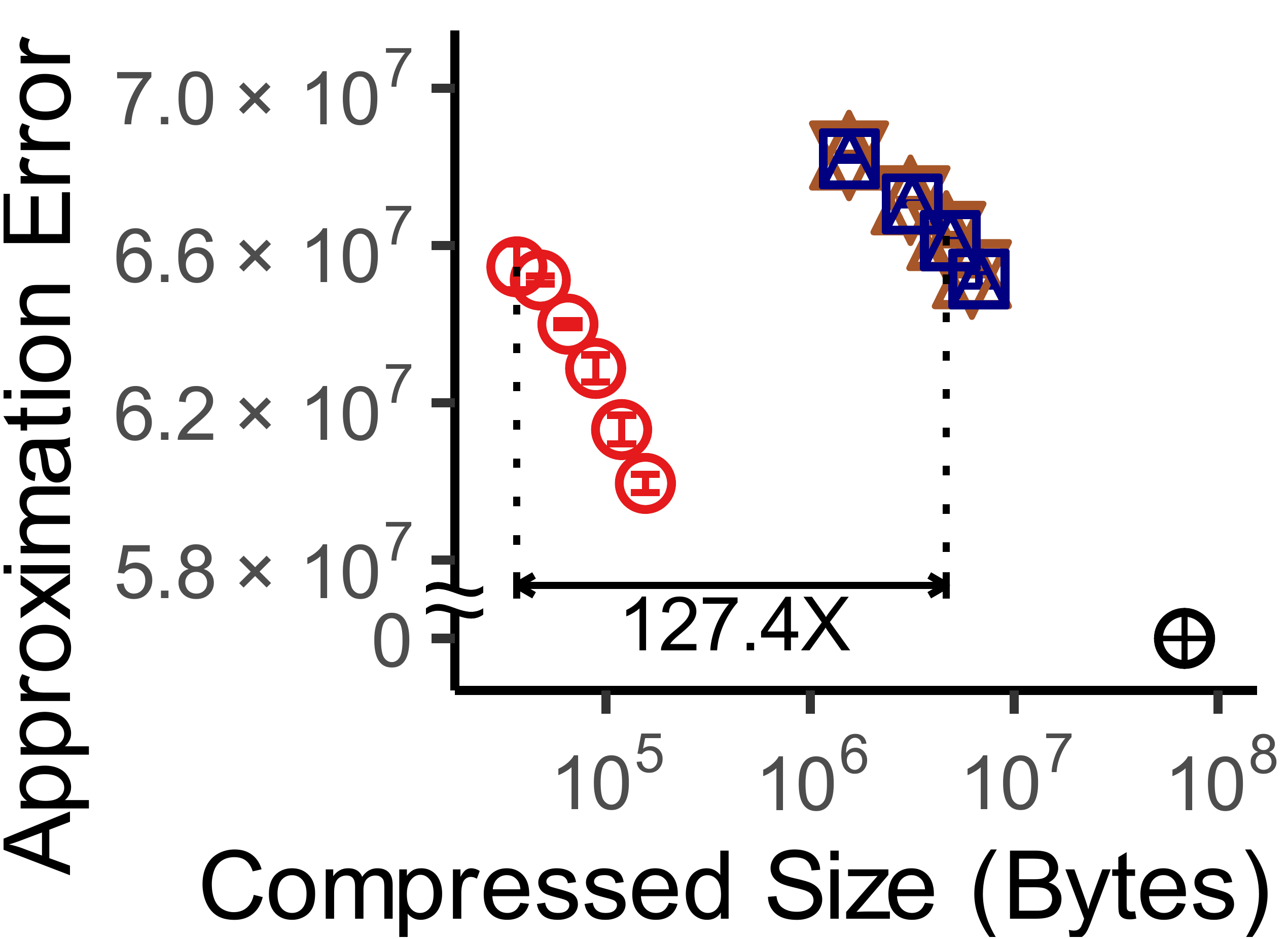}}
 \subfigure[\textt{enron}]{
    \includegraphics[width=0.17\linewidth]{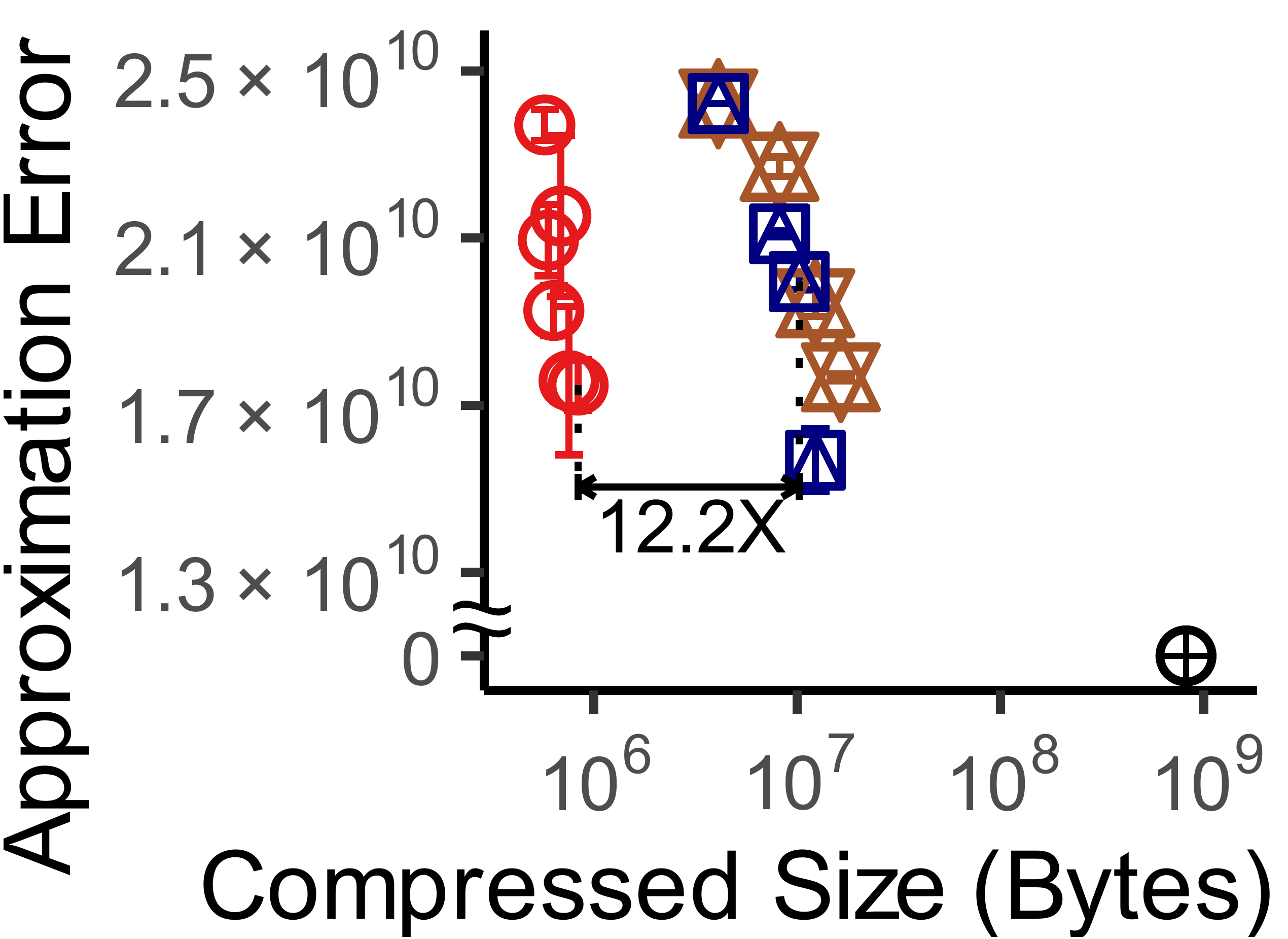}}
 \subfigure[\textt{3-gram}]{
    \includegraphics[width=0.17\linewidth]{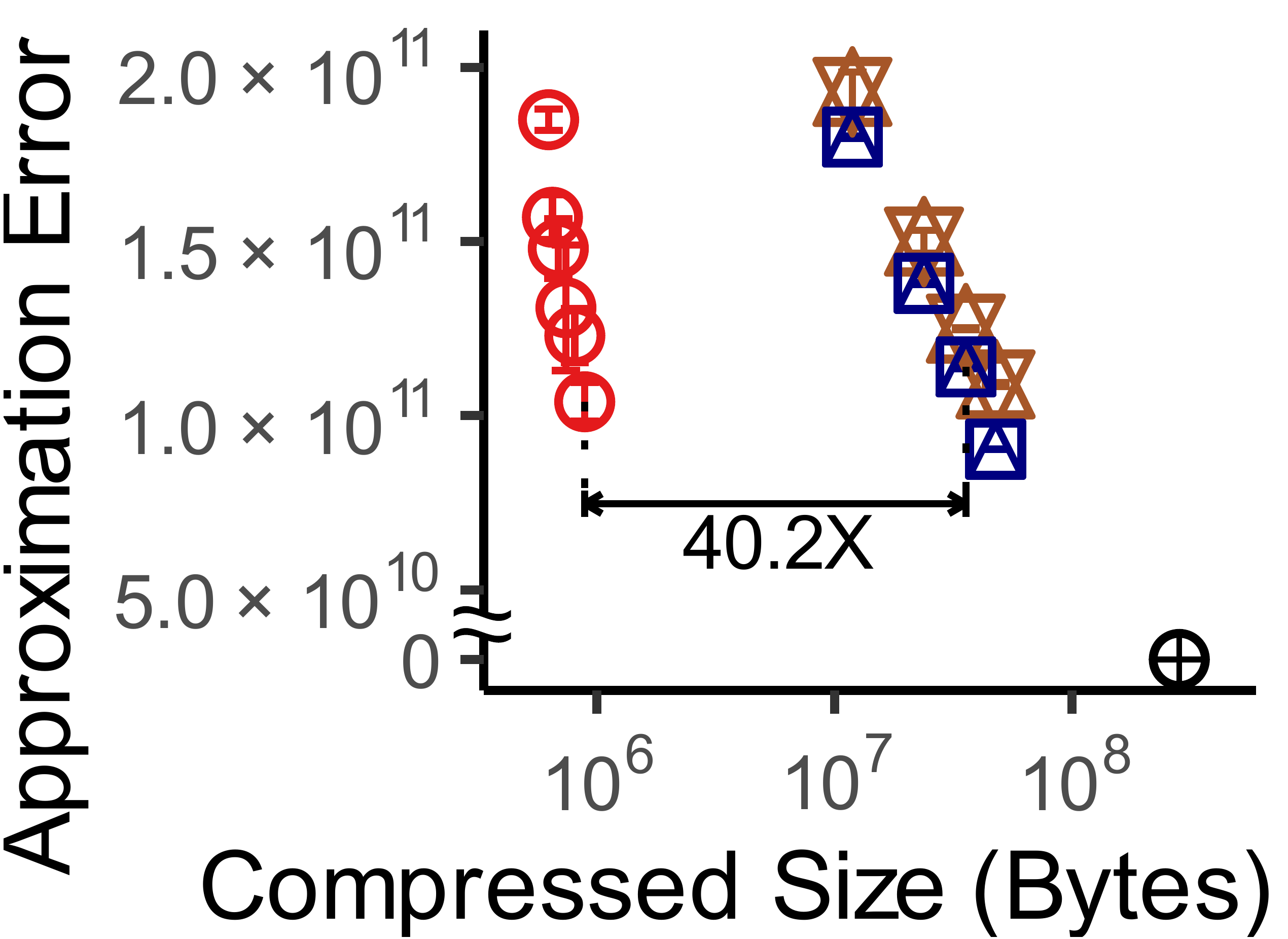}}
 \subfigure[\textt{4-gram}]{
    \includegraphics[width=0.17\linewidth]{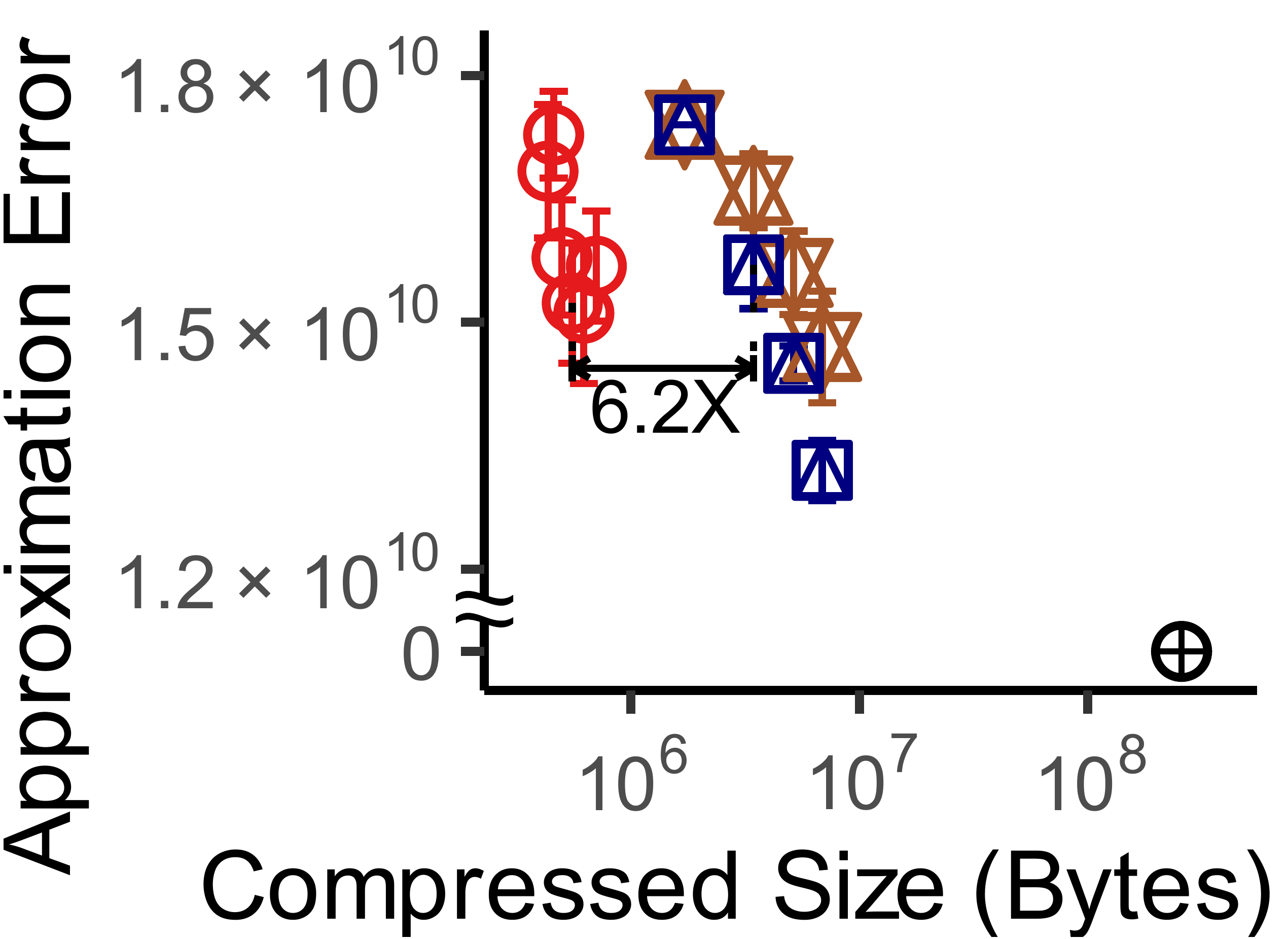}} \\
\vspace{-2mm}
\caption{\blue{\underline{\smash{\method significantly outperforms the competitors even if we assume that matrices and tensors are \textit{non-reorderable}}} \underline{\smash{and separately store the permutations of indices for all modes.}}}
Note that the outputs of \method require up to two orders of magnitude smaller space than those of the competitors with similar approximation error.}
\label{fig:appdix:trade-off}
\end{figure*}

\subsection{Q5. Effects of Data Properties}
\label{sec:exp:inputs}
We investigated the effects of properties of an input tensor $\cX$ on the performance of \method. 
For this experiment, we synthetically generated tensors using the multi-dimensional extension R-MAT~\cite{chakrabarti2004r}. Specifically, we first split each mode of a tensor into two partitions and then chose either the first partition with probability $p$ or the second one with probability $1-p$.
This process is repeated until the target position is determined. 
As a default setting, we set (a) $p$ to $0.8$, (b) the order $D$ to $3$, (c) the sum of all tensor entries to $10^6$, and (d) the number of entries to $2^{30}$.
We measured \textit{fitness}, which is defined as $1 - {\lVert \cX - \reX \rVert_F}/{\lVert \cX \rVert_F}$ (the higher, the better).
The fitness is widely used to compare the errors of approximations to different tensors.
%
\begin{figure}[t]
    \centering
    \vspace{-2mm}
    \subfigure[Effect of Skewness]{
        \includegraphics[width=0.32\linewidth]{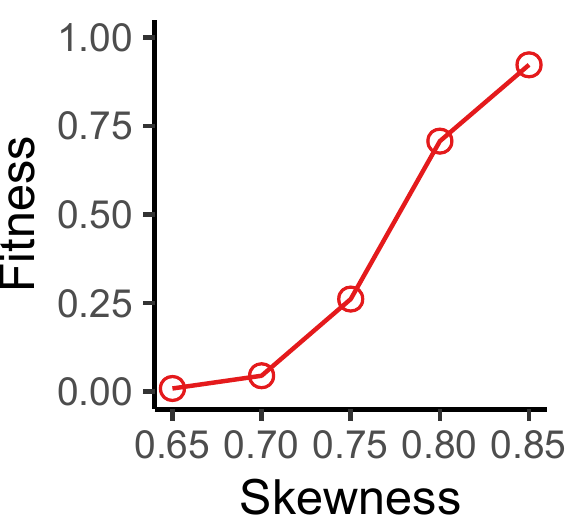}
        \vspace{-5mm}
        \label{fig:q5:prob}
    }
    \hspace{-1.35em}
    \subfigure[Effect of Order]{
        \includegraphics[width=0.32\linewidth]{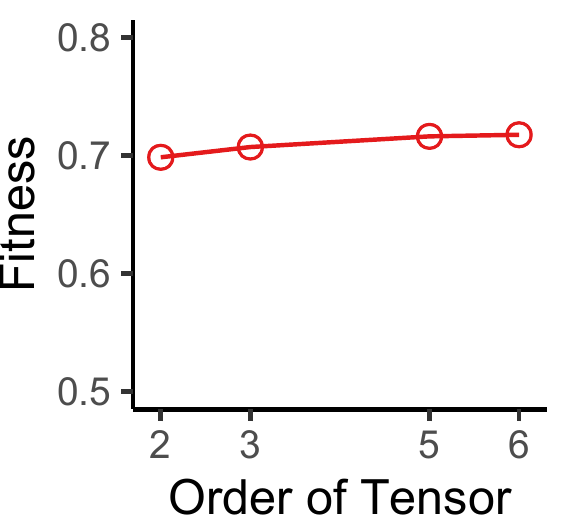}
        \vspace{-5mm}
        \label{fig:q5:order}
    }
    \hspace{-1.05em}
    \subfigure[Effect of Dim.]{
        \includegraphics[width=0.32\linewidth]{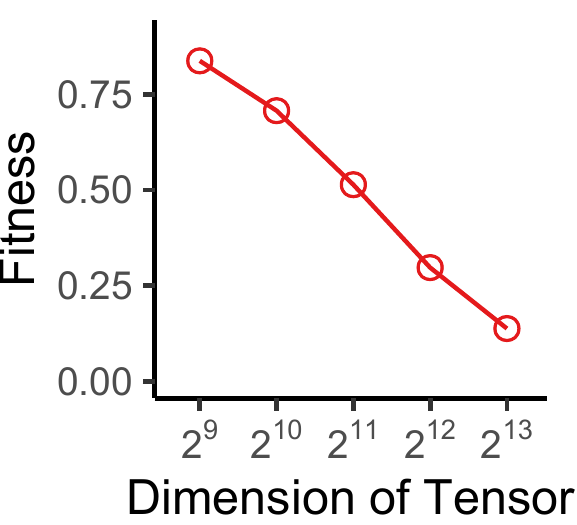}
        \vspace{-5mm}
        \label{fig:q5:density}
    }
    \label{fig:data_props}
    \vspace{-1mm}
    \caption{\label{fig:q5}\underline{\smash{Effects of data properties on \method.}} (a) The fitness increases as the skewness $p$ increases. (b) The order of tensors does not significantly affect the fitness. (c) As tensors become bigger, the fitness decreases.} 
\end{figure}
We varied the skewness $p$ from $0.65$ to $0.85$. Note that increasing $p$ makes the distribution of non-zero entries more skewed with distinct patterns, and decreasing $p$ makes the distribtuion more uniform without patterns.
As seen in Figure~\ref{fig:q5}(a), the fitness increased as $p$ increased, implying that \method provides better performance on skewed tensors with distinct patterns.
%
Next, we changed the order $D$ from $2$ to $6$, but no significant effect of $D$ was observed, as shown in Figure \ref{fig:q5}(b).
Lastly, we analyzed the effect of dimension (i.e., the number of indices in each mode) by changing the dimension of tensors while fixing the number of non-zeros.
As seen in Figure \ref{fig:q5}(c), the fitness of \method decreased as tensors became bigger and thus more entries were approximated by a fixed number of parameters.

\begin{algorithm}[h]
\caption{\textsc{UpdateRowOrder}}\label{algo:update_perm}
\SetKwInput{KwInput}{Input}
\SetKwInput{KwOutput}{Output}
\KwInput{(a) a reorderable matrix $\mathbf{A}$, (b) a hyperparameter $\gamma$ in Eq.~\eqref{eq:accept}}
\KwOutput{updated matrix $\mathbf{A}$}
 Sample $k \in \mathbb{N} \cup \{0\}$ so that $\text{P}(k=i) = 1/2^{i+1}$\\
 $k$ $\leftarrow$ $\min(k, l_{\text{row}}-1)$; $R$ $\leftarrow$ $\emptyset$; $P$ $\leftarrow$ $\emptyset$ \\
 Generate a random hash bijective functions $h_{\text{col}}$ \\
 \ForEach{$i \in \{i \in [n] : (i-1) \text{ \textnormal{AND} } 2^{k} = 0\}$}{
    $u \sim U(0, 1)$ \\
    \textbf{if} $u$ $<$ $1/2$ \textbf{then}  $R$ $\leftarrow$ $R \cup \{i\}$ \\
    \textbf{else} $R$ $\leftarrow$ $R \cup \{i + 2^{k}\}$
 }
 \ForEach{$i \in R$}{
  $f_{\text{row}}(i)$ $\leftarrow$ $\min_{a_{ij} \neq 0}(h_{\text{col}}(j))$ \\
 }
 \While{$\exists (i_1, i_2)$ s.t. $f_{\text{row}}(i_1) = f_{\text{row}}(i_2)$}{
  $P\leftarrow P \cup \{(i_1, (i_2 - 1) \text{ XOR } 2^{k} + 1 ), (i_2, (i_1 - 1) \text{ XOR } 2^{k} + 1 )\}$ \\
  $R$ $\leftarrow$ $R \setminus \{i_1, i_2\}$
 }
 $R\leftarrow R \cup \{(r-1)\text{ XOR }2^k+1: r \in R\}$ \\
 \While{$R \neq \emptyset$}{
  Randomly sample $(i_1, i_2)$ from $R$\\
  $P$ $\leftarrow$ $P \cup \{(i_1, i_2)\}$; \ \ $R$ $\leftarrow$ $R \setminus \{i_1, i_2\}$
 }
 $P_{\text{accept}}$ $\leftarrow$ $\emptyset$ \\
 \ForEach{$(i_1, i_2) \in P$}{
    $u \sim U(0, 1)$ \\
    $\Delta$ $\leftarrow$ change in the approximation error\\
 
    \textbf{if} $u \geq exp(-\gamma \cdot \Delta)$ \textbf{then} $P_{\text{accept}} \leftarrow P_{\text{accept}} \cup \{(i_1, i_2)\}$
 }
 
 \ForEach{$(i_1, i_2) \in P_{\text{accept}}$}{
    $\mathbf{A}_{i_1,:}$, $\mathbf{A}_{i_2,:}$ $\leftarrow$ $\mathbf{A}_{i_2,:}$, $\mathbf{A}_{i_1,:}$
 }
\end{algorithm}

\section{Pseudocode for ordering rows}
\label{app:code}
The pseudocode of \textsc{UpdateRowOrder} described in Section \ref{sec:method:perm}, is given in Algorithm \ref{algo:update_perm}, where binary representations start from $0$, while row indices start from $1$. 

\section{Effectiveness of \method on non-reorderable data}
\label{app:non_reorder}

\method can also be applied to non-reorderable matrices and tensors if the mapping between the original and new orders of mode indices are stored additionally. 
Even with this additional space requirement, \method still yielded the best trade-off between the approximation error and the compressed size, as seen in Figure~\ref{fig:appdix:trade-off}, where we assume that the input matrices and tensors are not reorderable.
Remark that \kronfit also needs space for storing orders when it is applied to non-reorderable matrices.

\section{Proof of Theorems} \label{app:proof}
\smallsection{Proof of Theorem~\ref{thm:num_params}} 
In \method, the number of the parameters for LSTM is $\Theta(h^2)$. The embedding layer before the LSTM and the linear layers after LSTM require $\Theta(h)$ of parameters. 
The number of parameters for $\mathbf{K}_1$ in Algorithm~\ref{algo:model} is $4$, which is the number of entries.
We consider $\Theta(h)=\Theta(1)$ as $h$ is a constant; thus, the number of parameters is $\Theta(1)$.

\smallsection{Proof of Theorem~\ref{thm:space}}
Storing the input matrix in a sparse format requires $O(\text{nnz}(\mathbf{A}))$ space. 
Changing the orders of rows and columns requires $O(M+N)$ space for saving random hash functions, shingles, and changes of losses.
If we assume that the batch size and the number of parameters of LSTM are constants, its memory usage during inference and backpropagation is $O(\log M)$ since the input length is $O(\log M)$.
Thus, the overall space complexity during training is $O(\text{nnz}(\mathbf{A})+M)$.

\end{document}


\title{NeuKron: Constant-Size Lossy Compression of Sparse \\ Reorderable Matrices and Tensors (Supplementary Document)}
\author{}
\settopmatter{printacmref=false, printfolios=false}

\maketitle
\begin{figure*}
    \centering
    \vspace{-2mm}
    \includegraphics[width=0.4\linewidth]{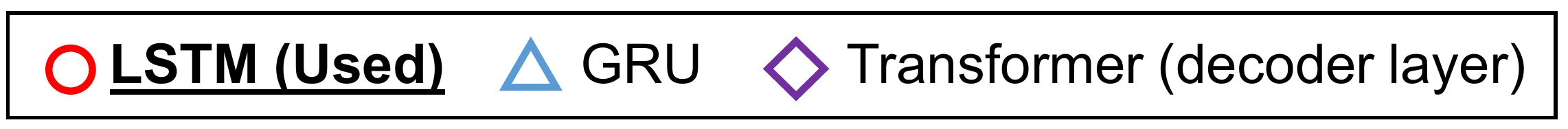}
    \linebreak
    \subfigure[\textt{email}]{
        \includegraphics[width=0.2\linewidth]{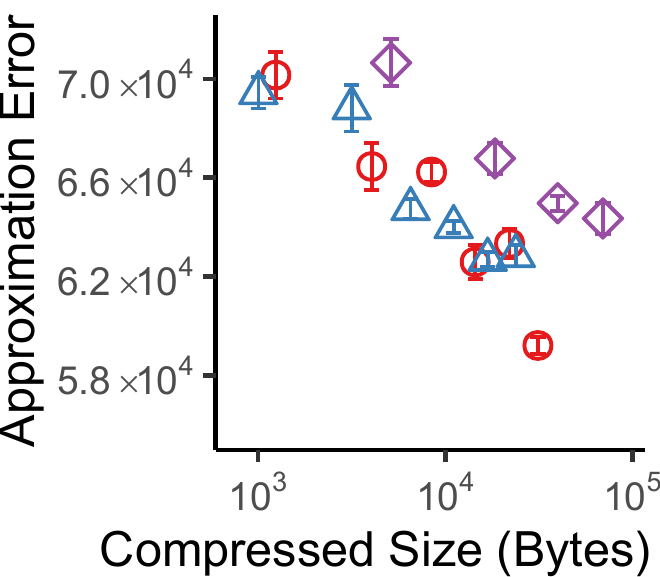}
    }
    \subfigure[\textt{nyc}]{
        \includegraphics[width=0.2\linewidth]{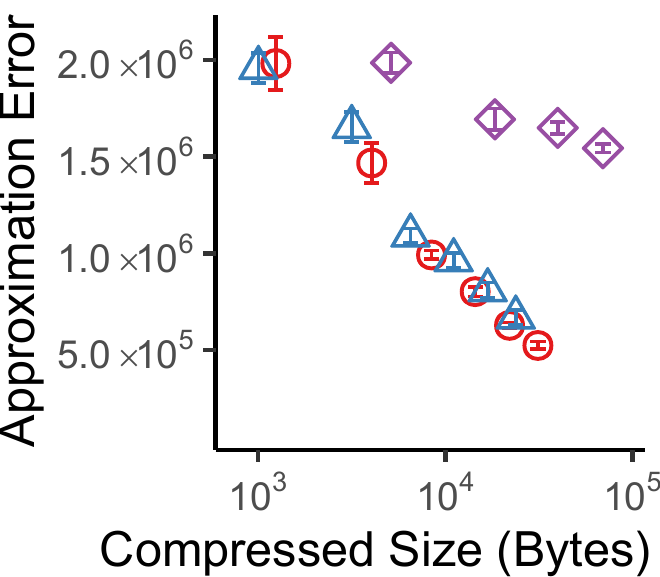}
    }
    \subfigure[\textt{tky}]{
        \includegraphics[width=0.2\linewidth]{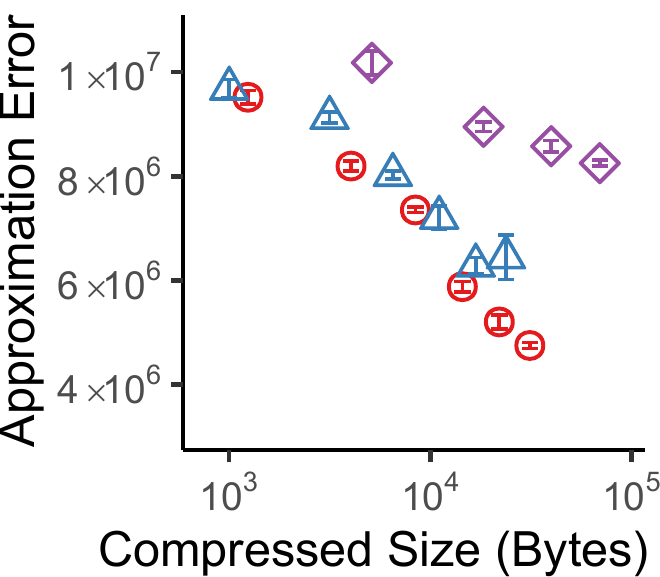}
    } \\
    \vspace{-2mm}
    \caption{\underline{\smash{\red{When equipped with \method}, LSTM leads to concise and accurate compression.}} \method with LSTM and that with GRU show perform similarly, but \method with the decoder layer of Transformer requires significantly more space for the same level of approximation error.}
    \label{fig:model_type}
\end{figure*}

\section{Analysis for the type of the sequential model (Related to Section 4.1)}
We compared the performances of auto-regressive sequence models, \red{when they are equipped with \method.}
We varied the hidden dimesion $h$ of \method from 5 to 30 for LSTM and GRU, and the model dimension $d_{model}$ from 8 to 32 for the decoder layer of Transformer.
As seen in Figure~\ref{fig:model_type}, when equipped with \method, \red{LSTM and GRU performed similarly, outperforming the decoder layer of Transformer.}

\section{Analysis on inference time \\ (Related to Section 4.3)}
We measure the inference time for $10^6$ elements randomly chosen from square matrices of which numbers of rows and cols vary from $2^7$ to $2^{16}$.
We ran 5 experiments for each size and report the average of them.
As expected from Theorem 1 of the main paper, the approximation of each entry by \method is almost in $\Omega(\log{M})$ (see Figure~\ref{fig:inference_time}).

\begin{figure*}[ht]
    \vspace{-4mm}
    \centering
    \subfigure[\textt{email}]{
        \includegraphics[width=0.188\linewidth]{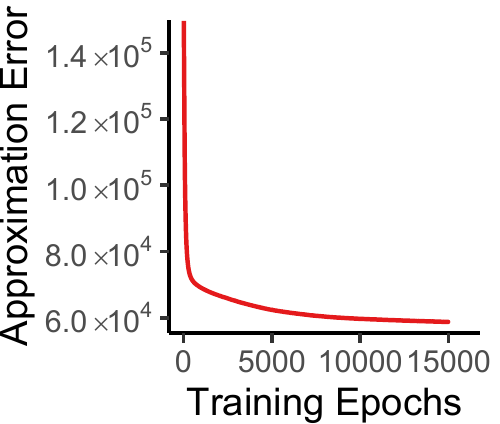}}
    \subfigure[\textt{nyc}]{
        \includegraphics[width=0.188\linewidth]{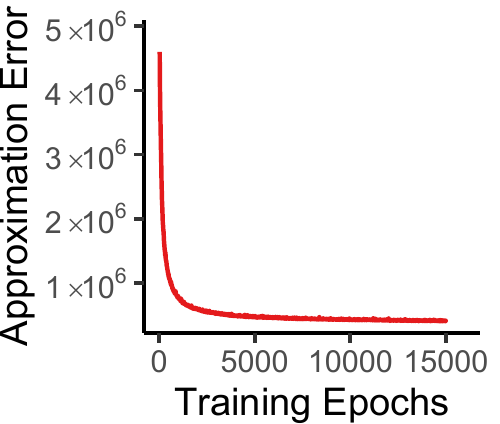}}
    \subfigure[\textt{tky}]{
        \includegraphics[width=0.188\linewidth]{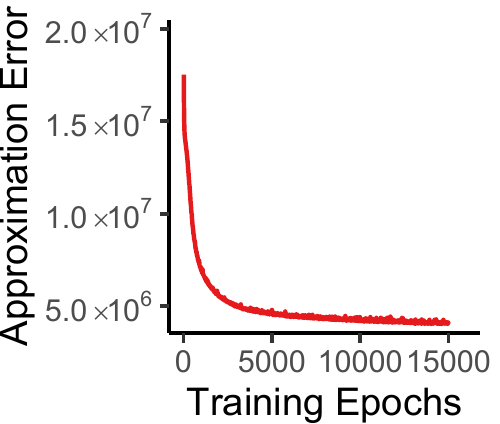}}
    \subfigure[\textt{kasandr}]{
        \includegraphics[width=0.188\linewidth]{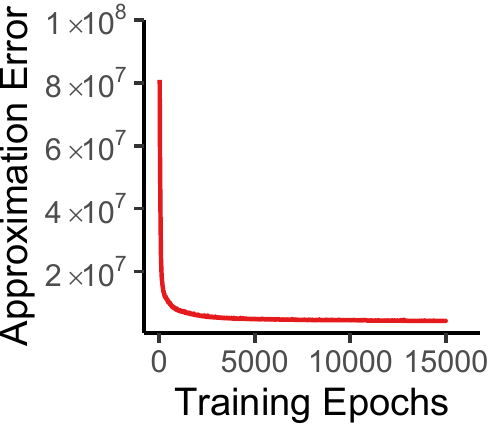}}
    \subfigure[\textt{threads}]{
        \includegraphics[width=0.188\linewidth]{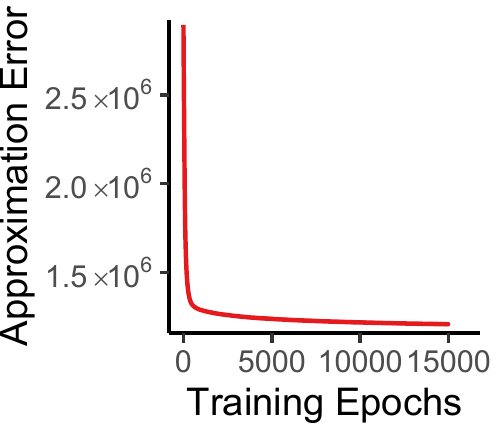}} \\
    \vspace{-2mm}
    \subfigure[\textt{twitch}]{
        \includegraphics[width=0.188\linewidth]{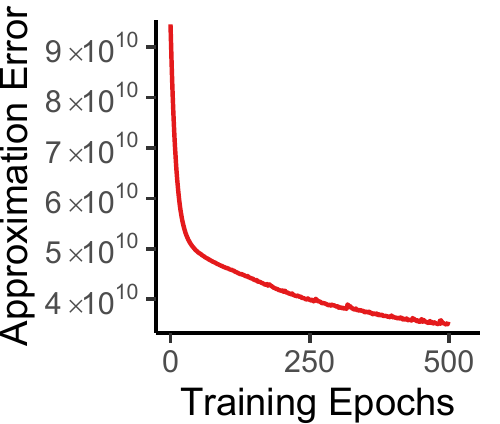}}
    \subfigure[\textt{nips}]{
        \includegraphics[width=0.188\linewidth]{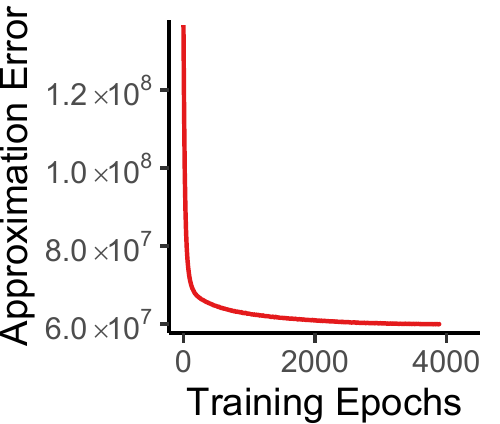}}
    \subfigure[\textt{enron}]{
        \includegraphics[width=0.188\linewidth]{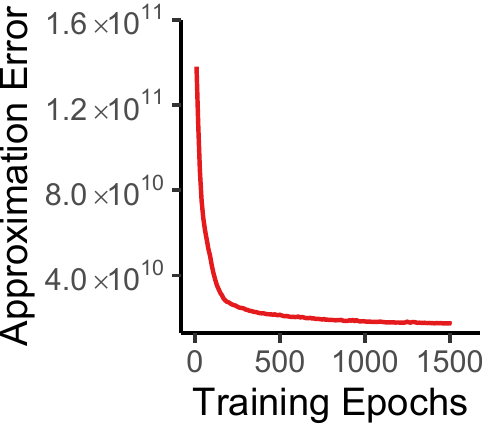}}
    \subfigure[\textt{3-gram}]{
        \includegraphics[width=0.188\linewidth]{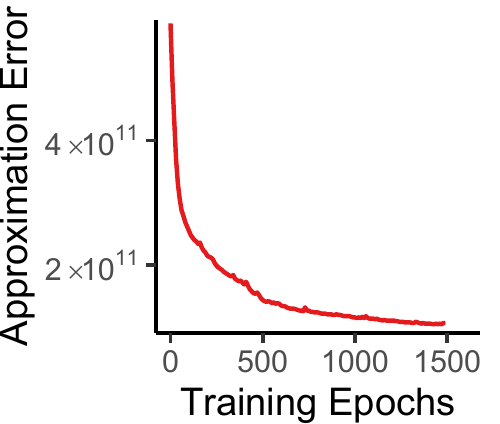}}
    \subfigure[\textt{4-gram}]{
        \includegraphics[width=0.188\linewidth]{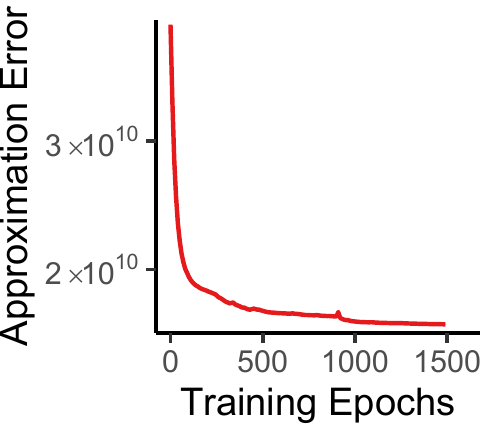}} \\
    \vspace{-2mm}
    \caption{\underline{\smash{Approximation error of \method after each epoch.}} In most cases, the approximation error drops rapidly in early iterations, especially within one third of the total epochs that are determined by the termination condition in Section 6.1. }
    \label{fig:appdix:training_plot}
\end{figure*}

\begin{figure}[ht]
    \centering
    \includegraphics[width=0.376\linewidth]{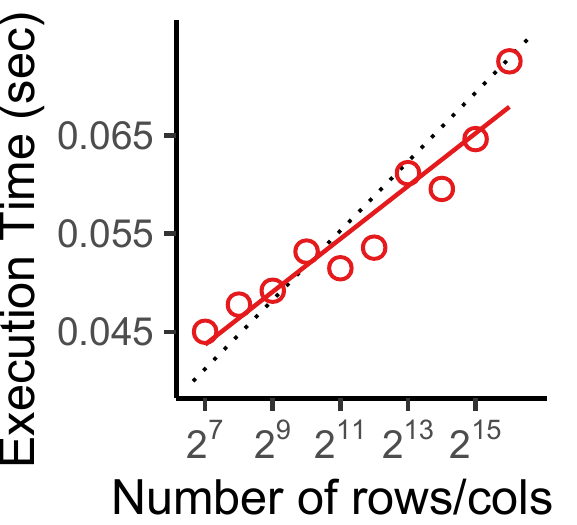}
    \caption{Inference time of \method is nearly proportional to the logarithm of the number of rows and columns.}
    \label{fig:inference_time}
\end{figure}

\section{Analysis of the Tensor Extension \\ (Related to Section 5)}
\label{app:extension}
Below, we analyze the time and the space complexities of \method extended to sparse reorderable tensors. For all proofs, we assume a $D$-order tensor $\mathcal{X} \in \mathbb{R}^{N_1 \times \cdots \times N_D}$ where $N_1 \leq N_2 \leq \cdots \leq N_D$, without loss of generality.
The complexities are the same with those in Section \rome{4} of the main paper if we assume a fixed-order tensor (i.e., if $D=O(1)$).

\begin{theorem}[Approximation Time for Each Entry]
\label{theorem:query_t}
The approximation of each entry by \method takes $\Theta(D \log N_D)$ time.
\end{theorem}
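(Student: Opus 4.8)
The plan is to mirror the one-dimensional analysis (Theorem 1 of the main paper) while carefully tracking the two quantities that drive the cost: the number of Kronecker levels and the order $D$. First I would recall that \method represents the approximated tensor $\reX$ as a $k$-level Kronecker power of a $2 \times \cdots \times 2$ ($D$-order) base whose entries are produced by the sequential model, and that the level count $k = \ceil{\log_2 N_D}$ is dictated by the \emph{largest} mode $N_D$, since all $D$ modes are padded to the common length $2^k$. Approximating a single entry $\tilde{x}_{i_1 \cdots i_D}$ then decomposes into three stages: (i) splitting each of the $D$ indices into its $k$ binary digits, (ii) feeding the resulting length-$k$ sequence of $D$-bit selectors through the recurrent model, and (iii) combining the $k$ per-level outputs into the final value.

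For the upper bound I would bound each stage in turn. Digit extraction costs $O(Dk)$, since we compute $k$ bits for each of the $D$ indices. Running the model over the length-$k$ sequence costs $O(k)$ steps, and because the per-step input is a $D$-dimensional selector while the hidden dimension $h$ is a fixed constant, each step costs $O(D)$; hence this stage is again $O(Dk)$. Combining the $k$ scalar outputs (a product, as in the Kronecker construction, or equivalently reading off the final recurrent output) is $O(k)$ and is dominated. Substituting $k = \Theta(\log N_D)$ gives the $O(D \log N_D)$ upper bound.

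For the matching lower bound I would argue that neither factor can be shaved. The sequence presented to the model has length exactly $k = \Theta(\log N_D)$, so at least that many recurrent steps are executed; and each step must ingest a selector carrying one bit from every one of the $D$ modes, forcing $\Omega(D)$ work per step merely to read its input. Multiplying yields $\Omega(D \log N_D)$, which together with the upper bound establishes the claimed $\Theta(D \log N_D)$.

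The step I expect to be the main obstacle is pinning down precisely why the level count is governed by $\log N_D$ (the longest mode) rather than by a smaller or averaged mode size, while the per-step cost scales with the order $D$ rather than being absorbed into the constant hidden dimension $h$. Reconciling these two distinct sources — one logarithmic in the extent of the longest mode, the other linear in the number of modes — into the single product $D \log N_D$, and defending the lower bound's claim that $\Omega(D)$ work per step is genuinely unavoidable, is the crux; the rest is the same routine bookkeeping as in the matrix case.
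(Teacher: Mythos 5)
Your proposal follows essentially the same route as the paper's proof: the paper likewise decomposes the cost into encoding ($O(\log N_D)$ subdivisions at $O(D)$ cost each) plus an LSTM pass whose input length equals the number of subdivisions, arriving at $O(D \log N_D)$. The one difference is that you also argue the matching lower bound (exactly $\Theta(\log N_D)$ recurrent steps, each forced to do $\Theta(D)$ work reading its selector), which the paper's proof omits even though the statement claims $\Theta(D \log N_D)$ --- so your version is, if anything, slightly more complete on the same approach.
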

\begin{proof}
For encoding, \method subdivides the input tensor $O(\log N_{D})$ times and each subdivision takes $O(D)$. For approximation, the length of the input of the LSTM equals to the number of the subdivisions, so the time complexity for retrieving each entry is $O(D \log N_{D})$.
\end{proof}

\newpage
\begin{theorem}[Training Time]
\label{thm:time_t}
Each training epoch in \method takes $O(\text{nnz}(\mathcal{X}) \cdot D \log N_D + D^2 N_D)$.
\end{theorem}

\begin{proof}
Since the time complexity for inference is $O(D \log N_{D})$ for each input, model optimization takes $O(\text{nnz}(\mathcal{X}) \cdot D \log N_{D})$. For reordering, the time complexity of matching the indices as pairs for all the dimensions is bounded above to $O(D \cdot (D N_{D})) = O(D^2 N_{D})$. For checking the criterion for all pairs, we need to retrieve all the non-zero entries, and it takes $O(\text{nnz}(\mathcal{X}) \cdot D \log N_{D})$. Therefore, the overall training time per epoch is $O(\text{nnz}(\mathcal{X}) \cdot D \log N_{D} + D^2 N_{D})$.
\end{proof}

\begin{theorem}[Space Complexity during Training]
\method requires $O(D \cdot \text{nnz}(\mathcal{X}) + D^2 N_{D})$ space during training.
\end{theorem}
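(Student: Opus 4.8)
The plan is to decompose the memory footprint of a single training epoch into three independent components—the stored input tensor, the model parameters, and the scratch space used by the forward/backward passes together with the reordering step—and to bound each separately, attributing the first to the $O(D \cdot \text{nnz}(\mathcal{X}))$ term and the rest to $O(D^2 N_D)$ (with the model contributing only $O(1)$).

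First I would account for storing $\mathcal{X}$ itself. Since it is kept in sparse format, each of its $\text{nnz}(\mathcal{X})$ non-zero entries is recorded with its $D$ coordinates and its value, so this costs $\Theta(D \cdot \text{nnz}(\mathcal{X}))$, exactly the first claimed term. The LSTM, together with any constant-size learnable embeddings, contributes only $O(1)$ space by the constant-size design of \method, so it is absorbed into the other terms.

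Next I would bound the scratch space. For the forward/backward pass, each queried entry is encoded into a sequence of length $O(\log N_D)$ by Theorem~\ref{theorem:query_t}, and unrolling the LSTM over this sequence stores $O(\log N_D)$ constant-size hidden states; crucially, this buffer is reused across entries (and across a fixed-size mini-batch) rather than materialized for all entries at once, so it does not multiply by $\text{nnz}(\mathcal{X})$ and is dominated by the tensor-storage term. For reordering, I would follow the same decomposition as in the proof of Theorem~\ref{thm:time_t}: the current orderings of all $D$ dimensions occupy $O(\sum_{i} N_i) = O(D N_D)$ space, and matching the indices of each dimension into pairs requires, per dimension, an $O(D)$-sized cost representation for each of its $O(N_D)$ indices, i.e. $O(D N_D)$ per dimension and hence $O(D^2 N_D)$ over all $D$ dimensions, which is the second claimed term. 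Adding the three bounds yields $O(D \cdot \text{nnz}(\mathcal{X}) + D^2 N_D)$.

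The main obstacle I anticipate is the per-entry activation accounting: I must argue carefully that the $O(\log N_D)$ backpropagation-through-time buffer is reused rather than replicated for every non-zero entry simultaneously, since otherwise a spurious $\text{nnz}(\mathcal{X}) \cdot \log N_D$ factor would appear and break the bound. Establishing that the reordering representations are $O(D)$-sized per index—so that they give $D^2 N_D$ rather than a larger term—is the other point requiring care, and it mirrors the time-complexity argument already used for Theorem~\ref{thm:time_t}.
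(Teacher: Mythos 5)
Your proof is correct and follows essentially the same route as the paper's: both decompose the training-time footprint and identify (i) the sparse representation of $\mathcal{X}$, costing $O(D \cdot \text{nnz}(\mathcal{X}))$, and (ii) $O(D)$-sized per-index metadata supporting the reordering step, costing $O(D^2 N_D)$, as the dominant terms. The paper instantiates your abstract ``$O(D)$-sized cost representation per index'' concretely: it charges $O(\sum_{i=1}^{D} N_i)$ for the random hash functions and $O((D-1)\cdot\sum_{i=1}^{D} N_i)$ for the stored shingle values, then uses $\sum_{i=1}^{D} N_i \le D N_D$ to reach the same $O(D^2 N_D)$ bound that your matching-based accounting produces; your explicit argument that the $O(\log N_D)$ backpropagation-through-time buffer is reused across entries rather than materialized for all non-zeros at once is a point the paper leaves entirely implicit, and is a welcome addition. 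One caveat: your claim that the model parameters contribute $O(1)$ space ``by the constant-size design'' contradicts Theorem~\ref{thm:num_params_t}, which shows that the embedding and linear layers of the tensor extension require $\Theta(2^D)$ parameters. The paper's own proof quietly shares this omission by declaring the tensor, hash functions, and shingles ``the bottleneck,'' so your argument is no weaker than the paper's; but strictly speaking the $\Theta(2^D)$ term is dominated by the stated bound only under an additional assumption such as $2^D = O(D\cdot\text{nnz}(\mathcal{X}) + D^2 N_D)$, e.g., when $D = O(1)$ as in the matrix case.
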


\begin{proof}
The bottleneck is storing the input tensor in a sparse format, the random hash functions and the shingle values, which require $O(D \cdot \text{nnz}(\cX))$, $O(\sum_{i=1}^{D} N_i)$, and $O((D-1) \cdot \sum_{i=1}^{D} N_i)$, respectively. 
Thus, the overall complexity during training is $O(D \cdot \text{nnz}(\cX) + (D-1) \cdot \sum_{i=1}^{D} N_i) = O(D \cdot \text{nnz}(\cX) + D^2 N_D)$. 
\end{proof}

\begin{theorem}[Space Complexity of Outputs] \label{thm:num_params_t}
The number of model parameters of \method is $\Theta(2^{D})$.
\end{theorem}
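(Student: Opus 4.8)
The plan is to decompose \method into its learnable components and count parameters in each while treating the hidden dimension $h$ of the sequential model as a user-specified constant independent of the tensor order $D$ (i.e., $h = O(1)$). The three pieces I would isolate are the recurrent core (the LSTM/GRU cell), the input embedding that feeds each subdivision step, and the output layer that emits the predicted sub-cell values. The strategy is to show that the core contributes only $O(1)$, while the $D$-dependence is confined entirely to the two interface layers, whose width is dictated by the number of children produced at each subdivision.

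First I would observe that the recurrent backbone has weight matrices whose sizes depend only on $h$, contributing $O(h^2) = O(1)$ parameters, independent of $D$. The crucial structural fact is that when \method is extended to a $D$-order tensor, a single subdivision of a cell splits it along all $D$ modes at once, yielding a base block of $2^D$ sub-cells. Hence at every step the model must (i) consume an input token identifying which of the $2^D$ sub-positions the current cell occupies within its parent, and (ii) emit $2^D$ values, one per sub-cell. This pins the width of both interface layers to $2^D$.

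Next I would count the interface parameters directly. The input embedding maps $2^D$ distinct tokens into $\mathbb{R}^h$, an embedding matrix of $\Theta(2^D \cdot h) = \Theta(2^D)$ entries; symmetrically, the output layer maps the $h$-dimensional hidden state to $2^D$ outputs, contributing another $\Theta(2^D \cdot h) = \Theta(2^D)$ parameters. Summing the three components gives $O(1) + \Theta(2^D) + \Theta(2^D) = \Theta(2^D)$, which is the claimed bound. As a consistency check I would specialize to the matrix case $D=2$, where the block has $2^2 = 4$ entries, recovering the constant-size count of the main paper.

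The main obstacle will be justifying the lower bound $\Omega(2^D)$ rather than merely the upper bound $O(2^D)$. This amounts to arguing that the $2^D$ output width is genuinely forced by the model's semantics: since one subdivision in $D$ dimensions creates $2^D$ children, the output layer cannot be narrower without collapsing distinct children onto a shared value, so the $\Theta$ (as opposed to $O$) is tight by construction. I would make this precise by noting that the reconstruction of an arbitrary entry must be able to distinguish all $2^D$ offspring of each cell, which the architecture realizes exactly through a layer of width $2^D$, leaving no room to reduce the exponent.
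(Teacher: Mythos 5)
Your proposal matches the paper's proof in both structure and conclusion: the paper likewise counts the LSTM core as $\Theta(1)$ (independent of $D$) and attributes the exponential growth entirely to the embedding and linear layers attached to it. The only cosmetic difference is that the paper tallies those interface layers as a geometric sum $\Theta(2 + 2^2 + \dots + 2^D)$ --- reflecting layers of width $2^k$ for each number $k$ of simultaneously subdivided modes --- whereas you count a single width-$2^D$ embedding and output layer; both evaluate to $\Theta(2^D)$, so your argument is essentially the same as the paper's.
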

\begin{proof}
In \method, the number of parameters for LSTM does not depend on the order of the input tensor; thus, it is still in $\Theta(1)$. 
The embedding layer and the linear layers connected to the LSTM require $\Theta(2 + 2^2 + \dots + 2^{D}) = \Theta(2^{D})$  parameters. 
\end{proof}

\section{Semantics and Properties of Datasets (Related to Section~6.1)}
\label{app:datasets}
We provide the semantics of the datasets in Table \ref{tab:dataset:detail} \kijung{and the distributions of degrees, entry values, and connected-component sizes in Table \ref{tab:dataset:properties}. For degrees, we computed the sums of the rows and those of the columns for matrices. For connected-component sizes, we treated sparse matrices as bipartite graphs and used the number of nodes in each connected component as its size. Note that these properties are naturally extended to the tensors.
}

\section{Implementation Details \\ (Related to Section~6.1)}
\label{app:implementation}
We implemented \method in PyTorch. 
We implemented the extended version of \kronfit in C++. 
For ACCAMS, bACCAMS, and CMD, we used the open-source implementations provided by the authors. 
We used the \texttt{svds} function of SciPy 
for T-SVD.
We used the implementations of CP and Tucker decompositions in Tensor Toolbox \cite{bader2008efficient} in MATLAB. 
Below, we provide the detailed hyperparameter setups of each competitor.
\begin{itemize}[leftmargin=*]
    \item{\textbf{\kronfit}:}{
        The maximum size of the seed matrix was set as follows - \texttt{email}: 32 $\times$ 161, \texttt{nyc}: 33 $\times$ 196, \texttt{tky}: 14 $\times$ 40, \texttt{kasandr}: 75 $\times$ 80, \texttt{threads}: 57 $\times$ 85, \texttt{twitch}: 30 $\times$ 63. 
        \change{We tested the performance of \kronfit when $\gamma$ is 1 and 10, and fixed $\gamma$ to $10$ because it performs better when $\gamma$ is set to 10.} We performed a grid search for the learning rate in $\{10^{-1}, 10^{-2}, \cdots, 10^{-8}\}$.
    }
    \item{\textbf{T-SVD}:}{
        The ranks were up to 50 for \texttt{email}, 460 for \texttt{nyc}, 200 for \texttt{tky}, 15 for \texttt{kasandr}, 90 for \texttt{threads}, and 50 for \texttt{twitch}.
    }
    \item{\textbf{CUR}:}{
        \change{We selected ranks for CUR from \{10, 100, 1000\}.
        We sampled \{1\%, 1.25\%, 2.5\%, 5\%, 10\%\} of rows and columns in \textt{email}, \{3.3\%, 5\%, 10\%, 14.3\%, 20\%\} of rows and columns in \textt{nyc}, and \{1\%, 2\%, 4\%, 8.3\%, 11.1\%\} of rows and columns in \textt{tky}}}
    \item{\textbf{CMD}:}{
        \change{We sampled (\# rows, \# columns) as much as \{(30, 150), (60, 350), (90, 700), (100, 1400), (150, 2500)\} for \texttt{email}, \{(65, 2125), (125, 4250), (250, 8500), (500, 17000), (1000, 34000)\} for \texttt{nyc}, \{(45, 1315), (90, 2625), (175, 5250), (350, 10500), (700, 21000)\} for \texttt{tky}, and \{(55, 184), (109, 368), (218, 736), (436, 1471), (871, 2941)\} for \textt{threads}.}
    }
    \item{\textbf{ACCAMS}:}{
        We used 5, 50, and 50 stencils for \texttt{email}, \texttt{nyc}, and \texttt{tky}, respectively.
        We used up to 48, 64, and 40 clusters of rows and columns for the aforementioned datasets, respectively.
    }
     \item{\textbf{bACCAMS}:}{  
        We set the maximum number of clusters of rows and columns to 48, 48, and 24 for \texttt{email}, \texttt{nyc}, and \texttt{tky}, respectively.
        We used 50 stencils for the datasets.
    }
    \item{\textbf{CP}:}{
        The ranks were set up to 40 for \texttt{nips}, 8 for \texttt{enron}, 20 for \texttt{3-gram}, and 4 for \texttt{4-gram}.
    }
    \item{\textbf{Tucker}:}{
        We used hypercubes as core tensors. The maximum dimension of a hypercube for each dataset is as follows - \texttt{nips}: 40, \texttt{enron}: 6, \texttt{3-gram}: 20, and \texttt{4-gram}: 4.
    }
\end{itemize}

\begin{table}[t]
    \centering
    \caption{Semantics of Real-world Datasets}
    \label{tab:dataset:detail}
    \scalebox{0.9}{
    \begin{tabular}{c|c}
        \toprule
        \textbf{Name} & \textbf{Description} \\
        \midrule
        \texttt{email} & e-mail addresses $\times$ e-mails [binary] \\
        \texttt{nyc}  & venues $\times$ users [check-in counts] \\
        \texttt{tky} & venues $\times$ users [check-in counts] \\
        \texttt{kasandr} & offers $\times$ users [clicks] \\
        \texttt{threads} & users $\times$ threads [participation] \\
        \texttt{twitch} & streamers $\times$ users [watching time] \\
        \texttt{nips} & papers $\times$ authors $\times$ words [counts]\\
        \texttt{4-gram} & words $\times$ words $\times$ words $\times$ words [counts] \\
        \texttt{3-gram} & words $\times$ words $\times$ words [counts]\\
        \texttt{enron} & receivers $\times$ senders $\times$ words [counts]\\
        \bottomrule
    \end{tabular}
    }
\end{table}

\begin{figure*}[ht]
    \centering
    \vspace{-2mm}
    \includegraphics[width=0.8\linewidth]{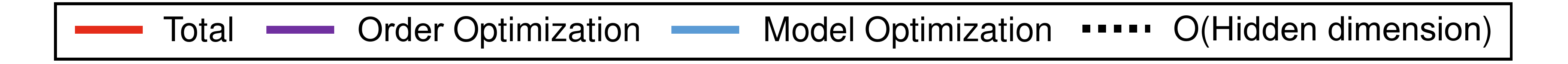}
    \subfigure[threads]{
        \includegraphics[width=0.15\linewidth]{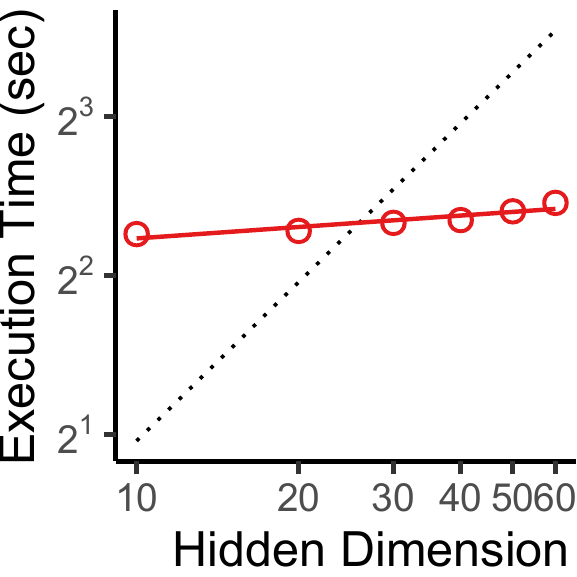}
        \includegraphics[width=0.15\linewidth]{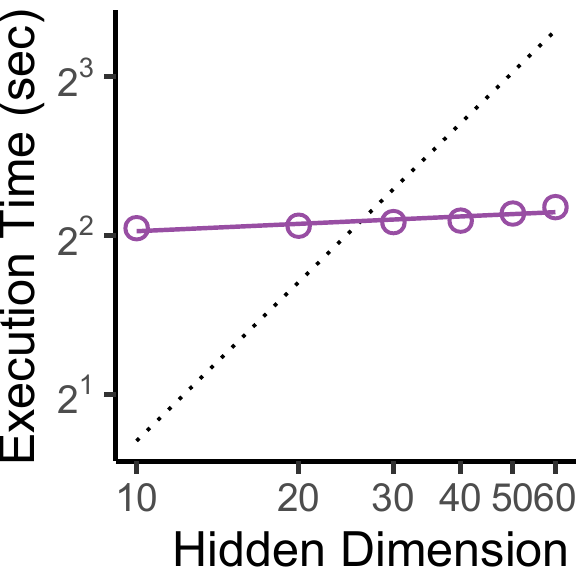}
        \includegraphics[width=0.15\linewidth]{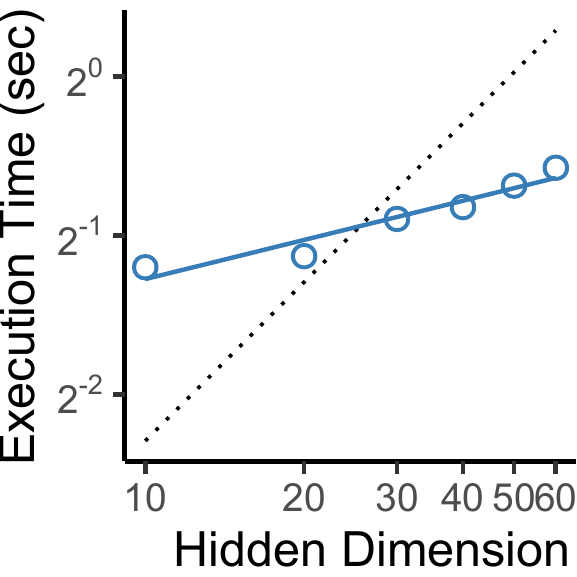}}
    \subfigure[twitch]{
        \includegraphics[width=0.15\linewidth]{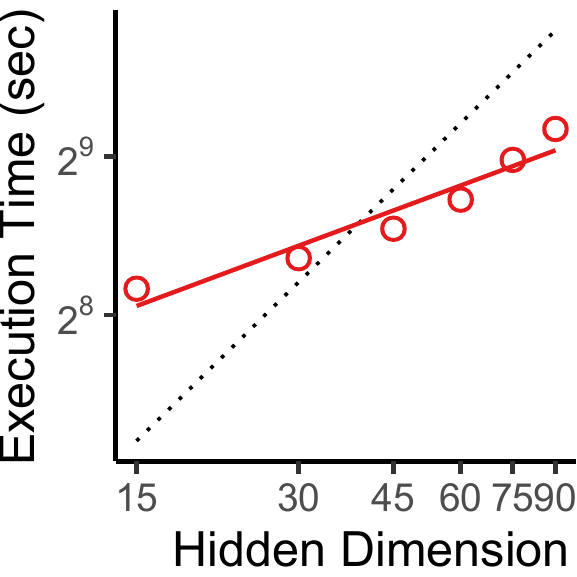}
        \includegraphics[width=0.15\linewidth]{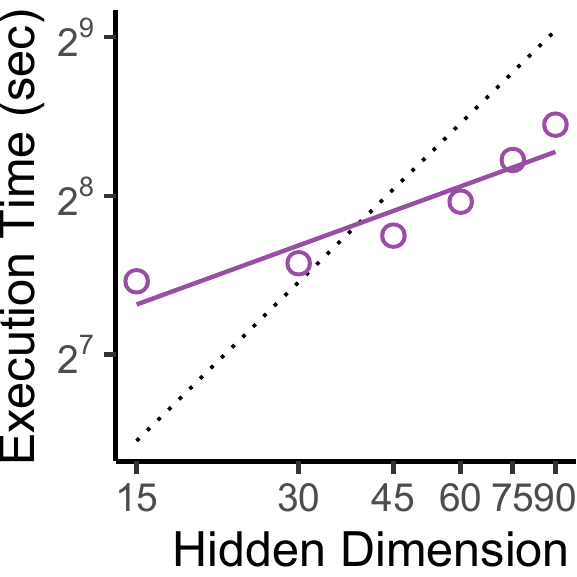}
        \includegraphics[width=0.15\linewidth]{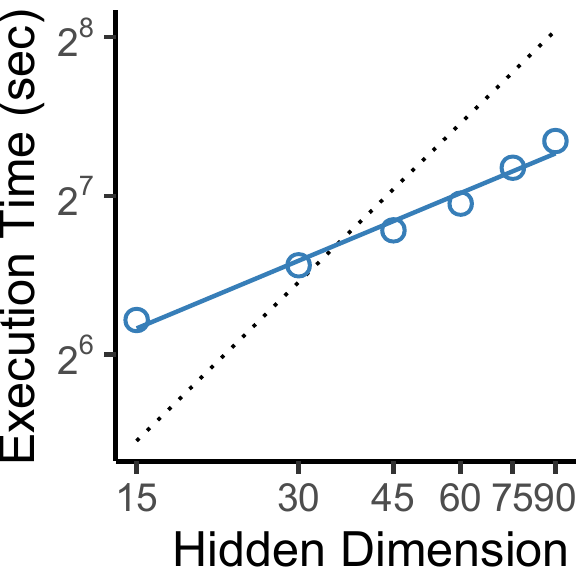}
        }
        \\
    \vspace{-2mm}
    \subfigure[4-gram]{
        \includegraphics[width=0.15\linewidth]{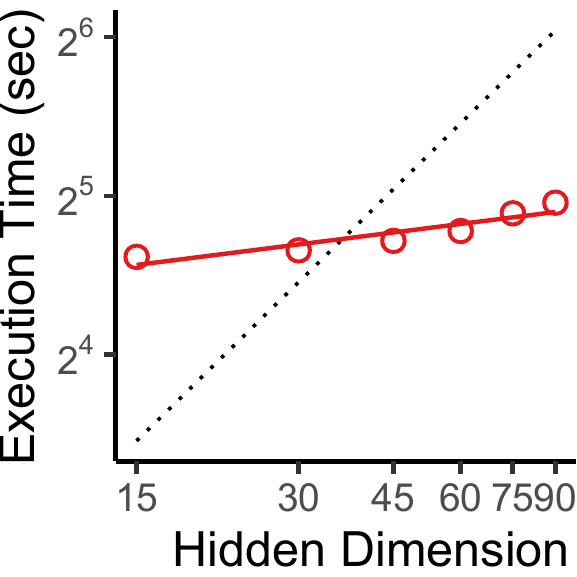}
        \includegraphics[width=0.15\linewidth]{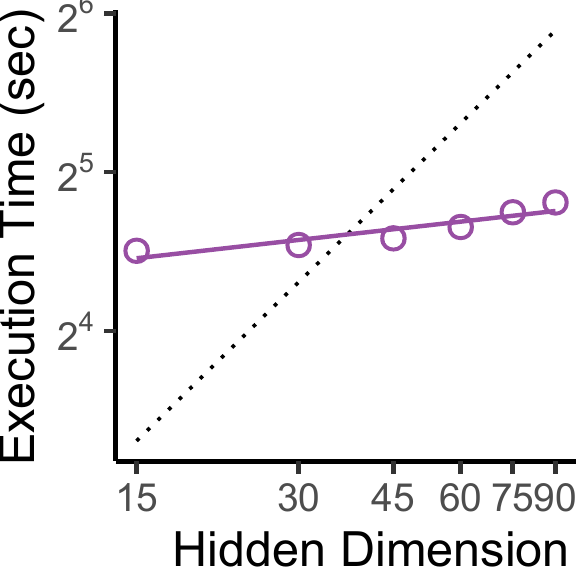}
        \includegraphics[width=0.15\linewidth]{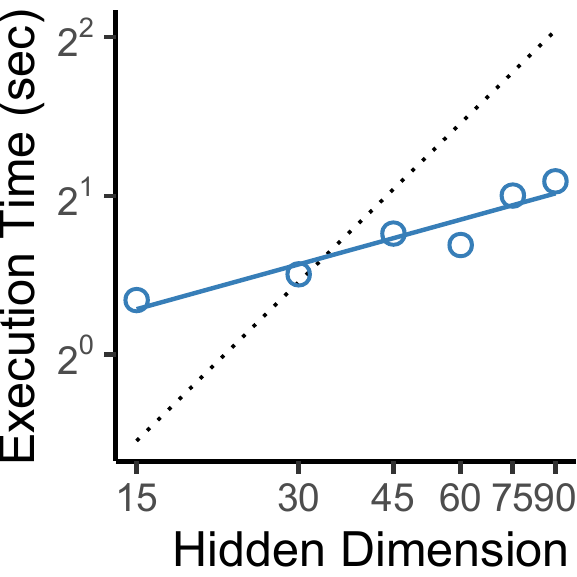}
    }
    \subfigure[enron]{
        \includegraphics[width=0.15\linewidth]{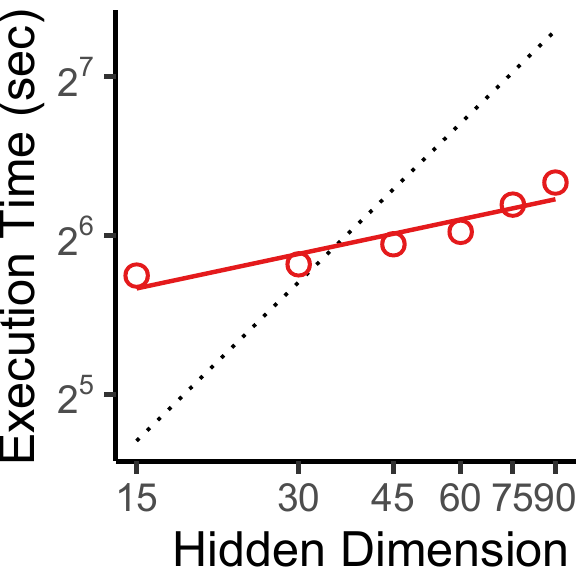}
        \includegraphics[width=0.15\linewidth]{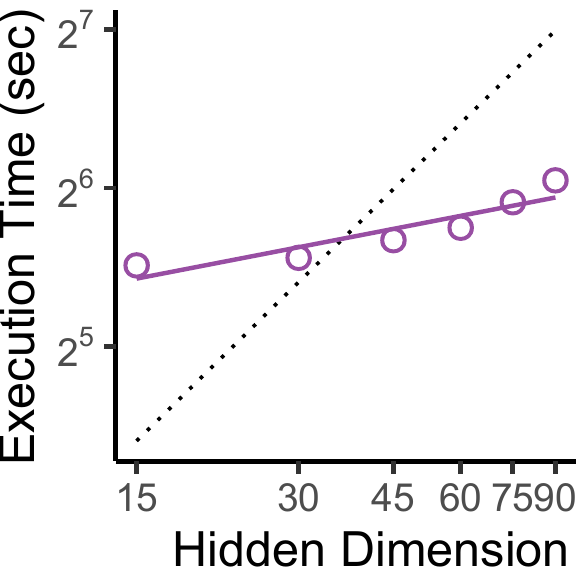}
        \includegraphics[width=0.15\linewidth]{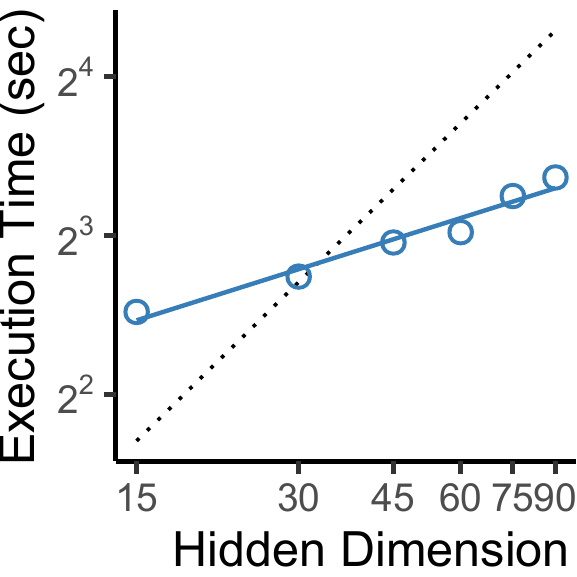}
    }\\
    \vspace{-2mm}
    \caption{\underline{\smash{The training time of \method is empirically sub-linear in the hidden dimension $h$ of \method.}} We measure the total elapsed time, the elapsed time for order optimization, and the elapsed time for model optimization.}
    \label{fig:hiddendim}
\end{figure*}

We followed the default setting in the official code from the authors for the other hyperparameters of ACCAMS and bACCAMS. The implementations of \kronfit, T-SVD, CP, and Tucker used $8$ bytes for real numbers. 
The implementations of ACCAMS and bACCAMS used $4$ bytes for real numbers and assumed the Huffman coding for clustering results.

\section{Hyperparameter analysis \\ (Related to Section 6.1)}
We investigate how the approximation error of \method varies depending on $\gamma$ values. We considered three $\gamma$ values and four datasets (\texttt{email}, \texttt{nyc}, \texttt{tky}, and \texttt{kasandr}) and reported the approximation error in Table~\ref{tab:check}.
Note that setting $\gamma$ to $\infty$ results in a hill climbing algorithm that switches rows/column in pairs only if the approximation error decreases.
The results show that, empirically, the approximation error was smallest when $\gamma$ was set to 10 on all datasets except for the \texttt{nyc} dataset.

\begin{table}[t]
    \centering
    \caption{The effect of $\gamma$ on approximation error. We report the means and standard errors of approximation errors on the \texttt{email}, \texttt{nyc}, \texttt{tky}, and \texttt{kasandr} datasets.}
    \scalebox{0.95}{
    \begin{tabular}{c|c|c}
    \toprule
        Dataset & $\gamma$ & Approximation error \\
    \midrule
        \multirow{3}{*}{\texttt{email}} & 1 & 90561.25 $\pm$ 467.996 \\
         & \textbf{10} & \textbf{58691.88}
 $\pm$ 335.143 \\
         & $\infty$ & 59113.75 $\pm$ 891.544 \\
    \midrule
        \multirow{3}{*}{\texttt{nyc}} & 1 & 421451.2 $\pm$ 4842.068 \\
         & 10 & 402673.6 $\pm$ 17291.959 \\
         & $\pmb{\infty}$ & \textbf{397947.5} $\pm$ 2393.016 \\
    \midrule
        \multirow{3}{*}{\texttt{tky}} & 1 & 4166292.3 $\pm$ 143013.605 \\
         & \textbf{10} & \textbf{3981669.6} $\pm$ 91907.201 \\
         & $\infty$ & 4034389.1 $\pm$ 48117.964 \\
    \midrule
        \multirow{3}{*}{\texttt{kasandr}} & 1 & 6315784.36 $\pm$ 140974.6535 \\
         & \textbf{10} & \textbf{4300280.71} $\pm$ 488804.599 \\
         & $\infty$ & 4385800.32 $\pm$ 496004.629 \\
    \bottomrule
    \end{tabular}}
    \label{tab:check}
\end{table}

\begin{table*}[t]
    \vspace{-2mm}
    \centering
    \caption{\kijung{Comparison of lossy-compression methods for sparse matrices and tensors.
    $nnz(\mathcal{X})$: the number of non-zeros in a matrix/tensor $\mathcal{X}$.
    $D$: the order of the input tensor.
    $N$ \& $M$: the numbers of rows and columns of the input matrix.
    $N_{\max}$: the maximum dimensionality (i.e., $N_{\max}=\max(N_1, \cdots, N_{D}$).
    $R$: rank. $S_c$ \& $S_r$: the numbers of sampled rows and columns.
    $h$: the hidden dimension of the model of \method.
    $T$: the number of iterations of an inner loop. $k$: the number of clusters of rows and columns. $w$: the weight parameter for the criterion of switching. 
    $\alpha, \beta$: parameters for the probability distributions of clusters.
    $E_r, E_c$: the number of rows and columns of a seed matrix.}}
        \setlength{\tabcolsep}{3pt}
        \scalebox{1}{
        \begin{tabular}{llll}
        \toprule
        \multirow{2}{*}{Methods} & Training & Inference & \multirow{2}{*}{Hyperparameters} \\
        & Complexity & Complexity & \\
        \midrule
        \midrule
        \method & $O(h^2nnz(\mathcal{X})\log(M))$ & $O(h^2 \log(N_{\max}))$  & $h$, $w$, optimizer, learning rate  \\
        \midrule
        \midrule
        T-SVD \cite{eckart1936approximation,baglama2005augmented} & $O(nnz(\mathcal{X})R + R^3)$ & $O(R)$ & $R$ \\
        \midrule
        CMD \cite{sun2007less} & $O(nnz(\mathcal{X}) + S_c^3+S_cS_r)$ & $O(S_r)$ & $S_c, S_r$ \\
        \midrule
        CUR \cite{drineas2006fast} & $O(nnz(\mathcal{X}) + S_c^3+S_c^2S_r)$ & $O(S_r)$ & $S_c, S_r$ \\
        \midrule
        ACCAMS \cite{beutel2015accams} & $O(NM + nnz(\mathcal{X})Tk)$ & $O(R)$ & $k, R$ \\
        \midrule
        \multirow{2}{*}{bACCAMS\cite{beutel2015accams}}  & $O(T\{k(N+M)$ & \multirow{2}{*}{$O(R)$} & \multirow{2}{*}{$k, R, \alpha, \beta$} \\    
        & $+nnz(\mathcal{X}) + NM + k^2\})$ & & \\
        \midrule
        \kronfit \cite{leskovec2007scalable,leskovec2010kronecker}& $O(nnz(\mathcal{X})\log(M))$ & $O(\log(N_{\max}))$ & $E_r, E_c$, optimizer, learning rate \\
        \midrule
        \midrule
        CP\cite{carroll1970analysis} & $O(nnz(\mathcal{X})DR)$ & $O(DR)$ & $R$\\
        \midrule
        Tucker\cite{tucker1966some} & $O(nnz(\mathcal{X})DR)$ & $O(DR^D)$ & $R$\\
        \bottomrule
    \end{tabular}}
    \label{tab:proscons:detail}    
\end{table*}

\section{Speed and Scalability on hidden dimension (Related to Appendix~A.1)}

We report the average the training time per epoch of \method in Table \ref{tab:time_per_epoch}. 
The training time per epoch varied from less than 1 second to more than 9 minutes depending on the dataset.
As seen in Figure~\ref{fig:appdix:training_plot}, the training plots of all datasets dropped dramatically within one third of total epochs that were determined by the termination condition in Section 6.1. 
Thus, a model that worked well enough could be obtained before convergence.

We also analyzed the effect of the hidden dimension $h$ on the training time per epoch of \method. 
As seen in Figure \ref{fig:hiddendim}, both the elapsed time for order optimization and the elapsed for model optimization were empirically sublinear in the hidden dimension.

\begin{figure}[t]
    \centering
    \includegraphics[width=0.777\linewidth]{figures/legends/legends5.pdf}
    \subfigure[\texttt{4-gram}]{
        \centering
        \includegraphics[width=0.3\linewidth]{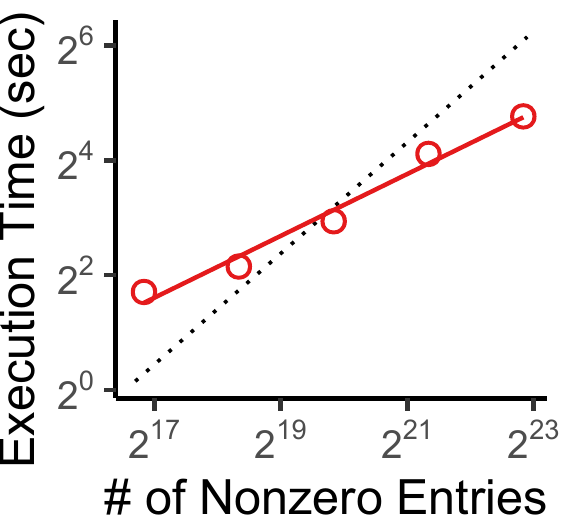}
        \includegraphics[width=0.3\linewidth]{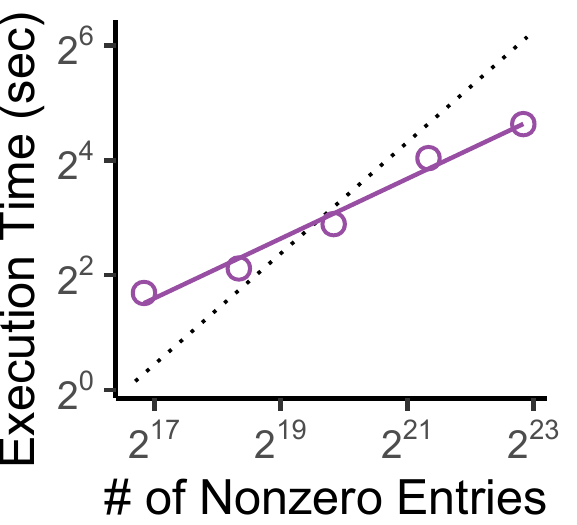}
        \includegraphics[width=0.3\linewidth]{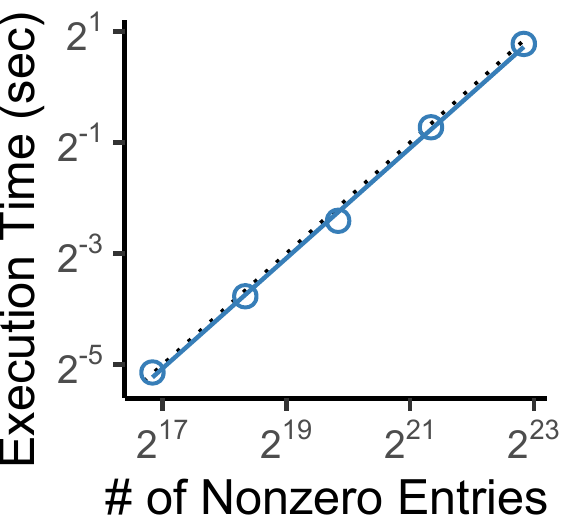}
    } \\
    \vspace{-2mm}
    \subfigure[\texttt{enron}]{
        \centering
        \includegraphics[width=0.3\linewidth]{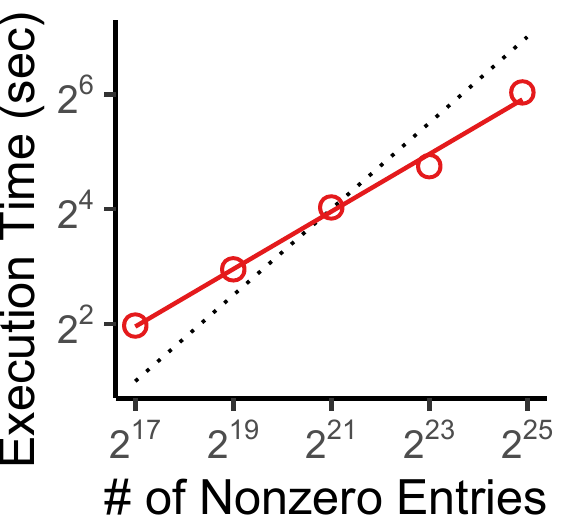}
        \includegraphics[width=0.3\linewidth]{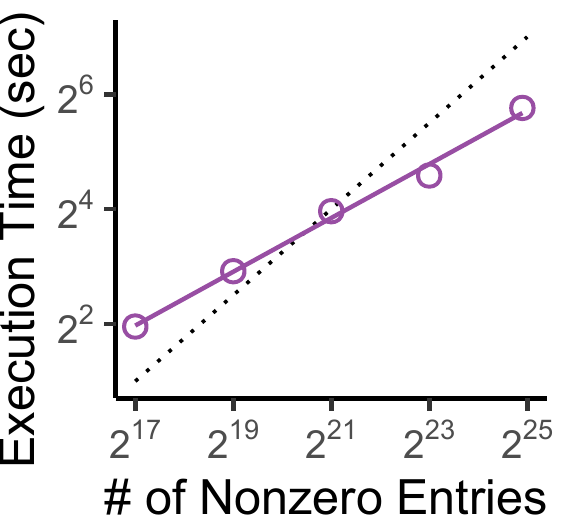}
        \includegraphics[width=0.3\linewidth]{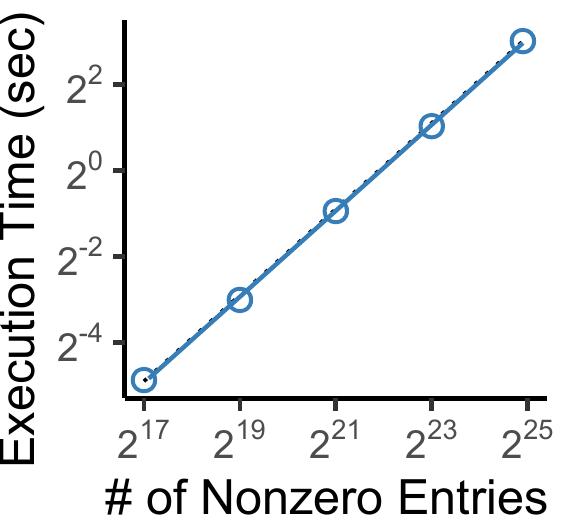}
    }\\
    \vspace{-2mm}
    \caption{\underline{\smash{The training process of \method on tensor is}} \underline{\smash{also scalable.}} 
    Both model and order optimizations scale near-linearly with the number of non-zeros in the input.
    }
    \label{fig:scalability_t}
\end{figure}

\section{Scalability on Tensor Datasets \\ (Related to Appendix~A.1)} 
\label{app:exp:scalability}
For the \texttt{4-gram} and \texttt{enron} datasets, we generated multiple smaller tensors by sampling non-zero entries uniformly at random. The hidden dimension was fixed to $60$. Consistently with the results on matrices, the overall training process of $\method$ is also linearly scalable on sparse tensors, as seen in Figure \ref{fig:scalability_t}.

\begin{table}[t]
    \centering
    \caption{Training time per epoch on all datasets. We report the means and standard errors.}
    \begin{tabular}{c|c}
    \toprule
        Dataset & \multirow{2}{*}{Training time}\\
        (Hidden Dimension) & \\
    \midrule
        \texttt{email} (30) & 0.19 $\pm$ 0.010 \\
        \texttt{nyc} (30) & 0.21 $\pm$ 0.004 \\
        \texttt{tky} (30) & 0.32 $\pm$ 0.005 \\
        \texttt{kasandr} (60) & 1.93 $\pm$ 0.005 \\
        \texttt{threads} (60) & 5.49 $\pm$ 0.012 \\
        \texttt{twitch} (90) & 566.82 $\pm$ 3.308 \\
        \texttt{nips} (50) & 6.31 $\pm$ 0.081 \\
        \texttt{enron} (90) & 80.69 $\pm$ 0.266 \\
        \texttt{3-gram} (90) & 27.19 $\pm$ 0.089 \\
        \texttt{4-gram} (90) & 41.09 $\pm$ 0.785 \\
    \bottomrule
    \end{tabular}
    \label{tab:time_per_epoch}
\end{table}

\begin{table*}[ht]
    \centering
    \caption{\kijung{Structural properties of real-world datasets.}}
    \def\arraystretch{0.7}
    \setlength{\tabcolsep}{2.5pt}
    \begin{tabular}{c|cccc|c|c}
    \toprule
        Dataset & \multicolumn{4}{c|}{Degrees} & Entry Values & Connected Components \\
        \midrule
         \rotatebox[origin=l]{90}{\hspace{4mm}\texttt{email}} & & \includegraphics[width=0.14\linewidth]{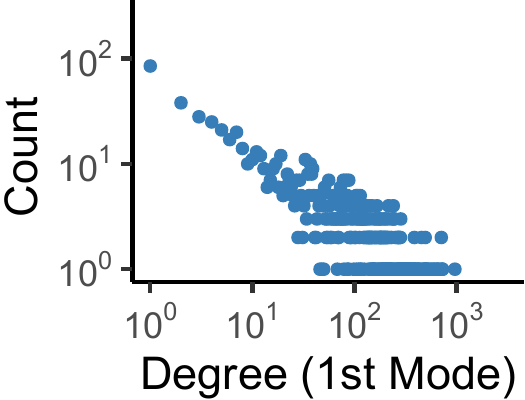} & \includegraphics[width=0.14\linewidth]{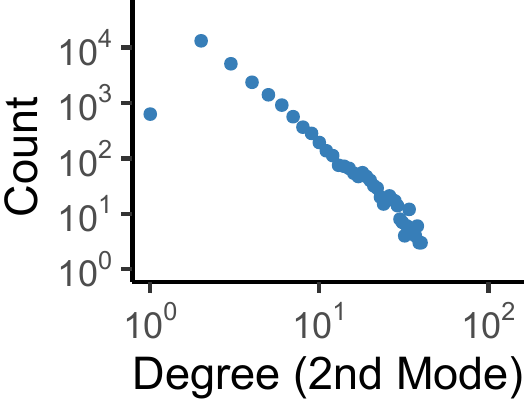} &  & \includegraphics[width=0.14\linewidth]{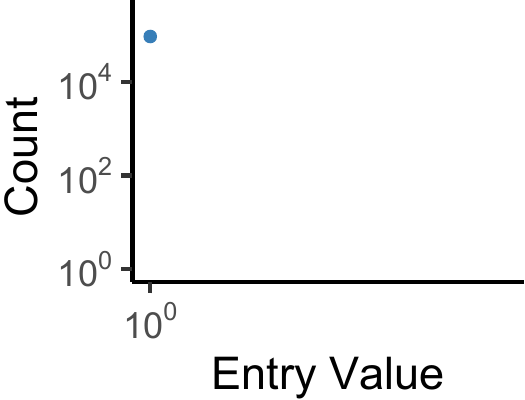} & \includegraphics[width=0.14\linewidth]{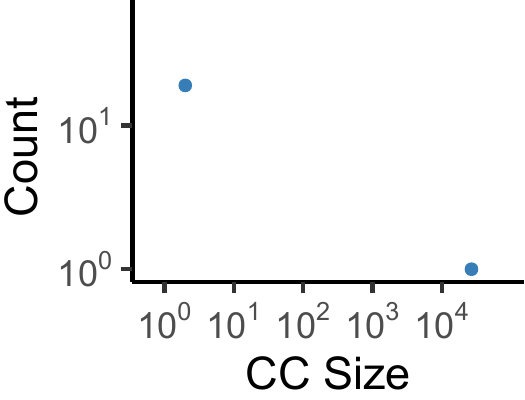} \\
         \midrule
         \rotatebox[origin=l]{90}{\hspace{6mm}\texttt{nyc}} &  & \includegraphics[width=0.14\linewidth]{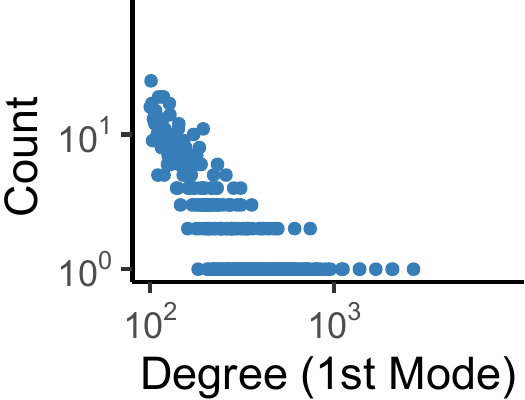} & \includegraphics[width=0.14\linewidth]{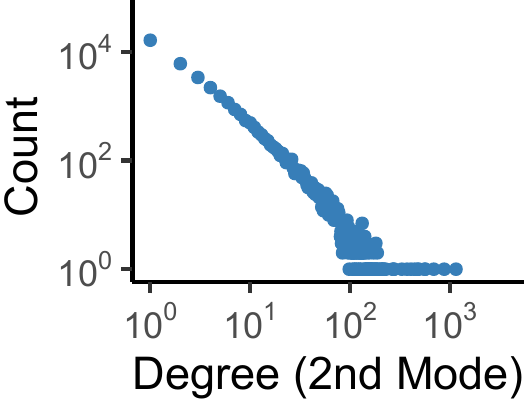} &  & \includegraphics[width=0.14\linewidth]{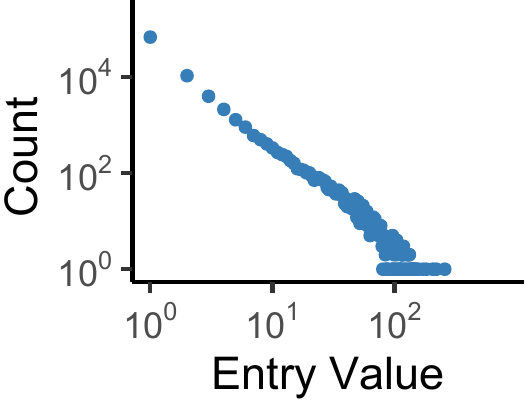} & \includegraphics[width=0.14\linewidth]{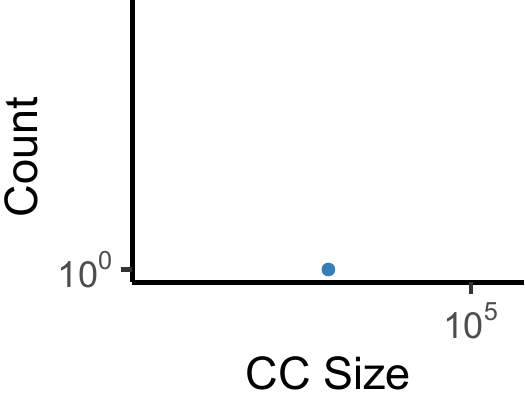} \\
         \midrule
         \rotatebox[origin=l]{90}{\hspace{6mm}\texttt{tky}} &  & \includegraphics[width=0.14\linewidth]{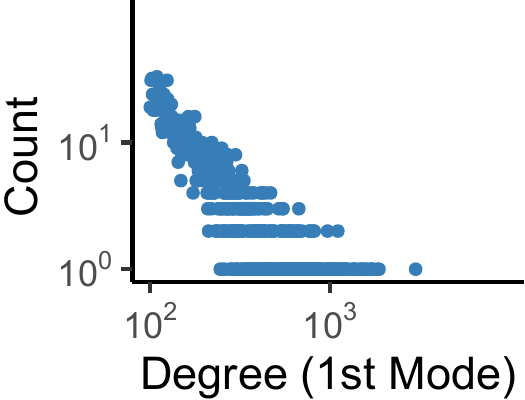} & \includegraphics[width=0.14\linewidth]{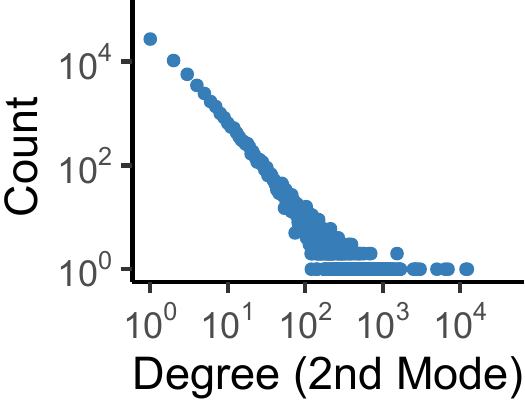} &  & \includegraphics[width=0.14\linewidth]{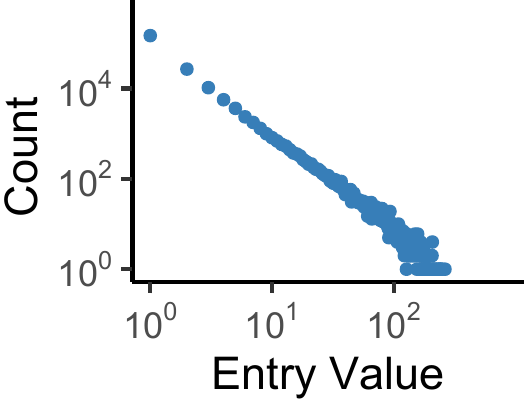} & \includegraphics[width=0.14\linewidth]{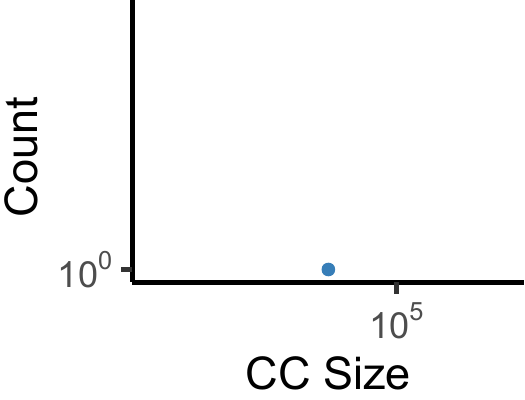} \\
         \midrule
         \rotatebox[origin=l]{90}{\hspace{2.5mm}\texttt{kasandr}} & & \includegraphics[width=0.14\linewidth]{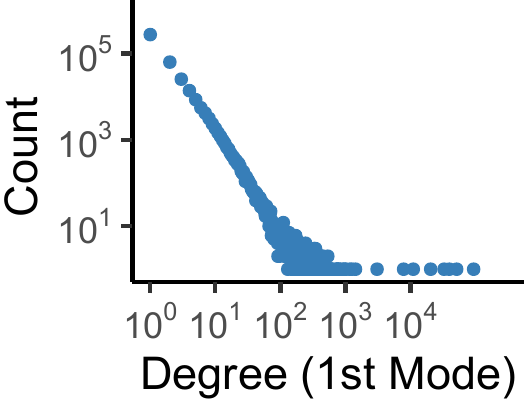} & \includegraphics[width=0.14\linewidth]{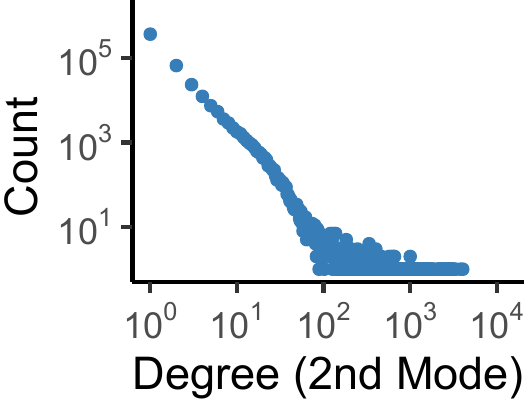} &  & \includegraphics[width=0.14\linewidth]{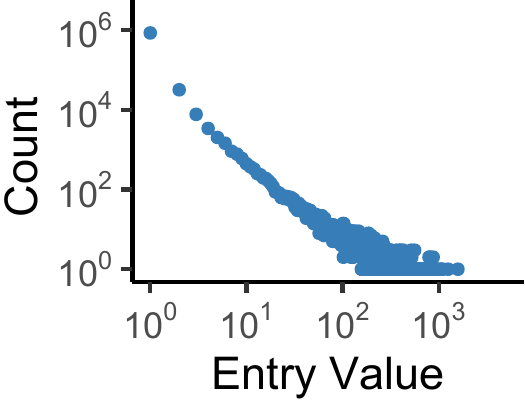} & \includegraphics[width=0.14\linewidth]{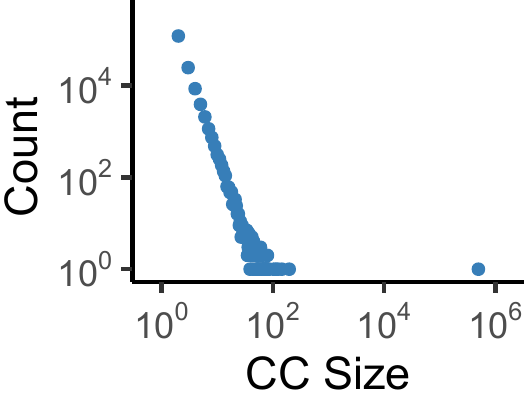} \\
         \midrule
         \rotatebox[origin=l]{90}{\hspace{2.5mm}\texttt{threads}} & & \includegraphics[width=0.14\linewidth]{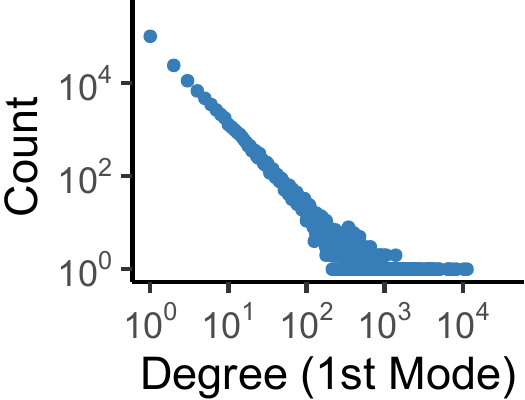} & \includegraphics[width=0.14\linewidth]{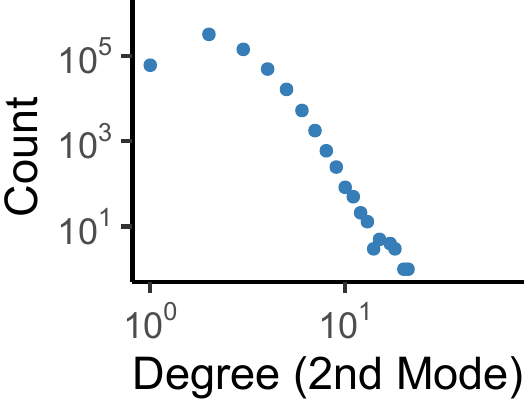} &  & \includegraphics[width=0.14\linewidth]{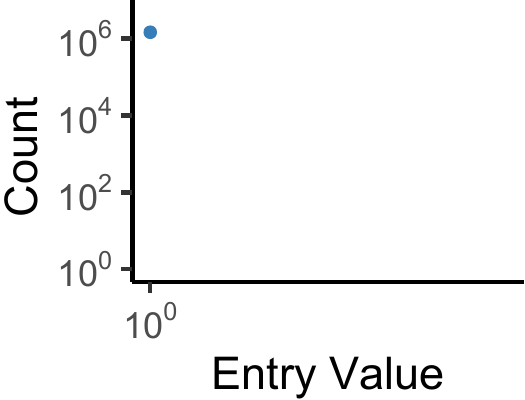} & \includegraphics[width=0.14\linewidth]{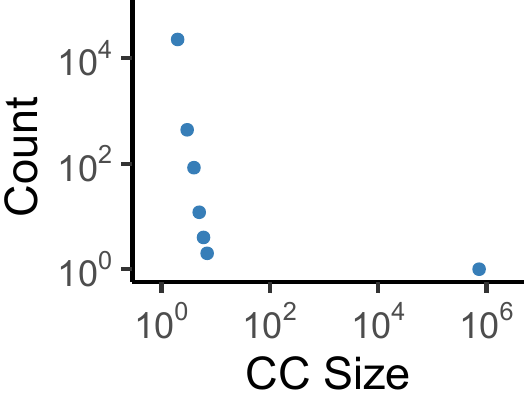} \\
         \midrule
         \rotatebox[origin=l]{90}{\hspace{3mm}\texttt{twitch}} & & \includegraphics[width=0.14\linewidth]{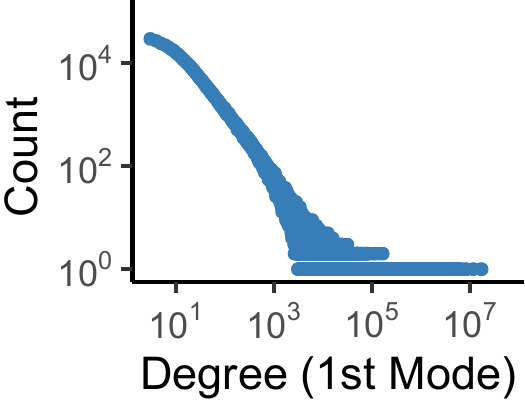} & \includegraphics[width=0.14\linewidth]{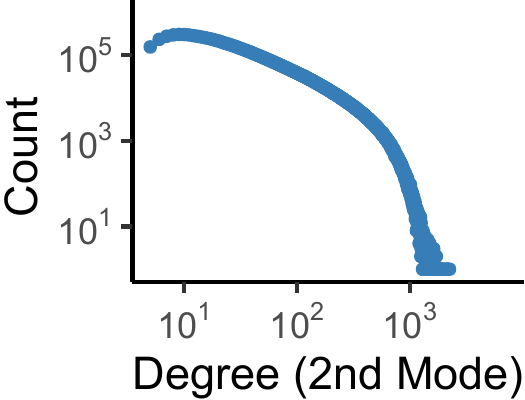} &  & \includegraphics[width=0.14\linewidth]{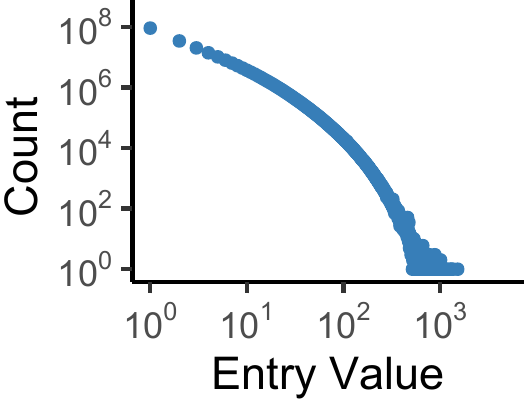} & \includegraphics[width=0.14\linewidth]{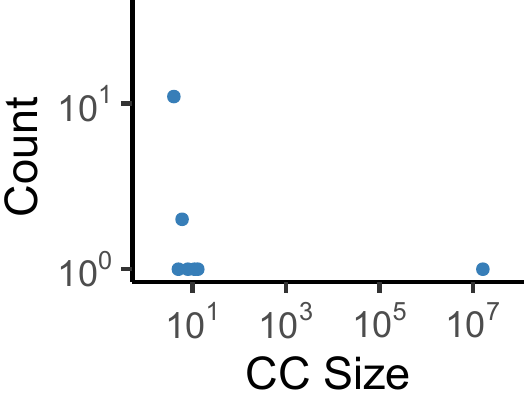} \\
         \midrule
         \rotatebox[origin=l]{90}{\hspace{5mm}\texttt{nips}} & \multicolumn{4}{c|}{\includegraphics[width=0.14\linewidth]{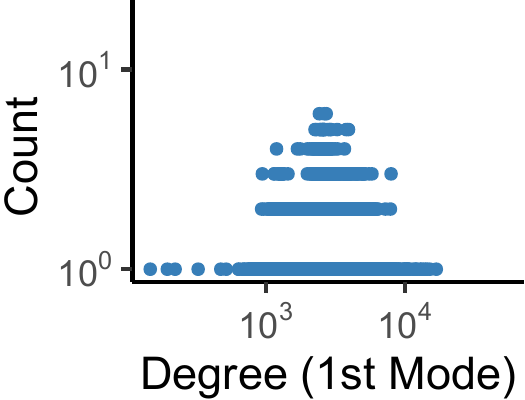} \includegraphics[width=0.14\linewidth]{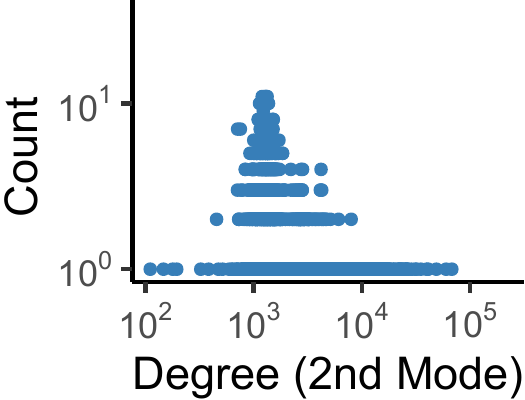} \includegraphics[width=0.14\linewidth]{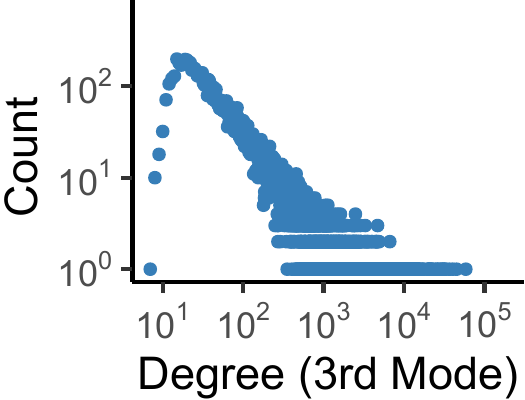}} & \includegraphics[width=0.14\linewidth]{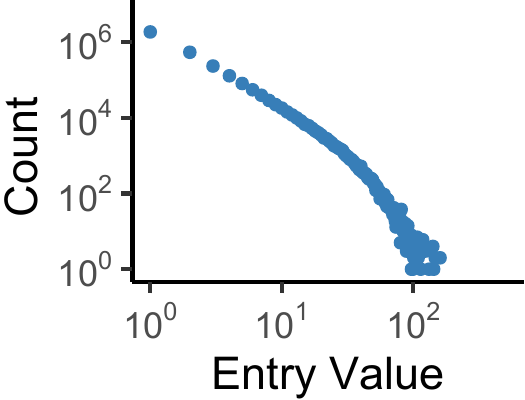} & \includegraphics[width=0.14\linewidth]{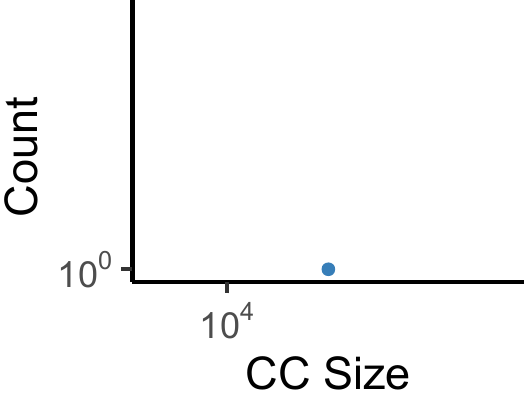} \\
         \midrule
         \rotatebox[origin=l]{90}{\hspace{4mm}\texttt{enron}} & \multicolumn{4}{c|}{\includegraphics[width=0.14\linewidth]{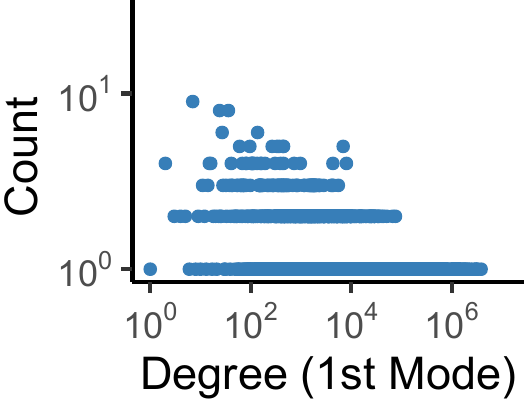} \includegraphics[width=0.14\linewidth]{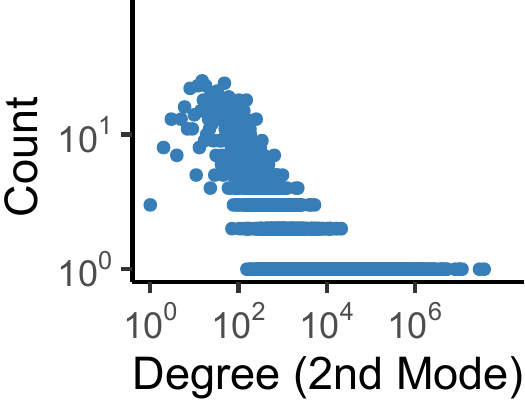} \includegraphics[width=0.14\linewidth]{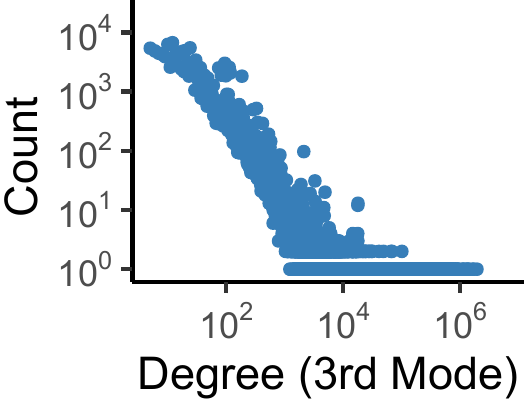}} & \includegraphics[width=0.14\linewidth]{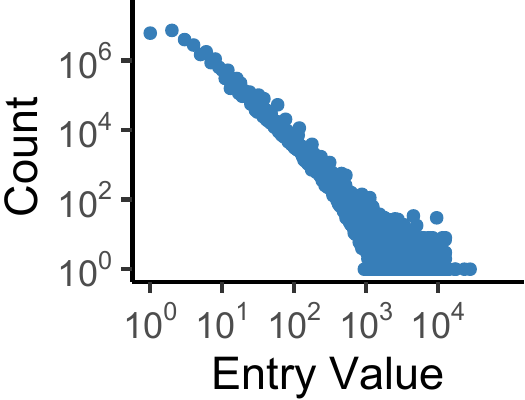} & \includegraphics[width=0.14\linewidth]{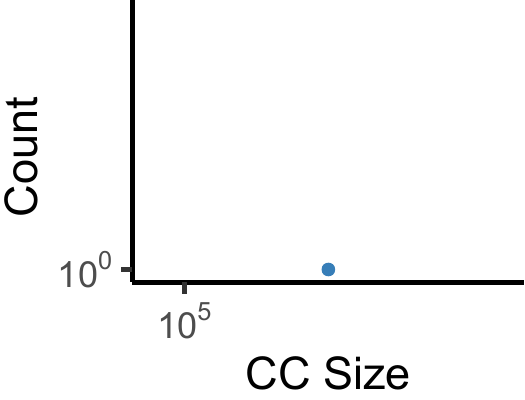} \\
         \midrule
         \rotatebox[origin=l]{90}{\hspace{3mm}\texttt{3-gram}} & \multicolumn{4}{c|}{\includegraphics[width=0.14\linewidth]{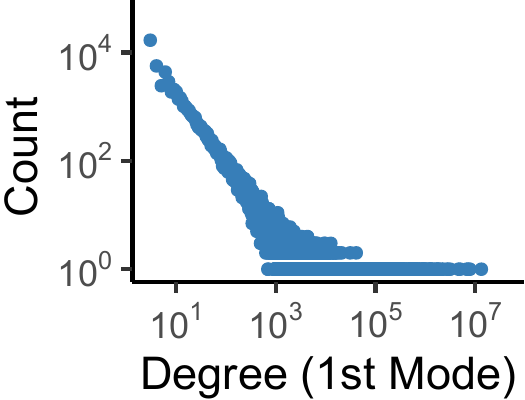} \includegraphics[width=0.14\linewidth]{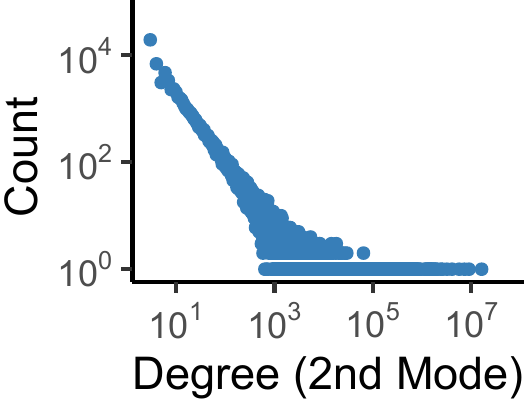} \includegraphics[width=0.14\linewidth]{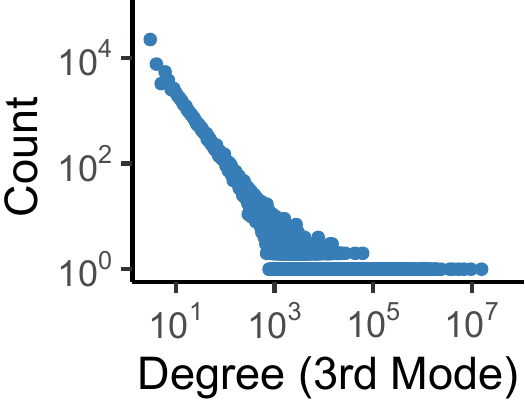}} & \includegraphics[width=0.14\linewidth]{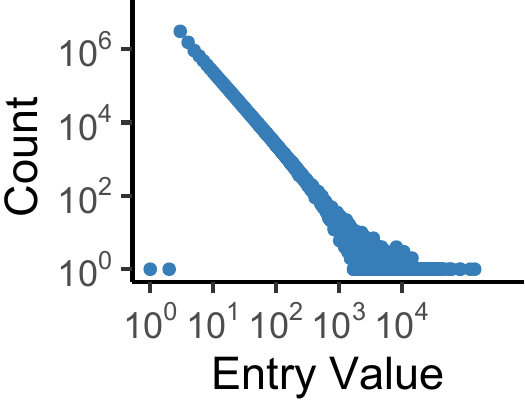} & \includegraphics[width=0.14\linewidth]{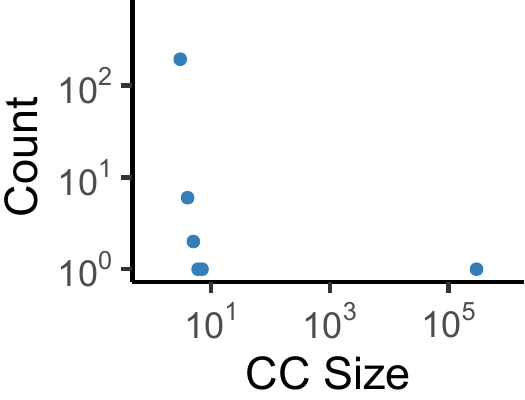} \\
         \midrule
         \rotatebox[origin=l]{90}{\hspace{3mm}\texttt{4-gram}} & \includegraphics[width=0.14\linewidth]{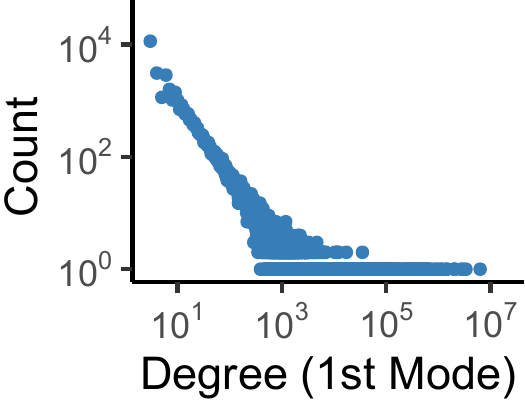} & \includegraphics[width=0.14\linewidth]{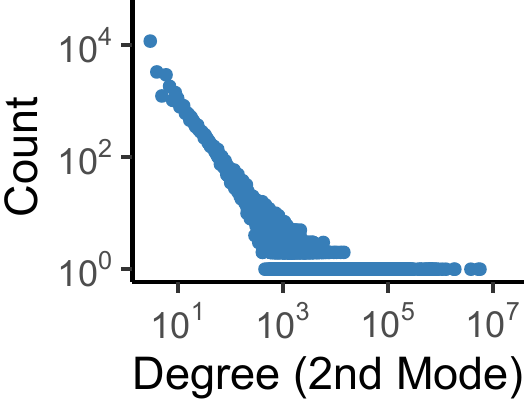} & \includegraphics[width=0.14\linewidth]{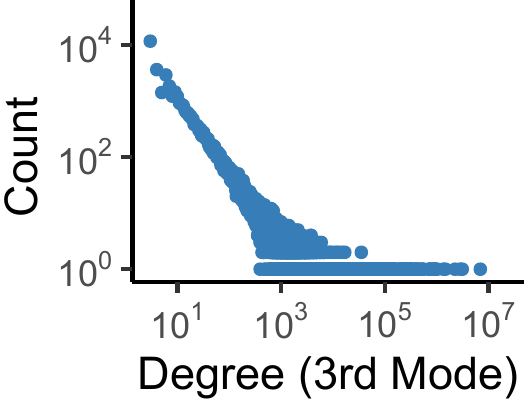} & \includegraphics[width=0.14\linewidth]{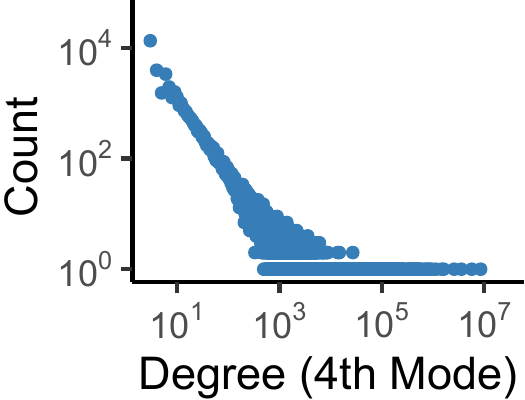} & \includegraphics[width=0.14\linewidth]{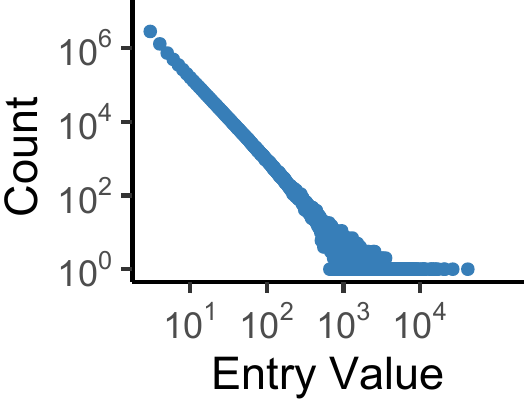} & \includegraphics[width=0.14\linewidth]{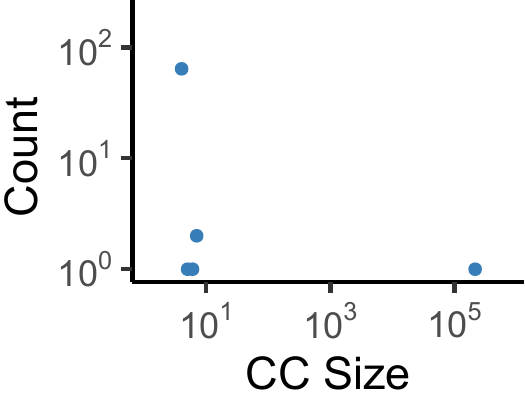} \\
         \bottomrule
    \end{tabular}
    \label{tab:dataset:properties}
\end{table*}

\section{Comparison of Lossy Compression Methods \\ (Related to Section~2)}
\kijung{In Table~\ref{tab:proscons:detail}, we provide a comparison of lossy compression methods for sparase matrices and tensors, which supplement Table~1 in the main paper.}

\balance

\bibliographystyle{IEEEtran}
\bibliography{bib}